\def\eqref#1{equation~\ref{#1}}
\def\1{\bm{1}}
\DeclareMathAlphabet{\mathsfit}{\encodingdefault}{\sfdefault}{m}{sl}
\SetMathAlphabet{\mathsfit}{bold}{\encodingdefault}{\sfdefault}{bx}{n}
\DeclareMathOperator*{\argmax}{arg\,max}
\newcommand{\cmark}{\ding{51}}%
\newcommand{\xmark}{\ding{55}}%
\newtheorem{theorem}{Theorem}
\newtheorem{lemma}[theorem]{Lemma}
\newtheorem{definition}[theorem]{Definition}
\newtheorem{remark}[theorem]{Remark}
\newcommand{\Lip}{\mathrm{Lip}}
\newcommand{\biLip}{\mathrm{biLip}}
\definecolor{mydarkblue}{rgb}{0,0.08,0.45}
\definecolor{lightgreen}{HTML}{33DD33}
\begin{document}

\runningtitle{Understanding and Mitigating Exploding Inverses in Invertible Neural Networks}

\runningauthor{Jens Behrmann$^*$, Paul Vicol$^*$, Kuan-Chieh Wang$^*$, Roger Grosse, J\"orn-Henrik Jacobsen}

\twocolumn[
\aistatstitle{Understanding and Mitigating Exploding Inverses \\ in Invertible Neural Networks}
\aistatsauthor{Jens Behrmann$^{\boldsymbol{*}1}$\ \  Paul Vicol$^{\boldsymbol{*}2, 3}$  Kuan-Chieh Wang$^{\boldsymbol{*}2 ,3}$ Roger Grosse$^{2, 3}$ J\"orn-Henrik Jacobsen$^{2,3}$}
\aistatsaddress{$^1$University of Bremen \And  $^2$University of Toronto \And $^3$Vector Institute \And $^{\boldsymbol{*}}$ Equal contribution}
]

\begin{abstract}
Invertible neural networks (INNs) have been used to design generative models, implement memory-saving gradient computation, and solve inverse problems. In this work, we show that commonly-used INN architectures suffer from exploding inverses and are thus prone to becoming numerically non-invertible. Across a wide range of INN use-cases, we reveal failures including the non-applicability of the change-of-variables formula on in- and out-of-distribution (OOD) data, incorrect gradients for memory-saving backprop, and the inability to sample from normalizing flow models. We further derive bi-Lipschitz properties of atomic building blocks of common architectures. These insights into the stability of INNs then provide ways forward to remedy these failures. For tasks where local invertibility is sufficient, like memory-saving backprop, we propose a flexible and efficient regularizer. For problems where global invertibility is necessary, such as applying normalizing flows on OOD data, we show the importance of designing stable INN building blocks.
\end{abstract}

\section{INTRODUCTION}
\label{sec:intro}

Invertible neural networks (INNs) have become a standard building block in the deep learning toolkit~\citep{papamakarios2019normalizing,donahue2019large}.
Invertibility is useful for training normalizing flow (NF) models with exact likelihoods \citep{dinh2014nice,dinh2016density}, increasing posterior flexibility in VAEs \citep{rezende2015variational}, learning transition operators in MCMC samplers \citep{song2017nice}, computing memory-efficient gradients \citep{gomez2017reversible}, allowing for bi-directional training \citep{grover2018flow}, solving inverse problems \citep{ardizzone2018analyzing} and analyzing adversarial robustness \citep{jacobsen2018excessive}. 

\begin{wrapfigure}{l}{0.18\textwidth}
\vspace{-6.4mm}
  \begin{center}
    \includegraphics[width=\linewidth]{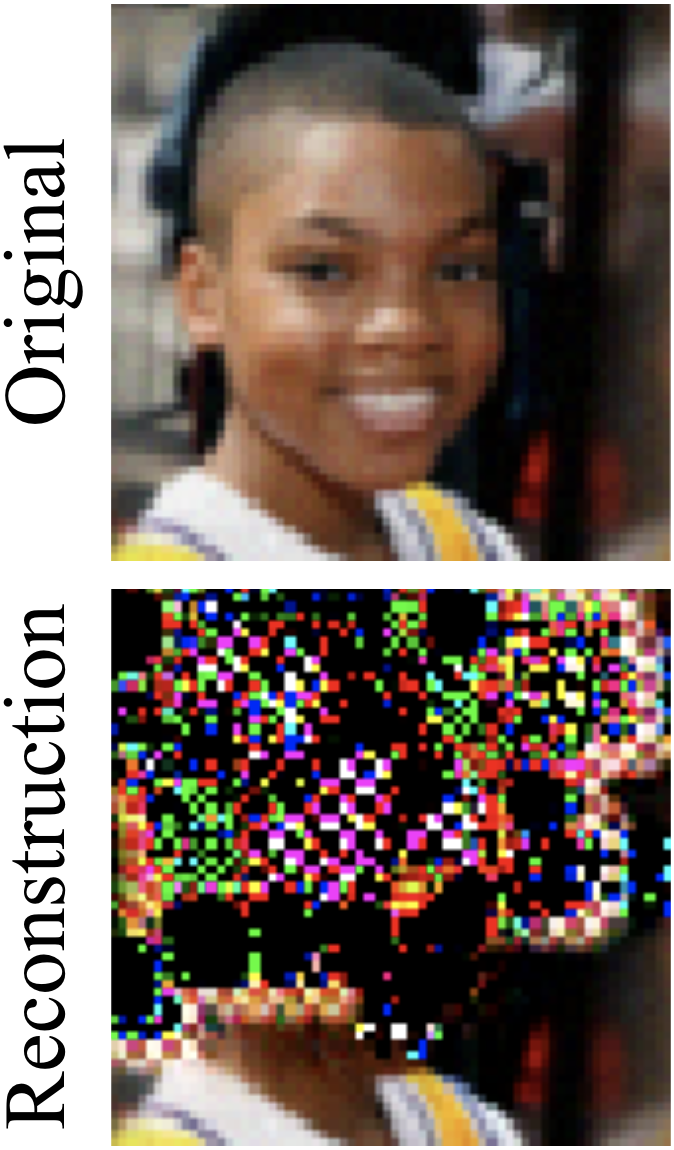}
  \end{center}
  \vspace{-2.4mm}
\end{wrapfigure}

All the aforementioned applications rely on the assumption that the theoretical invertibility of INNs carries through to their numerical instantiation. In this work, we challenge this assumption by probing their inverse stability in generative and discriminative modeling settings.
As a motivating example, on the left we show an image $x$ from within the dequantization distribution of a training example, and the reconstructed image $F^{-1}(F(x))$ from a competitive CelebA normalizing flow model $F$~\citep{kingma2018glow}.
In the same vein as exploding gradients in RNNs, here the inverse mapping explodes, leading to severe reconstruction errors up to \texttt{Inf/NaN} values.
The model exhibits similar failures both on out-of-distribution data and on samples from the model distribution (discussed in Section \ref{sec:nonInvFlows}).
Interestingly, none of these failures are immediately apparent during training.
Hence, NFs can silently become non-invertible, violating the assumption underlying their main advantages---exact likelihood computation and efficient sampling~\citep{papamakarios2019normalizing}.

\begin{wrapfigure}[11]{l}{0.18\textwidth}
\vspace{-0.8cm}
  \begin{center}
    \includegraphics[width=1.2\linewidth]{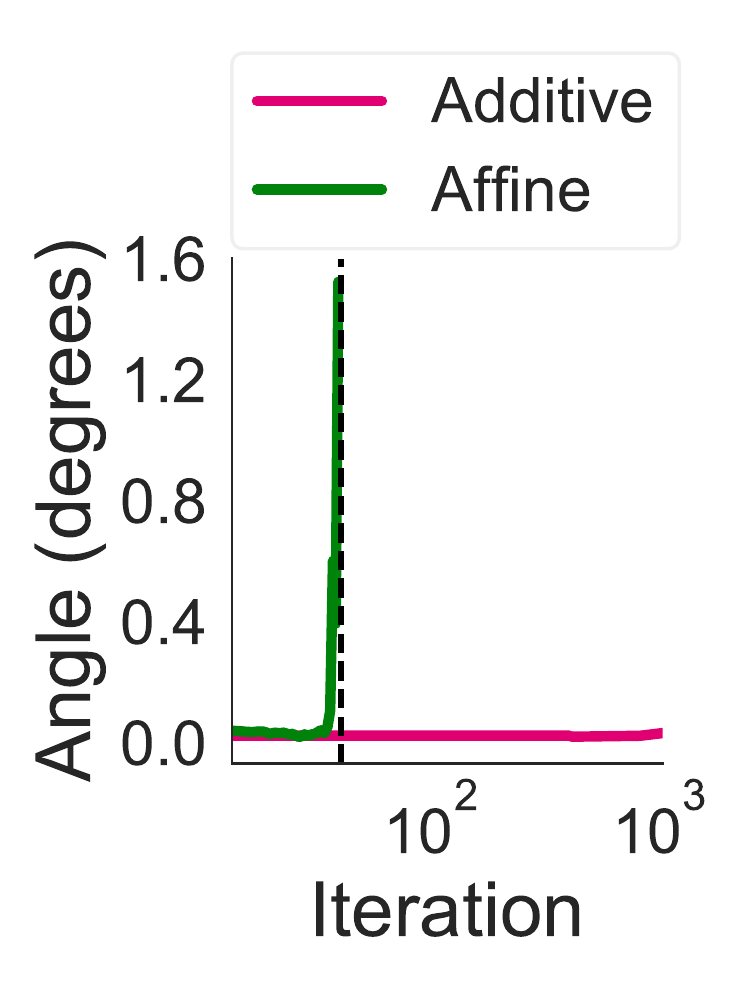}
  \end{center}
  \vspace{4mm}
\end{wrapfigure}
Memory-saving gradient computation~\citep{gomez2017reversible} is another popular application of INNs where exploding inverses can be detrimental.
On the left, we show the angle between (1) gradients obtained from standard backprop and (2) memory-efficient gradients obtained using the inverse mapping to recompute activations, during training of additive- and affine-coupling INN classifiers on CIFAR-10.
The affine model exhibits exploding inverses, leading to a rapidly increasing angle that becomes \texttt{NaN} after the dashed vertical line---making memory-efficient training infeasible---whereas the additive model is stable.
This highlights the importance of understanding the influence of different INN architectures on stability.
Different tasks may have different stability requirements:
NFs require the model to be invertible on training and test data and, for many applications, on out-of-distribution data as well. 
In contrast, memory-saving gradients only require the model to be invertible on the training data, to reliably compute gradients.

As we will show in our numerical experiments, the aforementioned failures are not a rare phenomenon, but are a concern across most application areas of INNs. To provide an understanding of these failures, we study the elementary components that influence the stability of INNs: 1) bi-Lipschitz properties of INN blocks (section \ref{sec:lipschitz-properties}); 2) the effect of the training objective (section \ref{sec:controlLocalStability}); and 3) task-specific requirements for the inverse computations (section \ref{sec:experiments} on global invertibility for change-of-variables vs. local invertibility for accurate memory-efficient gradients).
Finally, putting our theoretical analysis into practice, we empirically verify solutions to overcome exploding inverses. These solutions follow two main paradigms: 1) Enforcing global stability using Lipschitz-constrained INN architectures or 2) regularization to enforce local stability. In summary, we both uncover exploding inverses as an issue across most use-cases of INNs and provide ways to mitigate this instability.
Our code can be found at: \url{http://www.github.com/asteroidhouse/INN-exploding-inverses}.

\vspace{-0.1cm}
\section{INVERTIBLE NETWORKS}
\label{sec:background}
\vspace{-0.2cm}

Invertible neural networks (INNs) are bijective functions with a forward mapping $F: \mathbb{R}^d \rightarrow \mathbb{R}^d$ and an inverse mapping $F^{-1}: \mathbb{R}^d \rightarrow \mathbb{R}^d$. This inverse can be given in closed-form \citep{dinh2016density, kingma2018glow} or approximated numerically \citep{behrmann2019, song2019mintnet}. Central to our analysis of the stability of INNs is bi-Lipschitz continuity:

\begin{definition}[Lipschitz and bi-Lipschitz continuity]
A function $F: \mathbb{R}^{d} \rightarrow \mathbb{R}^{d}$ is called \emph{Lipschitz continuous} if there exists a constant $L \coloneqq \Lip(F)$ such that:
\begin{align}
\label{eq:defFowardLip}
    \|F(x_1) - F(x_2)\| \leq L \|x_1 - x_2\|, \quad \forall\; x_1, x_2 \in \mathbb{R}^{d}.
\end{align}
If an inverse $F^{-1}: \mathbb{R}^{d} \rightarrow \mathbb{R}^{d}$ and a constant $L^* \coloneqq \Lip(F^{-1})$ exists such that for all $y_1, y_2 \in \mathbb{R}^{d}$
\begin{align}
\label{eq:defInverseLip}
    \|F^{-1}(y_1) - F^{-1}(y_2)\| \leq L^* \|y_1 - y_2\|
    \end{align}
holds, then $F$ is called \emph{bi-Lipschitz continuous}. Furthermore, $F$ or $F^{-1}$ is called \emph{locally Lipschitz continuous in $[a,b]^d$}, if the above inequalities hold for $x_1, x_2$ or $y_1, y_2$ in the interval $[a,b]^d$. 
\end{definition}

As deep-learning computations are carried out with limited precision, numerical error is always introduced in both the forward and inverse passes. Instability in either pass will aggravate this imprecision, and can make an \textit{analytically} invertible network \textit{numerically non-invertible}.
If the singular values of the Jacobian of the inverse mapping can become arbitrarily large, we refer to this effect as an \textit{exploding inverse}.

To obtain a better understanding in the context of INNs, we first define coupling blocks and then study numerical error and instability in a toy setting. Let $I_1, I_2$ denote disjoint index sets of $\{1, \ldots, d\}$ to partition vectors $x \in \mathbb{R}^d$. For example, they denote a partition of feature channels in a convolutional architecture. Additive coupling blocks~\citep{dinh2014nice} are defined as:
\begin{align}
    \label{eq:additiveCoupling}
    F(x)_{I_1} &= x_{I_1} \\\nonumber
    F(x)_{I_2} &= x_{I_2} + t(x_{I_1})
\end{align}
and affine coupling blocks~\citep{dinh2016density} as:
\begin{align}
\label{eq:affineCoupling}
    F(x)_{I_1} &= x_{I_1}\\ \nonumber
    F(x)_{I_2} &= x_{I_2} \odot g(s(x_{I_1})) + t(x_{I_1}),
\end{align}
where $t,s:\mathbb{R}^{|I_1|} \rightarrow \mathbb{R}^{|I_2|}$ are Lipschitz continuous and $\odot$ denotes elementwise multiplication.
Hence, coupling blocks differ only in their scaling \footnote{Affine blocks scale the input with an elementwise function $g$ that has to be non-zero everywhere---common choices are sigmoid or $\exp(\cdot)$.}.

\begin{figure*}[t]
\centering
\begin{subfigure}[t]{0.23\linewidth}
	\centering
	\includegraphics[width=\linewidth]{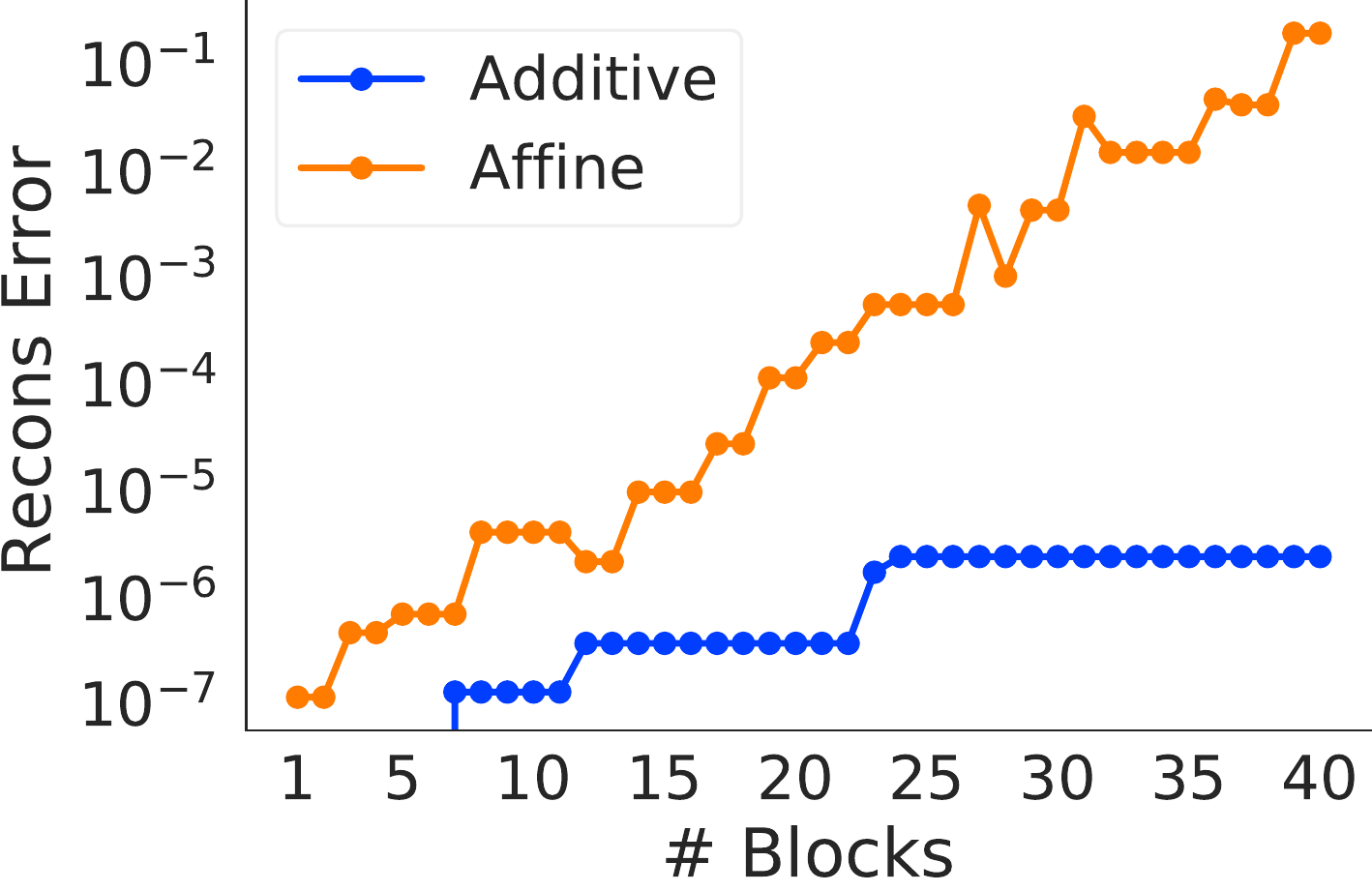}
\end{subfigure}
\begin{subfigure}[t]{0.23\linewidth}
	\centering
	\includegraphics[width=\linewidth]{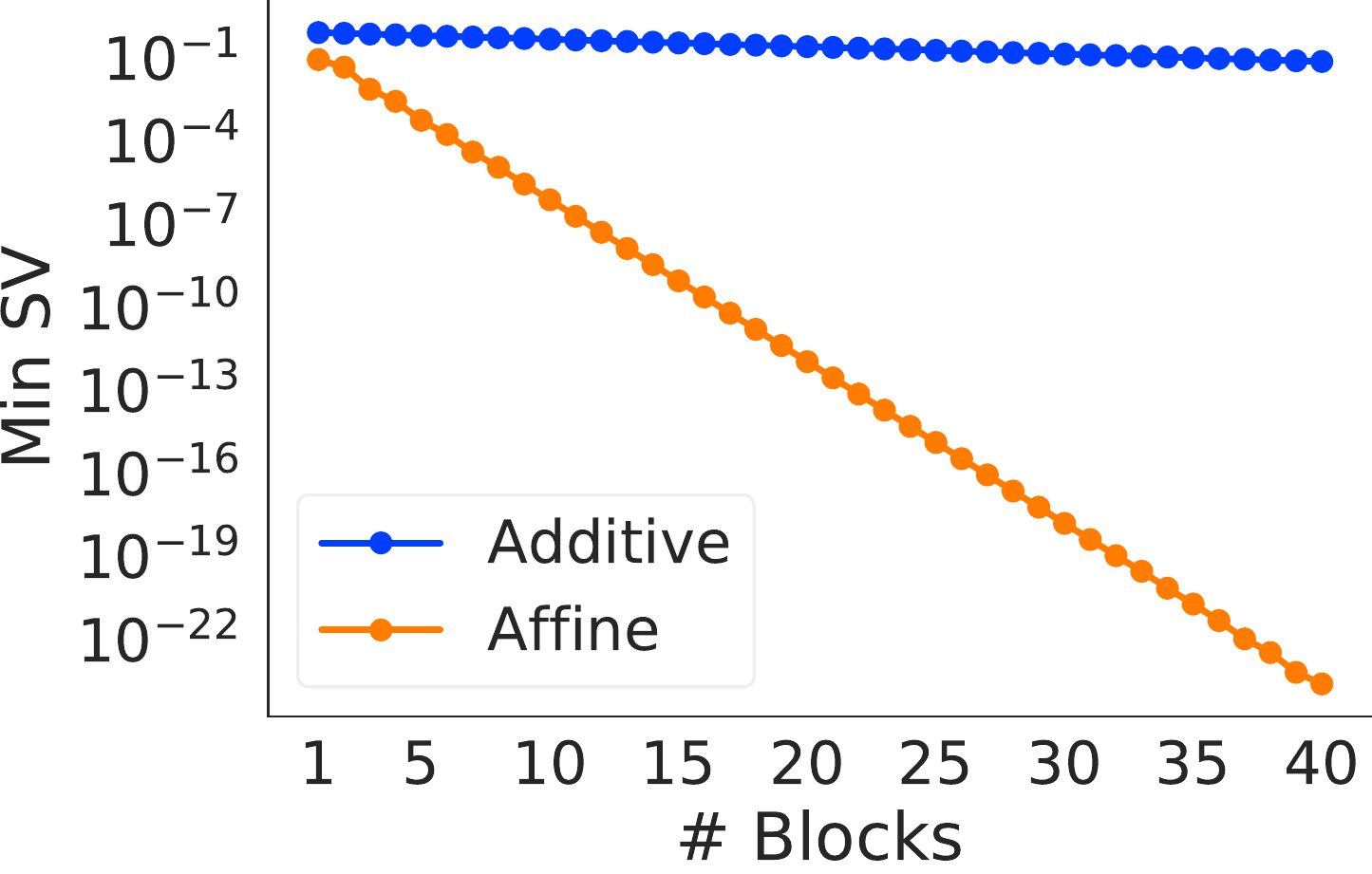}
\end{subfigure}
\begin{subfigure}[t]{0.23\linewidth}
	\centering
	\includegraphics[width=\linewidth]{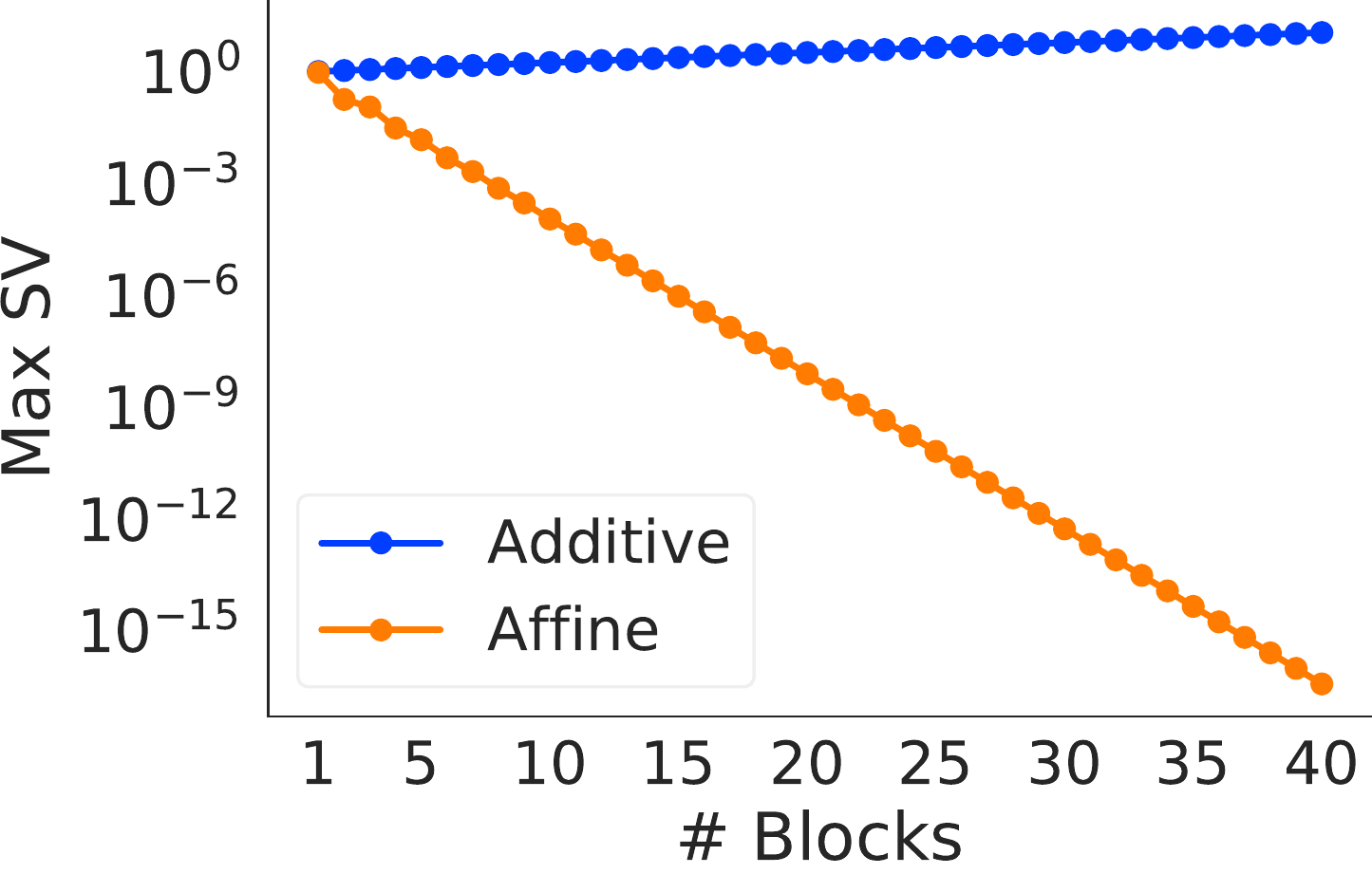}
\end{subfigure}
\begin{subfigure}[t]{0.23\linewidth}
	\centering
	\includegraphics[width=\linewidth]{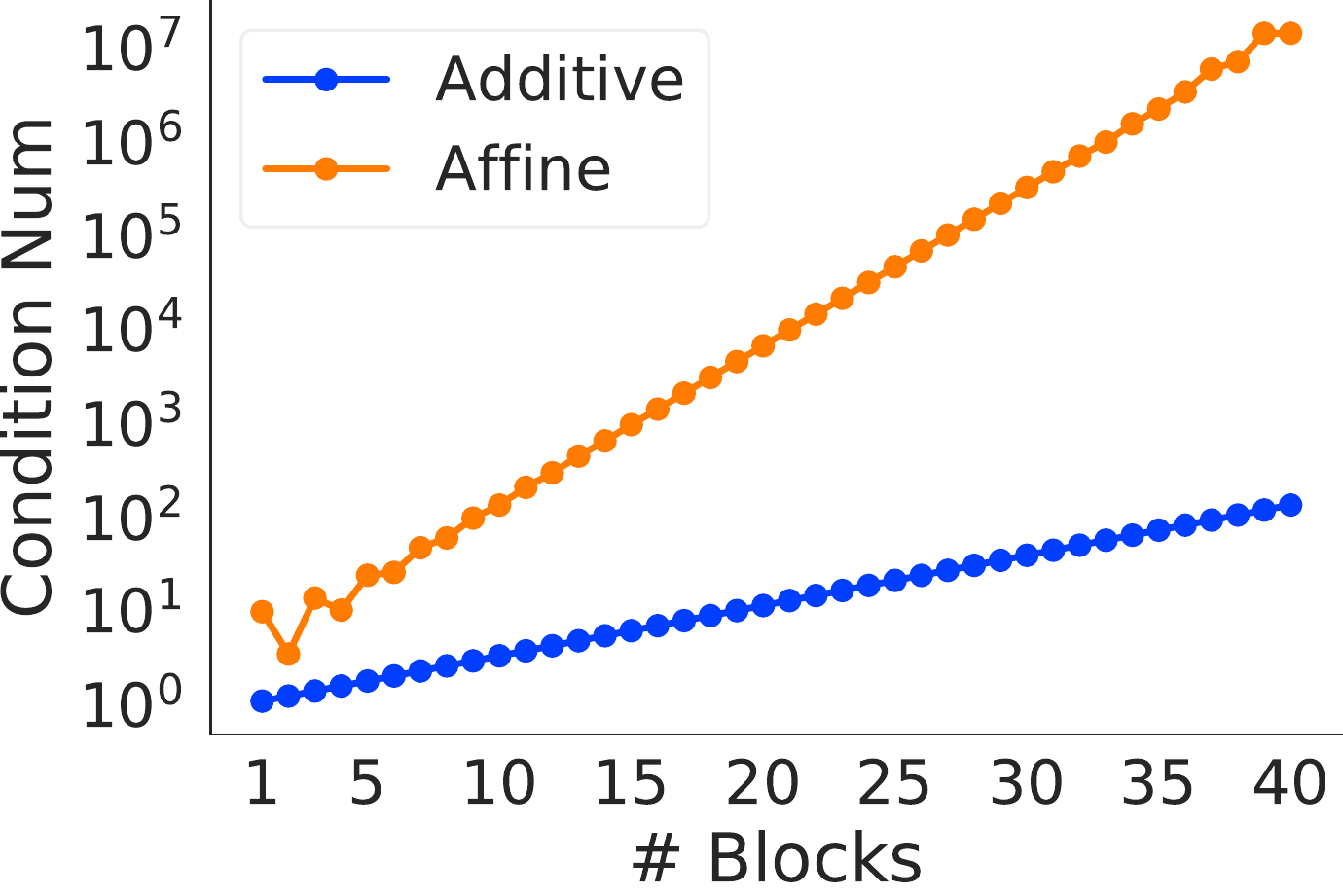}
\end{subfigure}
\caption{\textbf{Numerical error and instability in coupling blocks:} Consider a coupling block with $t(x_1) = 0.123456789$ and $g(s(x_1)) = 0.1$ (see Section \ref{sec:background}). The input is $(x_1, x_2) = (1, 1)$. From left to right, we show the: 1) reconstruction error; 2) minimum singular value $\sigma_{d}$; 3) maximum singular value $\sigma_{1}$; and 4) condition number, $\sigma_{1} / \sigma_{d}$, as a function of the number of INN blocks.}
\label{fig:toy-constructed}
\vspace{-0.2cm}
\end{figure*}

To understand how numerical error occurs in coupling blocks, consider the following example: consider $x \in \mathbb{R}^2$ and the trivial partition $I_1=1$ and $I_2=2$. Further, let $t(x_1) = 0.123456789$ and $g(s(x_1)) = 0.1$.
Floating-point operations introduce rounding errors when numbers from different scales are summed up. 
Figure \ref{fig:toy-constructed} visualizes the reconstruction error introduced at various depth for the given $t(x_1), g(s(x_1))$. In particular, the additive coupling INN has small numerical errors when reconstructing due to rounding errors in the summation. The affine blocks, however, show an exploding inverse effect since the singular values of the forward mapping tend to zero as depth increases. Thus, while being analytically invertible, the numerical errors and instability renders the network numerically non-invertible even in this simple example.

To formalize the connection of numerical errors and Lipschitz constants, consider $F(x) = z$ as the analytical exact forward computation and $F_\delta(x) = z + \delta =: z_\delta$ as the floating-point inexact computation with error $\delta$. In order to bound the error in the reconstruction due to the imprecision in the forward mapping, let $x_{\delta_1} = F^{-1}(z_\delta)$. Now consider:
\begin{align*}
         \|x - x_{\delta_1}\|_2 \leq \Lip(F^{-1}) \|z - z_\delta\|_2 = \Lip(F^{-1}) \|\delta\|_2,
 \end{align*}
where $\Lip(F^{-1})$ is used to bound the influence of the numerical error introduced in the forward mapping. Additionally, similarly to the forward mapping, the inverse mapping adds numerical imprecision. Thus, we introduce
$F_\delta^{-1}(z_\delta) = x_{\delta_1} + \delta_2 \coloneqq x_{\delta_2}$. Hence, we obtain the bound:
 \begin{align*}
         \|x - (x_{\delta_1} + \delta_2) \|_2 &\leq \|x - x_{\delta_1}\|_2 + \|\delta_2\|_2 \\
         &\leq \Lip(F^{-1}) \|z - z_\delta\|_2 + \|\delta_2\|_2 \\
         &= \Lip(F^{-1}) \|\delta\|_2 + \|\delta_2\|_2.
 \end{align*}
While obtaining quantitative values for $\delta$ and $\delta_2$ for a model as complex as deep INNs is hard, the above formalization still provides insights into the role of the inverse stability when reconstructing inputs. 
In sum, INNs are designed to be analytically invertible, but numerical errors are always present. How much these errors are magnified is bounded by their Lipschitz continuity (stability).

\section{STABILITY OF INVERTIBLE NEURAL NETWORKS}
\label{sec:stability}

We first discuss bi-Lipschitz properties of common INN building blocks, and how certain architectures suffer from exploding inverses in Section~\ref{sec:lipschitz-properties}.
Then we explore how to stabilize INNs globally (Section~\ref{sec:controlStability}) and locally (Section~\ref{sec:controlLocalStability}).

\vspace{-0.1cm}
\subsection{Lipschitz Properties of INN Blocks}
\label{sec:lipschitz-properties}
\vspace{-0.1cm}

Research on INNs has produced a large variety of architectural building blocks.
Here, we build on the work of \citet{behrmann2019}, that proved bi-Lipschitz bounds for invertible ResNets.
In particular, we derive Lipschitz bounds of coupling-based INNs and provide an overview of the stability of other common building blocks.
Most importantly for our subsequent discussion are the qualitative insights we can draw from this analysis.
Hence, we summarize these results in Theorem~\ref{thm:qualDiffCoupling} and provide the quantitative Lipschitz bounds as lemmas in Appendix~\ref{app:bounds}.

We start with our main result on coupling blocks, which are the most commonly used INN architectures. While they only differ in the scaling, this difference strongly influences stability:
\begin{theorem}[Stability of additive and affine blocks]
\label{thm:qualDiffCoupling}
Consider additive and affine blocks as in Eq.~\ref{eq:additiveCoupling} and Eq.~\ref{eq:affineCoupling} and assume the same function $t$. Further, let $t,s,g$ Lipschitz continuous, continuously differentiable and non-constant functions. Then the following differences w.r.t. bi-Lipschitz continuity hold:
\begin{enumerate}[label=(\roman*)]
    \item Affine blocks have strictly larger (local) bi-Lipschitz bounds than additive blocks.
    \item There is a global bound on $\Lip(F)$ and $\Lip(F^{-1})$ for additive blocks, but only local bounds for affine blocks, i.e. there exist no $\Lip(F) > 0$ and $\Lip(F^{-1})>0$ such that Eqs. \ref{eq:defFowardLip}, \ref{eq:defInverseLip} hold over $\mathbb{R}^d$.
\end{enumerate}
\end{theorem}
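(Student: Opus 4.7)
The plan is to read off the (bi-)Lipschitz constants of $F$ and $F^{-1}$ as essential suprema of the operator norms of their Jacobians (justified by $C^1$ regularity of $t,s,g$). With $J_t := \partial t/\partial x_{I_1}$ and $J_{gs} := \partial (g\circ s)/\partial x_{I_1}$, the two Jacobians are
\begin{align*}
J_F^{\mathrm{add}}(x) &= \begin{pmatrix} I & 0 \\ J_t(x_{I_1}) & I \end{pmatrix}, \\
J_F^{\mathrm{aff}}(x) &= \begin{pmatrix} I & 0 \\ \diag(x_{I_2})\,J_{gs}(x_{I_1}) + J_t(x_{I_1}) & \diag(g(s(x_{I_1}))) \end{pmatrix},
\end{align*}
with analogous block-triangular expressions (now carrying a $1/g(s)$ factor in the affine case) for $J_{F^{-1}}$. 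Everything hinges on this: the additive Jacobian depends on $x_{I_1}$ alone through $J_t$, while the affine Jacobian picks up an extra scaling $g(s)$ and an extra cross-block term $\diag(x_{I_2})\,J_{gs}$ that is linear in $\|x_{I_2}\|$.

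Part (ii) follows by contrasting these two observations. On the additive side, block-triangular operator-norm estimates give $\|J_F^{\mathrm{add}}(x)\|_{\mathrm{op}} \le C(\Lip(t))$ uniformly in $x$, and the inverse is itself an additive block with $-t$ replacing $t$, so the same global bound transfers to $F^{-1}$. On the affine side, non-constancy and $C^1$ regularity of $g\circ s$ supply some $\hat x_{I_1}$ with $J_{gs}(\hat x_{I_1}) \neq 0$; then driving $\|x_{I_2}\| \to \infty$ along a direction on which $J_{gs}(\hat x_{I_1})$ acts non-trivially forces $\|J_F^{\mathrm{aff}}\|_{\mathrm{op}} \to \infty$, and the mean value inequality converts this into failure of any global $\Lip(F)$. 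The symmetric argument applied to the closed form $F^{-1}(y)_{I_2} = (y_{I_2} - t(y_{I_1}))/g(s(y_{I_1}))$, whose cross-block Jacobian contains a term proportional to $(y_{I_2} - t(y_{I_1}))\,g'(s)\,J_s / g(s)^2$, rules out a finite global $\Lip(F^{-1})$ in the same way.

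For part (i), I would fix an arbitrary box $[a,b]^d$ with $a<b$ and quote the quantitative local lemmas from the appendix: the additive local bound depends only on $\Lip(t|_{[a,b]^{|I_1|}})$, while the affine local bound has the same shape inflated by strictly positive terms involving $\sup|g(s)|$, $\sup 1/|g(s)|$, and $\sup_{x \in [a,b]^d}\|\diag(x_{I_2})\,J_{gs}(x_{I_1})\|_{\mathrm{op}}$. The main obstacle is promoting ``larger'' to ``strictly larger'' at the level of the tight Lipschitz constants, not just their upper bounds; I would handle this by exhibiting a point $x^\star \in [a,b]^d$ at which the affine operator norm strictly exceeds the additive one -- choosing $x^\star_{I_1}$ so that $J_{gs}(x^\star_{I_1}) \neq 0$ (afforded by non-constancy of $g \circ s$ on the box) and aligning $x^\star_{I_2}$ with a non-zero singular direction of $J_{gs}(x^\star_{I_1})$ -- and then running the analogous $1/g(s)$-based argument to strictly separate the inverse constants as well.
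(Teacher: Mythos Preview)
Your treatment of part~(ii) is essentially the paper's argument. Both of you read off the block-triangular Jacobians, observe that $J_F^{\mathrm{add}}$ depends on $x_{I_1}$ only through $J_t$ and is therefore globally bounded by a function of $\Lip(t)$ (with the inverse again an additive block in $-t$), and then isolate the term $\diag(x_{I_2})\,J_{gs}(x_{I_1})$ in the affine Jacobian as the obstruction: non-constancy of $g\circ s$ supplies a point with $J_{gs}\neq 0$, and sending $\|x_{I_2}\|\to\infty$ blows up the operator norm. The paper routes through Frobenius norm and equivalence of norms where you argue directly in operator norm, but the content is the same.

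For part~(i) you diverge from the paper, and the step you flag as ``the main obstacle'' does not close. The paper reads ``bi-Lipschitz \emph{bounds}'' literally: it simply compares the explicit upper-bound formulas of Lemmas~\ref{lem:lipBoundAdditive} and~\ref{lem:lipBoundAffine} and observes that the affine formula equals the additive one plus additional strictly positive summands. You instead attempt the stronger assertion that the \emph{actual} local Lipschitz constants are strictly ordered, and propose to certify it via a single point $x^\star$ with $\|J_F^{\mathrm{aff}}(x^\star)\|_{\mathrm{op}} > \|J_F^{\mathrm{add}}(x^\star)\|_{\mathrm{op}}$. That only yields
\[
\Lip(F^{\mathrm{aff}}) \;\ge\; \|J_F^{\mathrm{aff}}(x^\star)\|_{\mathrm{op}} \;>\; \|J_F^{\mathrm{add}}(x^\star)\|_{\mathrm{op}},
\]
and nothing ties the right-hand side to $\sup_x \|J_F^{\mathrm{add}}(x)\|_{\mathrm{op}}$, since your $x^\star_{I_1}$ is chosen for $J_{gs}\neq 0$, not to maximise the additive norm. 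Worse, the stronger claim for $\Lip(F)$ alone can actually fail on a fixed box: take $|I_1|=|I_2|=1$, $t'\equiv 10$, and $g\circ s$ a small non-constant perturbation of the constant $1/2$; then the additive Jacobian has spectral norm $\approx 10.1$ everywhere, while the affine Jacobian has lower-right entry $\approx 1/2$ and lower-left entry $\approx 10$, giving a strictly smaller spectral norm throughout the box. (The inequality does reverse for the inverse map in this example, so a statement about $\max\{\Lip(F),\Lip(F^{-1})\}$ may survive, but that needs a different argument.) The clean fix is to interpret ``bounds'' as the paper does and just compare the lemma formulas.
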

The proof is given in Appendix \ref{app:proofQualStatement}, together with upper bounds in Lemmas~\ref{lem:lipBoundAdditive} and \ref{lem:lipBoundAffine}.
Note that an upper bound on $\Lip(F)$ provides a lower bound on $\Lip(F^{-1})$ and vice versa. 
These differences offer two main insights. First, affine blocks can have arbitrarily large singular values in the inverse Jacobian, i.e.~an exploding inverse. Thus, they are more likely to become numerically non-invertible than additive blocks.
Second, controlling stability in each architecture requires fundamentally different approaches since additive blocks have global bounds, while affine blocks are not globally bi-Lipschitz.

In addition to the Lipschitz bounds of coupling layers, we provide an overview of known Lipschitz bounds of other common INN building blocks in Table~\ref{tab:overviewTable} (Appendix \ref{app:table-lipschitz}).
Besides coupling-based approaches we cover free-form approaches like Neural ODEs \citep{chen2018neural} and i-ResNets \citep{behrmann2019}.
Note that the bounds provide the worst-case stability and are primarily meant to serve as a guideline for the design of invertible blocks.

\vspace{-0.1cm}
\subsection{Controlling Global Stability of INNs}
\label{sec:controlStability}
\vspace{-0.1cm}

As we showed in the previous section, each INN building block has its own stability properties (see Table \ref{tab:overviewTable} for an overview).
The bi-Lipschitz bounds of additive coupling blocks can be controlled using a similar strategy to i-ResNets.
Via spectral normalization~\citep{miyato2018spectral}, it is possible to control the Lipschitz-constant of $t$ in Eq. \ref{eq:additiveCoupling}, which guarantees stability via the bounds from Lemma \ref{lem:lipBoundAdditive}. On the other hand, spectral normalization does not provide guarantees for affine blocks, as they are not globally bi-Lipschitz over $\mathbb{R}^d$ due to dependence on the range of the inputs $x$ (see Theorem \ref{thm:qualDiffCoupling}).

\paragraph{Modified Affine Scaling.} A natural way to increase stability of affine blocks is to consider different elementwise scaling functions $g$, see \citet{ardizzone2019guided} for an example of such a modification. In particular, avoiding scaling by small values strongly influences the inverse Lipschitz bound (see Lemma \ref{lem:lipBoundAffine}, Appendix~\ref{app:bounds}). 
Thus, a modification guided by Lemma \ref{lem:lipBoundAffine} would be to adapt the sigmoid scaling to output values in a restricted range such as $(0.5, 1)$ rather than the standard range $(0,1)$.
As we show in our experiments (Sections~\ref{sec:oodInstabilityExp} \& \ref{sec:flowInDistr}), this indeed improves stability. However, it may still suffer from exploding inverses as there is no global Lipschitz bound (see Theorem \ref{thm:qualDiffCoupling}).

\vspace{-0.1cm}
\subsection{Controlling Local Stability of INNs}
\label{sec:controlLocalStability}
\vspace{-0.1cm}
While the previous section aimed at controlling global stability, in this section we discuss how penalty functions and the training objective itself stabilize INNs locally, i.e. around inputs $x \in \mathbb{R}^d$.

\vspace{-0.1cm}
\subsubsection{Bi-Directional Finite Differences Regularization}
\label{sec:FDregularizer}
\vspace{-0.1cm}
Penalty terms on the Jacobian can be used to enforce local stability~\citep{sokolic17, hoffman2019robust}.
Their connection to Lipschitz bounds can be understood using the identity from \citet[Thm. 3.1.6]{federer1969geometric}: if $F:\mathbb{R}^d \rightarrow \mathbb{R}^d$ is Lipschitz continuous and differentiable, then we have:
\begin{align}
\label{eq:jacobianLipschitz}
    \Lip(F) &= \sup_{x \in \mathbb{R}^d} \|J_F(x)\|_2 \nonumber\\
    &= \sup_{x \in \mathbb{R}^d} \sup_{\|v\|_2=1} \| J_F(x) v\|_2,
\end{align}
where $J_F(x)$ is the Jacobian matrix of $F$ at $x$ and $\|J_F(x)\|_2$ denotes its spectral norm.
To approximate the RHS of Eq.~\ref{eq:jacobianLipschitz} from below, we obtain $v \in \mathbb{R}^d$ as $v / \|v\|_2$ with $v\sim \mathcal{N}(0,I)$, which uniformly samples from the unit sphere \citep{Muller59}.
For given $x$ and $v$, the term $\|J_F(x) v\|_2$ can be added to the loss function as a regularizer.
However, training with such a penalty term requires double-backpropagtion \citep{Drucker92, etmann19}, which makes recomputing the pre-activations during backprop via the inverse \citep{gomez2017reversible} (memory-saving gradients) difficult. 
Thus, in addition to randomization, we introduce a second approximation using finite differences as:
\begin{align}
\label{eq:finiteDiff}
    &\sup_{x \in \mathbb{R}^d} \sup_{\|v\|_2=1} \|J_F(x) v\|_2 \approx \nonumber\\ &\sup_{x \in \mathbb{R}^d} \sup_{\|v\|_2=1} \frac{1}{\varepsilon} \| F(x) - F(x+ \varepsilon v)\|_2,
\end{align}
with a step-size $\varepsilon > 0$.
The approximation error can be estimated via Taylor expansion and has to be traded off with \textit{catastrophic cancellation} due to subtracting near-identical float values \citep{An2011finiteDiff}. Since we aim at having both stable forward and inverse mappings, we employ this penalty on both directions $F$ and $F^{-1}$, and call it \textit{bi-directional finite differences regularization} (abbreviated FD).
Details about this architecture agnostic regularizer and its computational overhead are provided in Appendix \ref{app:classificationExpDetails}.

\begin{figure*}[t]
\centering
\begin{subfigure}[t]{0.3\linewidth}
	\centering
	\hspace{-0.5cm} Glow \\
    \vspace{-0.1cm}
    \begin{annotate}{\includegraphics[width=\linewidth]{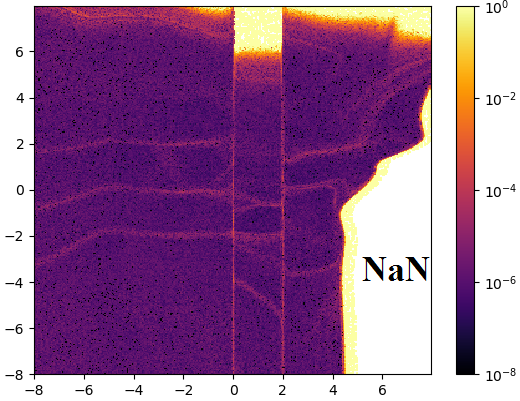}}{1}
    \draw[thick,dashed,green] (-1.18,-0.82) rectangle (0.66,0.98);
    \end{annotate}
\end{subfigure}
\begin{subfigure}[t]{0.3\linewidth}
	\centering
	Glow w/ Modified Scaling \\
    \vspace{-0.15cm}
    \begin{annotate}{\includegraphics[width=\linewidth]{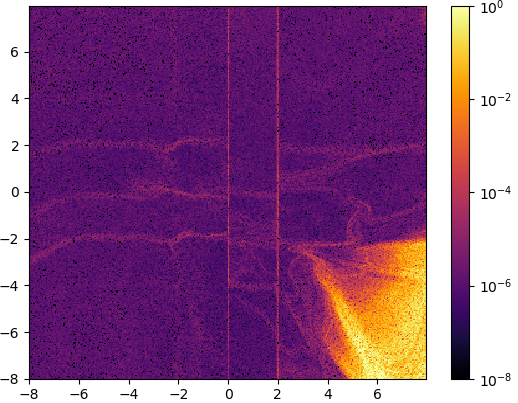}}{1}
    \draw[thick,dashed,green] (-1.18,-0.82) rectangle (0.66,0.98);
    \end{annotate}
\end{subfigure}
\begin{subfigure}[t]{0.3\linewidth}
	\centering
	\hspace{-0.4cm} Residual Flow \\
    \vspace{-0.15cm}
	\begin{annotate}{\includegraphics[width=\linewidth]{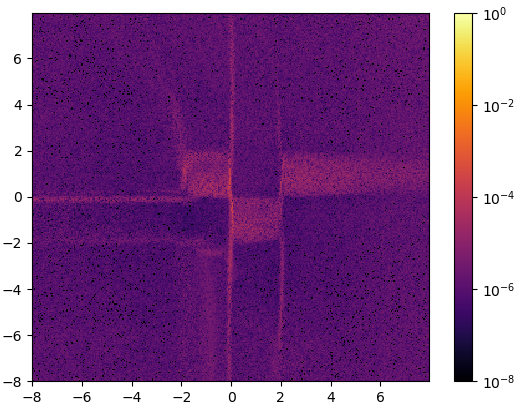}}{1}
    \draw[thick,dashed,green] (-1.18,-0.82) rectangle (0.66,0.98);
    \end{annotate}
\end{subfigure}
\vspace{-0.6cm}
\caption{\textbf{Reconstruction error on 2D checkerboard data.} \textbf{Left:} an affine model with standard sigmoid scaling in $(0, 1)$; \textbf{Middle:} a more stable affine model with scaling in (0.5, 1); \textbf{Right:} a Residual Flow model~\citep{chen2019residual}.
The {\color{lightgreen} green boxes} highlight the training data distribution $[-4, 4]$; we see that both affine models become unstable outside this distribution, while the Residual Flow remains stable.
}
\label{fig:checkerboardReconError}
\vspace{-0.5cm}
\end{figure*}

\begin{figure}[h]
\centering
	\begin{annotate}{\includegraphics[width=0.4956\linewidth, height=0.167 \textheight]{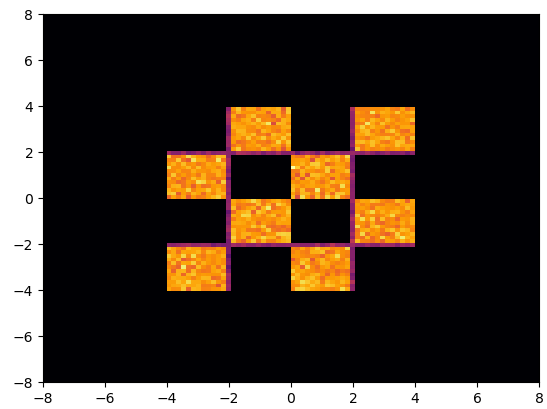}}{1}
    \draw[thick,dashed,green] (-0.80,-0.86) rectangle (1.04,0.94);
    \end{annotate}
    \vspace{-0.2cm}
\caption{\textbf{Samples from 2D checkerboard.}
}
\label{fig:samplesCheckerboard_main}
\vspace{-0.4cm}
\end{figure}

\subsubsection{Influence of the Normalizing Flow Loss on Stability}
\label{sec:CoVstability}

In addition to the INN architecture and local regularization such as bi-directional FD introduced in Section~\ref{sec:FDregularizer}, the training objective itself can impact local stability.
Here, we examine the stabilization effect of the commonly used normalizing flow (NF) objective~\citep{papamakarios2019normalizing}.
Consider a parametrized diffeomorphism $F_\theta: \mathbb{R}^d \rightarrow \mathbb{R}^d$ and a base distribution $p_Z$.
By a change-of-variables, we have for all $x \in \mathbb{R}^d$
\begin{align}
\label{eq:CoV}
    \log p_\theta(x) = \log p_Z(F_{\theta}(x)) + \log \left| \det J_{F_\theta}(x) \right|,
\end{align}
where $J_{F_\theta}(x)$ denotes the Jacobian of $F_\theta$ at $x$.
The log-determinant in Eq.~\ref{eq:CoV} can be expressed as:
\begin{align}
\label{eq:logdetSum}
    \log \left| \det J_{F_\theta}(x) \right| = \sum_{i=1}^d \log \sigma_i(x),
\end{align}
where $\sigma_i(x)$ denotes the $i$-th singular value of  $J_{F_\theta}(x)$. Thus, minimizing the negative log-likelihood as $\min_\theta - \log p_\theta(x)$ involves maximizing the sum of the log singular values (Eq.~\ref{eq:logdetSum}).
Due to the slope of the logarithmic function $\log(x)$, very small singular values are avoided more strongly than large singular values are favored.
Thus, the inverse of $F_\theta$ is encouraged to be more stable than the forward mapping.
Furthermore when using $Z \sim \mathcal{N}(0,I)$ as the base distribution, we minimize:
\begin{align*}
   - \log p_Z(F_\theta(x)) \propto \|F_\theta(x)\|_2^2,
\end{align*}
which bounds the $\ell_2$-norm of the outputs of $F_\theta$. 
Due to this effect, large singular values are avoided and the mapping $F_\theta$ is further locally stabilized.

Thus, the two terms of the normalizing flow objective have complementary effects on stability: the log-determinant increases all singular values, but has a stronger effect on small singular values than on large ones, improving inverse stability, while the base term encourages the output of the function to have small magnitude, improving forward stability. If additional stability is required, bounding the $\ell_2$-norm of intermediate activations would further avoid fluctuations, where a subflow exhibits high magnitude that is cancelled out by the subsequent subflow. The effect of the NF objective, however, acts only on the training data $x \in \mathbb{R}^d$ and is thus not able to globally stabilize INNs, as we show in our experiments (Section~\ref{sec:nonInvFlows}).

\vspace{-0.2cm}
\section{EXPERIMENTS}
\vspace{-0.2cm}

\label{sec:experiments}
First, we show that exploding inverses are a concern across most application areas of INNs. Second, we aim to provide ways to mitigate instability. For this we conduct experiments on two tasks: generative modeling with normalizing flows (Section \ref{sec:nonInvFlows}) and memory-efficient gradient computation for supervised learning (Section \ref{sec:classification}). Due to the growing body of INN architectures we had to restrict our experimental study to a subset of INNs: additive/ affine coupling blocks \citep{dinh2014nice, dinh2016density} and Residual Flows \citep{chen2019residual}. This particular choice was guided by the simplicity of coupling blocks and by the close link of Residual Flows to stability.

\vspace{-0.2cm}
\subsection{Non-Invertibility in Normalizing Flows}
\label{sec:nonInvFlows}
\vspace{-0.2cm}
Here we show that INNs can become numerically non-invertible even when trained with the normalizing flow (NF) loss (despite encouraging local stability, see Section \ref{sec:CoVstability}). We study this behavior on out-of-distribution data and identify the exploding inverse effect in the data and model distribution.

\vspace{-0.1cm}
\subsubsection{Instability on Out-of-Distribution Data}
\label{sec:oodInstabilityExp}
\vspace{-0.1cm}

\begin{figure*}[t]
\setlength{\tabcolsep}{3pt}
\centering
\begin{tabular}{cc}
    \small
    \begin{tabular}{cccc}
    & \textbf{Gaussian} &
    \textbf{Texture} & \textbf{tinyImageNet}  \\
    \rotatebox{90}{\ \ \ \ \ \ Original} &
    \includegraphics[trim={1.17cm 1.2cm 0 0},clip,width=0.13\textwidth]{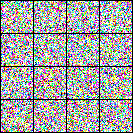} &
    \includegraphics[trim={1.17cm 1.2cm 0 0},clip,width=0.13\textwidth]{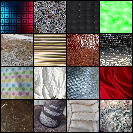} &
    \includegraphics[trim={1.17cm 1.2cm 0 0},clip,width=0.13\textwidth]{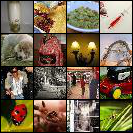}
    \\
    \rotatebox{90}{\ Reconstructed} & 
    
    \includegraphics[trim={3.8cm 3.8cm 0 0},clip,width=0.13\textwidth]{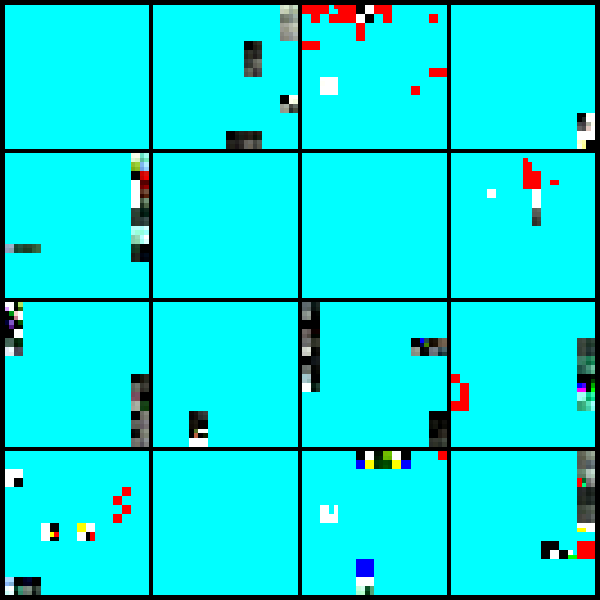} 
    &
    \includegraphics[trim={3.8cm 3.8cm 0 0},clip,width=0.13\textwidth]{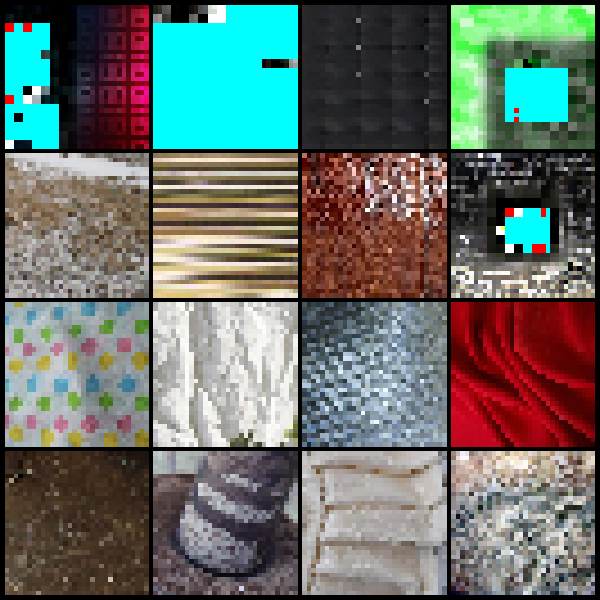} &
    \includegraphics[trim={3.8cm 3.8cm 0 0},clip,width=0.13\textwidth]{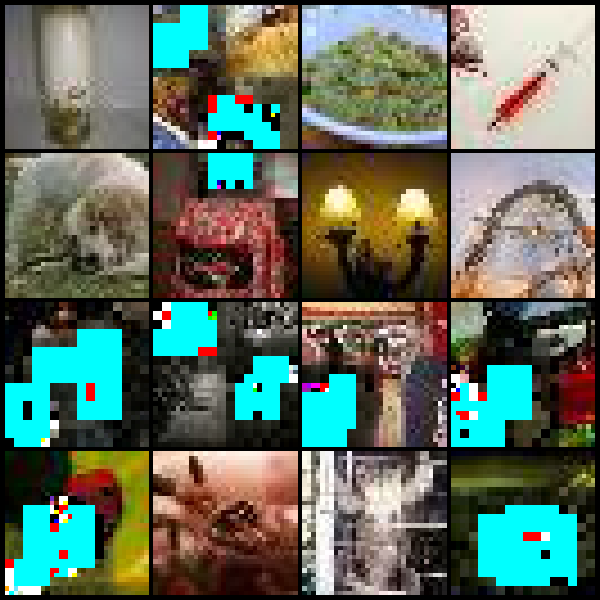}
    \\
    \end{tabular}
    &
    \hspace{1cm}
    \begin{tabular}{ccccc}
    \scriptsize
     &  \multicolumn{2}{c}{\textbf{Glow}} & \multicolumn{2}{c}{\textbf{ResFlow}} \\
    \midrule
    \textbf{Dataset}    & \textbf{\% Inf} & \textbf{Err} & \textbf{\% Inf} & \textbf{Err}  \\
    \midrule
    CIFAR-10                           & 0     & 6.3e-5   & 0 & 2.9e-2  \\
    \midrule
    Uniform                            & 100   & -        & 0 & 1.7e-2  \\
    Gaussian                           & 100   & -        & 0 & 7.2e-3  \\
    Rademacher                         & 100   & -        & 0 & 1.9e-3  \\
    SVHN                               & 0     & 5.5e-5   & 0 & 7.3e-2  \\
    Texture                            & 37.0  & 7.8e-2   & 0 & 2.0e-2  \\
    Places                             & 24.9  & 9.9e-2   & 0 & 2.9e-2  \\
    tinyImageNet                       & 38.9  & 1.6e-1   & 0 & 3.5e-2  \\
    \bottomrule
    \end{tabular}

\end{tabular}
\vspace{-0.1cm}
\caption{\textbf{Left:} Reconstructions of OOD data, using a CIFAR-10 pre-trained Glow model. Broken regions (\texttt{NaN} or \texttt{Inf}) in the reconstructions are {\color{cyan} plotted in cyan}. \textbf{Right:} Mean $\ell_2$ reconstruction errors on in-distribution (CIFAR-10) and out-of-distribution data, for a pre-trained Glow and Residual Flow.
We used three synthetic noise datasets \{Uniform, Gaussian, Rademacher\} as well as SVHN~\citep{netzer2011reading}, Texture~\citep{cimpoi14describing}, Places~\citep{zhou2017places}, and tinyImageNet.
We used 10,000 samples from each OOD dataset, and we report 1) the percentage of images that yielded \texttt{Inf} reconstruction error; and 2) the mean reconstruction error for the non-\texttt{Inf} samples, see \texttt{Err} column.
}
\label{fig:ood-reconstructions}
\vspace{-0.4cm}
\end{figure*}

\vspace{-0.15cm}
Because NFs allow for efficient likelihood computation, they have been used in several likelihood-based approaches for out-of-distribution (OOD) detection~\citep{nalisnick2018deep, fetaya2020understanding}. Here we show, however, that certain classes of flows can be numerically non-invertible on OOD data, implying that the likelihoods computed on such data are not meaningful because the change-of-variables formula no longer applies.
\vspace{-0.4cm}

\paragraph{2D Checkerboard.}
First we consider a 2D checkerboard distribution (see samples in Figure \ref{fig:samplesCheckerboard_main} and further details in Appendix \ref{app:2DcheckerboardExpDetails}).
Despite being ill-posed due to the discontinuous density (jumps at the edges of the checkerboard), it is often used as a benchmark for NFs~\citep{chen2019residual,grathwohl2019ffjord}.
The discontinuity of the dataset is manifested in two main ways in Figure \ref{fig:checkerboardReconError}: 1) at these jumps, the model becomes unstable, which leads to larger reconstruction errors (see slightly expressed grid-like pattern), 2) affine models can become non-invertible outside the data domain.
Figure \ref{fig:checkerboardReconError} further shows reconstruction errors from a modified affine model---which is more stable, but still suffers from exploding inverses in OOD areas---and a Residual Flow \citep{chen2019residual}, which has low reconstruction error globally, consistent with our stability analysis.
\vspace{-0.2cm}

\begin{table}[t]
    \begin{subfigure}[t]{0.65\linewidth}
\small
\setlength{\tabcolsep}{2pt}
\centering
\begin{tabular}{rccccc}
\toprule 
                    & Additive                 & Affine                   & Mod. Affine \\
\hline
Data BPD (Train)          & \multicolumn{1}{c}{3.29} & \multicolumn{1}{c}{3.27} & \multicolumn{1}{c}{3.25}  \\
Data BPD (Test)           & \multicolumn{1}{c}{3.55} & \multicolumn{1}{c}{3.51} & \multicolumn{1}{c}{3.5}   \\
Reliable Sample BPD          &  \cmark                       & \xmark                     &  \cmark           \\
No Visible Sample Recon Err. &  \cmark                       & \xmark                     &  \cmark          \\
\bottomrule
\end{tabular}
\end{subfigure}
    \vspace{-0.1cm}
    \caption{\textbf{Flow results on CIFAR10.}  Bits-per-dimension (BPDs) reported at 100k updates. The bottom 2 rows show whether problems occur during training. The affine model was unstable w.r.t. model samples, and the additive and modified affine models are stable.}
    \label{table:model-distr}
    \vspace{-0.4cm}
\end{table}

\paragraph{CIFAR-10 OOD.}
Next, we evaluated CIFAR-10 pre-trained Glow and Residual Flow models on a set of OOD datasets from~\citet{hendrycks2016baseline,liang2017enhancing,nalisnick2018deep}.
\footnote{For Glow and Residual Flows, we used the pre-trained models from \url{https://github.com/y0ast/Glow-PyTorch} and \url{https://github.com/rtqichen/residual-flows}, respectively.}
Figure~\ref{fig:ood-reconstructions} shows qualitative reconstruction errors for Glow on three OOD datasets, as well as the numerical reconstruction errors for Glow and ResFlow models on all OOD datasets.
While the Residual Flow was always stably invertible, Glow was non-invertible on all datasets except SVHN.
This indicates that OOD detection methods based on Glow likelihoods may be unreliable.
In Appendix~\ref{app:OODsampleExp}, we provide additional details, as well as a comparison of the stability of additive and affine coupling models on OOD data during training.

\subsubsection{Instability in the Data Distribution and further Failures}
\label{sec:flowInDistr}

\begin{figure}[t]
\centering
\begin{subfigure}[t]{0.5\linewidth}
	\centering
	\includegraphics[width=1.05\linewidth]{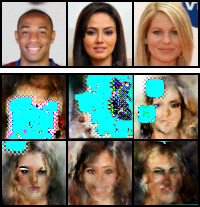}
\end{subfigure}
\caption{\textbf{Instability in model-distribution.} The top row shows data from CelebA64 and the two bottom rows show samples from an affine model during training at epoch 20. \texttt{NaN} pixels are visualized in {\color{cyan} cyan}.
}
\label{fig:samplingIssues}
\vspace{-0.3cm}
\end{figure}

\begin{figure*}[h]
    \vspace{-0.3cm}
    \centering
    \begin{tabular}{l}
    \hspace{4.6cm} Unregularized \hspace{2.0cm} Regularized \\
    \includegraphics[trim={0 0 0 5.5cm},clip,width=0.5\linewidth]{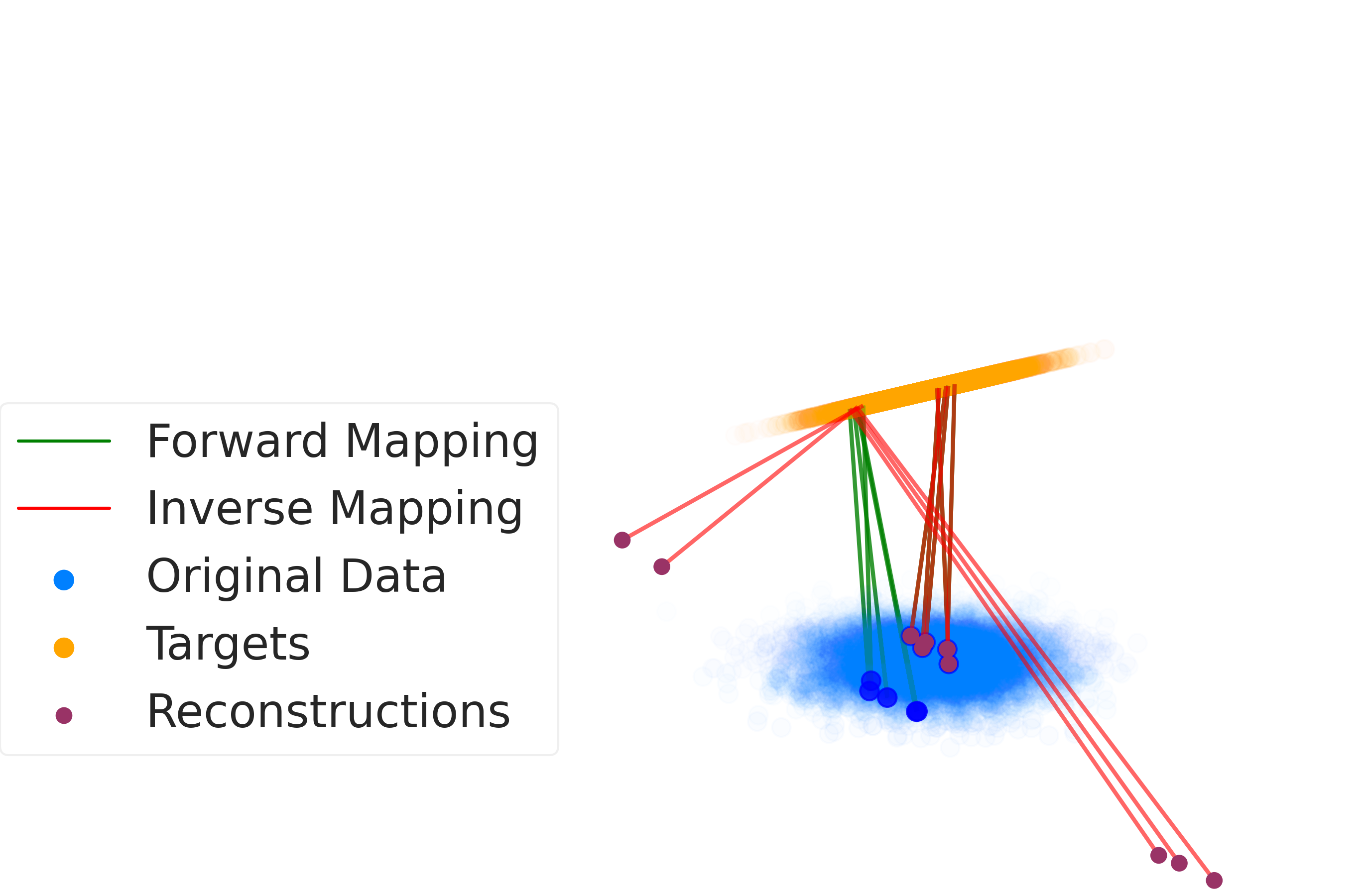}
    \includegraphics[trim={12cm 0 0 5.5cm},clip,width=0.25\linewidth]{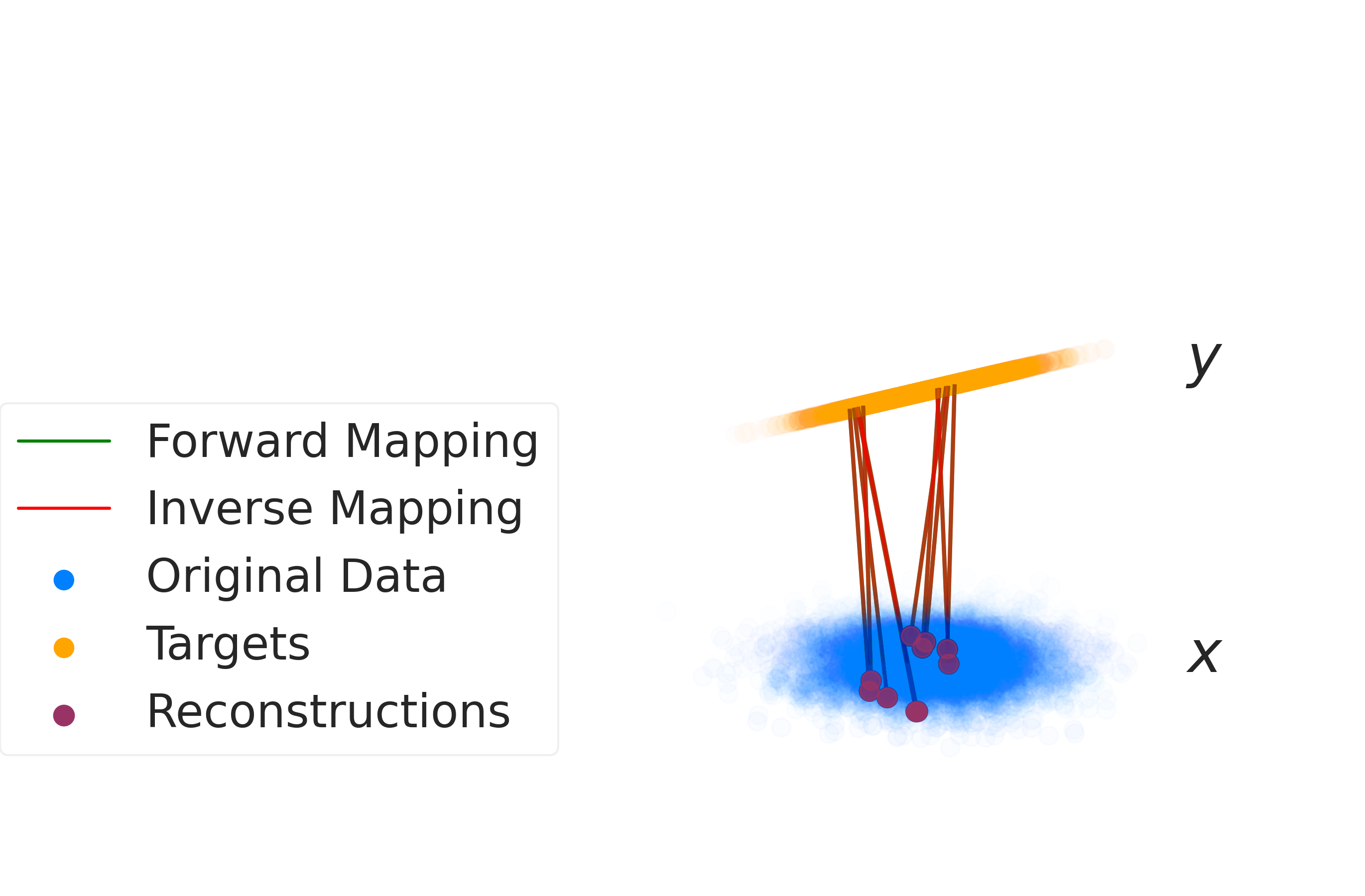}
    \end{tabular}
    \vspace{-0.25cm}
    \caption{\textbf{Exploding inverses on a 2D regression task.}
    A Glow model is trained to map between two 2D Gaussian distributions $(x_1, x_2) \to (y_1, y_2)$, where $y_2$ has low variance, so that we are essentially mapping from 2D space onto a 1D subspace.
\textbf{Left:} An unregularized model exhibits exploding inverses, illustrated by the points that are mapped far outside the original data distribution by the inverse mapping. \textbf{Right:} Regularizing the model by adding the normalizing flow objective with a small coefficient (1e-8) stabilizes the mapping.} 
    \label{fig:toyRegression}
    \vspace{-0.5cm}
\end{figure*}

In this section, we further identify failures within the model and data distributions.
In particular, we study the stability of the inverse on the model distribution by sampling $z$ from the base distribution (rather than obtaining $z$ via a forward pass on some data). 
For this, we trained both additive/affine models on CIFAR-10, and an affine model on CelebA64 (see details in Appendix \ref{app:OODsampleExp}).
We report quantitative results during training on CIFAR-10 in Table~\ref{table:model-distr}. The affine model was unstable w.r.t. model samples.
Furthermore in Figure~\ref{fig:samplingIssues}, we show samples from the affine CelebA64 model, which has \texttt{NaN} values in multiple samples.

\vspace{-0.1cm}
\paragraph{Non-Invertible Inputs within the Dequantization Region.} We can further expose non-invertibility even in the data distribution. By optimizing within the dequantization distribution of a datapoint we are able to find regions that are poorly reconstructed by the model. Note that the inputs found this way are valid samples from the training data distribution. See Appendix~\ref{app:invertibilityAttack} for details.

\vspace{-0.1cm}
\paragraph{Instability beyond the NF loss.}
In Appendix \ref{app:flowgan} we provide results on bi-directional training with INNs in the Flow-GAN setting \citep{grover2018flow}. While additive models did not show instabilities thus far, they exhibit exploding inverses when trained solely adversarially. Lastly, this exploration demonstrates how our analysis can be leveraged to choose appropriate tools for stabilizing INNs, and in this case improve model samples while retaining competitive bits-per-dimension (BPDs).

\subsection{Supervised Learning with Memory Efficient Gradients}
\label{sec:classification}
\vspace{-0.1cm}

For supervised learning, INNs enable memory-efficient training by re-computing intermediate activations in the backward pass, rather than storing them in memory during the forward pass~\citep{gomez2017reversible}.
This enables efficient large-scale generative modeling \citep{donahue2019large} and high-resolution medical image analysis \citep{etmann2020iunets}.
Re-computing the activations during the backward pass, however, relies on the inverse being numerically precise locally around the training data.
This is a weaker requirement than the global stability we desire for NFs such that they can be applied to OOD data.
However, in this section we show that even this weaker requirement can be violated, and how to mitigate these failures by local regularization as discussed in Section \ref{sec:controlLocalStability}. 

\vspace{-0.2cm}
\paragraph{Toy 2D Regression.}
In contrast to NFs, where the likelihood objective encourages local stability, there is no default mechanism to avoid unstable inverses in supervised learning (e.g., classification or regression).
As an example, consider a simple 2D regression problem visualized in Figure \ref{fig:toyRegression}.
Here, the targets $y$ lie almost on a 1D subspace, which requires the learned mapping to contract from 2D to 1D.
Even in such a simple task, a Glow regression model becomes non-invertible, as shown by the misplaced reconstructions for the unregularized model.
Additional details are provided in Appendix~\ref{app:2d-toy-regression-details}.
This illustrates the importance of adding regularization terms to supervised objectives to maintain invertibility, as we do next for memory-efficient training on CIFAR-10.

\vspace{-0.4cm}
\paragraph{CIFAR-10 Classification.}
Here we show that INN classifiers on CIFAR-10 can become non-invertible---making it impossible to compute accurate memory-saving gradients---and that local regularization by adding either the finite differences (FD) penalty or the normalizing flow (NF) objective with a small coefficient stabilizes these models, enabling memory-efficient training.
We focused on additive and affine models with architectures similar to Glow~\citep{kingma2018glow}, with either $1 \times 1$ convolutions or shuffle permutations and ActNorm between building blocks.
We used only coupling approaches, because i-ResNets \citep{behrmann2019} are not suited for memory-efficient gradients due to their use of an expensive iterative inverse.
Experimental details and extended results are provided in Appendix~\ref{app:classificationExpDetails}.

\begin{table*}
\centering
\small
\begin{tabular}{cc|cccccc}
\toprule
  \textbf{Model} & \textbf{Regularizer} & \textbf{Inv?} & \textbf{Test Acc} & \textbf{Recons. Err.} & \textbf{Cond. Num.} & \textbf{Min SV} & \textbf{Max SV}  \\
  \midrule
  \multirow{3}{*}{Additive Conv} & None  & \cmark & 89.73 & 4.3e-2 & 7.2e+4 & 6.1e-2 & 4.4e+3 \\
                                 & FD    & \cmark & 89.71 & 1.1e-3 & 3.0e+2 & 8.7e-2 & 2.6e+1 \\
                                 & NF    & \cmark & 89.52 & 9.9e-4 & 1.7e+3 & 3.9e-2 & 6.6e+1 \\
 \hline

\multirow{3}{*}{Affine Conv}
 & None & {\color{red}\xmark} & 89.07 & {\color{red}\texttt{Inf}}    & {\color{red}8.6e14} & {\color{red}1.9e-12} & 1.7e+3 \\
                              & FD & \cmark & 89.47 & 9.6e-4 & 1.6e+2 & 9.6e-2 & 1.5e+1  \\
                              & NF & \cmark & 89.71 & 1.3e-3 & 2.2e+3 & 3.5e-2 & 7.7e+1 \\
\bottomrule

\end{tabular}
\caption{\textbf{Effect of the finite differences (FD) and normalizing flow (NF) regularizers} when training additive and affine INN Glow architectures using $1 \times 1$ convolutions for CIFAR-10 classification.
For each setting, we report the test accuracy, the numerical reconstruction error, and the condition number and min/max singular values (SVs) of the Jacobian of the forward mapping.
While the additive model was always stable, the unregularized affine model became highly unstable, with \texttt{Inf} reconstruction error; we observe that instability arises from the inverse mapping, as the min SV is 1.9e-12.
}
\label{table:classification}
\vspace{-0.4cm}
\end{table*}

In Table \ref{table:classification}, we compare the performance and stability properties of unregularized additive and affine models, as well as regularized versions using either FD or NF penalties (Appendix~\ref{app:classificationExpDetails} shows the effects of different regularization strengths).
Note that we do not aim to achieve SOTA accuracy, and these accuracies match those reported for a similar Glow classifier by~\citet{behrmann2019}.
In particular, we observe how affine models suffer from exploding inverses and are thus not suited for computing memory-efficient gradients.
Both the NF and FD regularizers mitigate the instability, yielding similar test accuracies to the unregularized model, while maintaining small reconstruction errors and condition numbers.\footnote{Note that the training with FD was performed memory-efficiently, while the None and NF settings were trained using standard backprop.}
In Appendix~\ref{app:classificationExpDetails}, we show that regularization keeps the angle between the true gradient and memory-saving gradient small throughout training.
The computational overhead of the FD regularizer is only 1.26$\times$ that of unregularized training (Table~\ref{tab:timeFD} in App. \ref{app:classificationExpDetails}).
We also experimented with applying the FD regularizer only to the inverse mapping, and while this can also stabilize the inverse and achieve similar accuracies, we found bi-directional FD to be more reliable across architectures.
Applying regularization once every 10 iterations, bi-FD is only 1.06$\times$ slower than inverse-FD.
As the FD regularizer is architecture agnostic and conceptually simple, we envision a wide-spread use for memory-efficient training.

\vspace{-0.2cm}
\section{RELATED WORK}
\vspace{-0.4cm}
\paragraph{Invertibility and Stability of Deep Networks.} The inversion from activations in standard neural networks to inputs has been studied via optimization in input space \citep{Mahendran2014UnderstandingDI} or by linking invertibility and inverse stability for ReLU-networks \citep{jensRelu}.
However, few works have studied the stability of INNs:  \citet{gomez2017reversible} examined the numerical errors in the gradient computation when using memory-efficient backprop. Similarly to our empirical analysis, \citet{jacobsen2018irevnet} computed the SVD of the Jacobian of an i-RevNet and found it to be ill-conditioned.
Furthermore, i-ResNets~\citep{behrmann2019} yield bi-Lipschitz bounds by design. Lastly, \citet{chang2018reversible} studied the stability of reversible INN dynamics in a continuous framework.
In contrast, the stability of neural networks has been of major interest due to the problem of exploding/vanishing gradients as well as for training Wasserstein GANs \citep{pmlr-v70-arjovsky17a}. Furthermore, adversarial examples \citep{szegedy2013intriguing} are strongly tied to stability and inspired our invertibility attack (Section \ref{sec:flowInDistr}). 
\vspace{-0.15cm}

\vspace{-0.2cm}
\paragraph{Improving Stability of Invertible Networks.} Instability in INNs has been noticed in other works, yet without a detailed treatment. For example, \citet{putzkyRim} proposed to employ orthogonal $1\times 1$ convolutions to obtain accurate memory-efficient gradients and \citet{etmann2020iunets} used weight normalization for stabilization. Our finite differences regularizer, on the other hand, is architecture agnostic and could be used in the settings above. Furthermore, \citet{ardizzone2019guided} considered modified scaling in affine models to improve their stability. Neural ODEs \citep{chen2018neural} are another way to design INNs, and research their stability. \citet{finlay2020train, YAN2020On, massaroli2020stable} provide further insights into designing principled and stable INNs.

\vspace{-0.2cm}
\paragraph{Fixed-Point Arithmetic in INNs.}
\citet{maclaurin2015gradient,mackay2018reversible} implement invertible computations using fixed-point numbers, with custom schemes to store information that is lost when bits are shifted, enabling exact invertibility at the cost of memory usage. As \citet{gomez2017reversible} point out, this allows exact numerical inversion when using additive coupling independent of stability. However, our stability analysis aims for a broadly applicable methodology beyond the special case of additive coupling.

\vspace{-0.2cm}
\paragraph{Invertible Building Blocks.} Besides the invertible building blocks listed in Table \ref{tab:overviewTable} (Appendix \ref{app:table-lipschitz}), several other approaches like in \citet{Karami2019} have been proposed.
Most prominently, autogressive models like IAF \citep{kingma2016improved}, MAF \citep{papamakarios2017masked} or NAF \citep{huang18d} provide invertible models that are not studied in our analysis. 
Furthermore, several newer coupling layers that require numerical inversion have been introduced by \citet{ho2019flowpp, pmlr-v97-jaini19a, neuralSplineFlows}.
In addition to the coupling-based approaches, multiple approaches \citep{chen2018neural, behrmann2019, chen2019residual, grathwohl2019ffjord, song2019mintnet} use numerical inversion schemes, where the interplay of numerical error due to stability and error due to the approximation of the inverse adds another dimension to the study of invertibility.

\vspace{-0.2cm}
\paragraph{Stability-Expressivity Tradeoff.} While Lipschitz constrained INNs like i-ResNets \citep{behrmann2019} allow to overcome many failures we observed with affine coupling blocks \citep{dinh2016density}, this constrains the flexibility of INNs. Hence, there is a tradeoff between stability and expressivity as studied in \citet{jaini2019tails, cornish2020}. While numerical invertibility is necessary for a safe usage of INNs, mixtures of Lipschitz constrained INNs could be used for improved flexibility as suggested in \citet{cornish2020}.

\section{CONCLUSION}
\label{sec:conclusion}
\vspace{-0.3cm}
Invertible Neural Networks (INNs) are an increasingly popular component of the modern deep learning toolkit. However, if analytical invertibility does not carry through to the numerical computation, their underlying assumptions break. When applying INNs, it is important to consider how the inverse is used. If local stability is sufficient, like for memory-efficient gradients, our finite difference penalty is sufficient as an architecture agnostic stabilizer. For global stability requirements e.g. when using INNs as normalizing flows, the focus should be on architectures that enable stable mappings like Residual Flows~\citep{chen2019residual}. Altogether we have shown that studying stability properties of both forward and inverse is a key step towards a complete understanding of INNs.

\section*{Acknowledgements}
Resources used in preparing this research were provided, in part, by the Province of Ontario, the Government of Canada through CIFAR, and companies sponsoring the Vector Institute \url{www.vectorinstitute.ai/#partners}.
Jens Behrmann acknowledges the support by the Deutsche Forschungsgemeinschaft (DFG) within the framework of GRK 2224/1 ``$\pi^3$: Parameter Identification -- Analysis, Algorithms, Applications''.
Paul Vicol was supported by a JP Morgan AI Fellowship.
We also would like to thank Joost van Amersfoort for insightful discussions, and members on the Vector's Ops team (George Mihaiescu, Relu Patrascu, and Xin Li) for their computation/infrastructure support.

\bibliographystyle{plainnat}
\bibliography{aistats2021}


\onecolumn

\appendix

\aistatstitle{Understanding and Mitigating Exploding Inverses \\ in Invertible Neural Networks \\ SUPPLEMENTARY MATERIAL}

\begin{itemize}
    \item In Section~\ref{app:table-lipschitz} we provide a table of Lipschitz bounds for various INN building blocks. 
    \item In Section~\ref{app:bounds} we provide statements on Lipschitz bounds and their corresponding proofs.
    \item In Section~\ref{app:2DcheckerboardExpDetails} we provide details for the 2D checkerboard experiments.
    \item In Section~\ref{app:OODsampleExp} we provide details for the OOD and sample evaluation experiments.
    \item In Section~\ref{app:invertibilityAttack} we expose non-invertibility in the data distribution by optimizing within the dequantization distribution of a datapoint to find regions that are poorly reconstructed by the model.
    \item In Section~\ref{app:2d-toy-regression-details} we provide details for the 2D toy regression experiment.
    \item In section~\ref{app:classificationExpDetails} we provide details for the CIFAR-10 classification experiments.
    \item In Section~\ref{app:flowgan} we discuss an outlook on bi-directional training with FlowGANs.
\end{itemize}

\section{TABLE OF LIPSCHITZ BOUNDS}
\label{app:table-lipschitz}
\begin{table}[h]
 \begin{center}
\small{
 \begin{tabular}{S S S S S}
       {\textbf{Building Block}}  & {\textbf{Forward Operation}} & {\textbf{Lipschitz Forward}} & {\textbf{Lipschitz Inverse}} \\ \midrule
       {\textbf{Additive}} & {$F(x)_{I_1} = x_{I_1}$}  & {$\leq 1 + \Lip(t)$} & {$\leq 1 + \Lip(t)$}\\
       {\textbf{Coupling Block}} & {$F(x)_{I_2} = x_{I_2} + t(x_{I_1})$} & & \\
       {\citep{dinh2014nice}} & {} & & \\\midrule
       {\textbf{Affine}} & {$F(x)_{I_1} = x_{I_1}$}  & {$\leq \max (1, c_g) + M$} & {$\leq \max (1, c_{\frac{1}{g}}) + M^*$} \\
       {\textbf{Coupling Block}} & {$F(x)_{I_2} = x_{I_2} \odot g(s(x_{I_1})) + t(x_{I_1}) $}  & {local for $x \in [a,b]^d$} & {local for $y \in [a^*,b^*]^d$}\\
       {\citep{dinh2016density}} & {$g(\cdot) \neq 0 $}  & {$g(x) \leq c_g$} & {$\frac{1}{g}(y) \leq c_{\frac{1}{g}}$}\\\midrule
       {\textbf{Invertible}} & {$F(x) = x + g(x)$}   & {$ \leq 1+ \Lip(g)$} & {$ \leq \frac{1}{1 - \Lip(g)}$} \\ 
       {\textbf{Residual Layer}} & {$ \Lip(g) < 1$}  & {} & {} \\
       {  \citep{behrmann2019}} & {}  & {} & {} \\
       \midrule
       {\textbf{Neural ODE}} & {$\frac{d x(t)}{dt} = F(x(t), t)$}  & {$\leq e^{\Lip(F) \cdot t}$} & {$\leq e^{\Lip(F) \cdot t}$} \\
       { \citep{chen2018neural}} & {$t \in [0, T]$} & {} & {}\\\midrule
       {\textbf{Diagonal Scaling} } & {$F(x) = D x$}  & {$=\max_i |D_{ii}|$} & {$=\frac{1}{\min_i |D_{ii}|}$ }\\
       { \citep{dinh2014nice}} & {$D$ diagonal} &  & \\
       {\textbf{ActNorm}} & {$D_{ii} \neq 0$} &  & \\
       { \citep{kingma2018glow}} & {} &  & \\\midrule
       {\textbf{Invertible $1\times1$} } & {$F(x) = P L (U + \mathrm{diag}(s)) =: W$}  & & \\
       {\textbf{Convolution}} & {$P$ permutation, $L$ lower-triangular}  & {$\leq \|W\|_2$} & {$\leq \|W^{-1}\|_2$}\\
       { \citep{kingma2018glow}} & {$U$ upper-triangular, $s \in \mathbb{R}^d$} &  & \\ \midrule
       \bottomrule
 \end{tabular}
}
 \caption{
 \textbf{Lipschitz bounds on building blocks of invertible neural networks.} The second column shows the operations of the forward mapping and the last two columns show bounds on the Lipschitz constant of the forward and inverse mapping. $M$ in the row for the forward mapping of an affine block is defined as $M=\max(|a|, |b|) \cdot c_{g'} \cdot \Lip(s) + \Lip(t)$. Furthermore, $M^*$ for the inverse of an affine block is $M^* = \max(|a^*|, |b^*|) \cdot c_{\left(\frac{1}{g}\right)'} \cdot \Lip(s) + c_{\left(\frac{1}{g}\right)'} \cdot \Lip(s) \cdot c_t + c_{\frac{1}{g}} \cdot \Lip(t)$. Note that the bounds of the affine blocks hold only locally. }
 \label{tab:overviewTable}
 \end{center}
 \end{table}

\section{STATEMENTS ON LIPSCHITZ BOUNDS AND PROOFS}
\label{app:bounds}
In this section, we provide our analysis of bi-Lipschitz bounds of common INN architectures. The obtained bounds are summarized in Table \ref{tab:overviewTable}. In general, Lipschitz bounds for deep neural networks often tend to be loose in practice, because a derived bound for a single layer needs to be multiplied by the number of layers of then entire network. Thus these bounds are rarely used quantitatively, however such an analysis can reveal crucial qualitative differences between architecture designs. This is why, we provide the technical analysis of the bi-Lipschitz bounds in the appendix and discuss their qualitative implications in Section \ref{sec:stability} of the main body. 

The bounds for i-ResNets are taken from \citep{behrmann2019}. For Neural ODEs \citep{chen2018neural}, one needs to consider a Lipschitz constant $\Lip(F)$ that holds for all $t \in [0,T]$, i.e.
\begin{align*}
 \|F(t, x_1) - F(t, x_2)\|_2 \leq \Lip(F) \|x_1 - x_2\|_2, \quad \text{for all} \quad t \in [0,T].
\end{align*}
Then, the claimed bound is a standard result, see e.g. \citep[Theorem 2.3]{ascher2008numerical}. Note that the inverse is given by $\frac{dy(t)}{dt} = - F(y(t), t)$, hence the same bound holds. 

In the following, we proceed as follows: First, we state bi-Lipschitz bounds of additive coupling blocks as a lemma and proove them (Lemma \ref{lem:lipBoundAdditive}). Their derivation is generally straightforward but somewhat technical at stages due to the handling of the partition. Second, we perform the same analysis for affine coupling blocks (Lemma \ref{lem:lipBoundAffine}). Third, we use these technical lemmas to proof Theorem \ref{thm:qualDiffCoupling} from the main body of the paper. 

Before deriving the upper bounds, we note that the upper bounds on the bi-Lipschitz constants also provide lower bounds: 
\begin{remark}[Lower bounds via upper bounds]
By reversing, upper bounds on the Lipschitz constant of the inverse mapping yield lower bounds on the Lipschitz constant of the forward and vice versa. This holds due to the following derivation: Let $x, x^* \in \mathbb{R}^d$ and $F^{-1}(z) = x$, $F^{-1}(z^*) = x^*$. By employing the definition of the forward and inverse Lipschitz constants, we have:
\begin{align}
\label{eq:upperLowerBounds}
    \|x - x^*\| = \|F^{-1}(z) - F^{-1}(z^*)\| \leq \Lip(F^{-1}) \|z - z^*\| &= \Lip(F^{-1}) \|F(x) - F(x^*)\| \nonumber\\
     &\leq \Lip(F^{-1}) \Lip(F) \|x - x^*\| \nonumber\\
    \Leftrightarrow \frac{1}{\Lip(F^{-1})} \|x-x^*\| \leq \|F(x) - F(x^*)\| \leq \Lip(F) \|x-x^*\|.
\end{align}
By denoting the upper bounds as $L \geq \Lip(F)$ and $L^* \geq \Lip(F^{-1})$ and using the same reasoning as above, we thus have:
\begin{align*}
    L \geq \Lip(F) \geq \frac{1}{\Lip(F^{-1})} \geq \frac{1}{L^*} \quad \text{ and } \quad L^* \geq \Lip(F^{-1}) \geq \frac{1}{\Lip(F)} \geq \frac{1}{L}.
\end{align*}
Hence our bounds provide a rare case, where not only upper bounds on the Lipschitz constants of neural networks are known, but also lower bounds due to the bi-Lipschitz continuity.
\end{remark}

\begin{remark}[bi-Lipschitz constant]
By considering inequality \ref{eq:upperLowerBounds}, it is further possible to introduce a single constant as $\biLip(F) = \max\{\Lip(F), \Lip(F^{-1})\}$ for which: 
\begin{align*}
    \frac{1}{\biLip(F)} \|x-x^*\| \leq \|F(x) - F(x^*)\| \leq \biLip(F) \|x-x^*\|
\end{align*}
holds. This constant is usually called the bi-Lipschitz constant. We, on the other hand, refer to both constants $\Lip(F), \Lip(F^{-1})$ as bi-Lipschitz constants. We use this slightly more descriptive language, because we are particularly interested in the stability of each mapping direction.
\end{remark}

Now we consider coupling blocks and provide upper Lipschitz bounds:
\begin{lemma}[Lipschitz bounds for additive coupling block]
\label{lem:lipBoundAdditive}
Let $I_1, I_2$ be a disjoint index sets of $\{1, ..., d\}$ and let $I_1, I_2$ be non-empty. Consider an additive coupling block \citep{dinh2014nice} as:
\begin{align*}
    F(x)_{I_1} &= x_{I_1} \\
    F(x)_{I_2} &= x_{I_2} + t(x_{I_1}),
\end{align*}
where $t:\mathbb{R}^{|I_1|} \rightarrow \mathbb{R}^{|I_2|}$ is a Lipschitz continuous and differentiable function. Then, the Lipschitz constant of the forward mapping $F$ and inverse mapping $F^{-1}$ can be upper-bounded by:
\begin{align*}
    \Lip(F) &\leq 1 + \Lip(t) \\
    \Lip(F^{-1}) &\leq 1 + \Lip(t). \\
\end{align*}
\end{lemma}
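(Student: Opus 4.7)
The plan is to bound each direction of the coupling block by a direct norm estimate, exploiting the fact that the only nontrivial part of the map is the additive correction $t(x_{I_1})$. I would carry out the two bounds separately but the inverse bound will be essentially a corollary of the forward one, because the inverse of an additive coupling block has the same form with $t$ replaced by $-t$ and $\Lip(-t)=\Lip(t)$.

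For the forward bound, I would first split the ambient norm using the partition $I_1\cup I_2=\{1,\dots,d\}$, so that for any $x,y\in\mathbb{R}^d$
\begin{equation*}
\|F(x)-F(y)\|^2 = \|x_{I_1}-y_{I_1}\|^2 + \|(x_{I_2}-y_{I_2}) + (t(x_{I_1})-t(y_{I_1}))\|^2.
\end{equation*}
Applying the triangle inequality to the second summand and using $\|t(x_{I_1})-t(y_{I_1})\|\le \Lip(t)\,\|x_{I_1}-y_{I_1}\|$, I would set $a=\|x_{I_1}-y_{I_1}\|$, $b=\|x_{I_2}-y_{I_2}\|$, $L=\Lip(t)$ and reduce everything to showing the elementary inequality $a^2+(b+La)^2 \le (1+L)^2(a^2+b^2)$. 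A quick expansion reduces this to $2L(a^2-ab+b^2)+L^2 b^2 \ge 0$, and since $a^2-ab+b^2=(a-b/2)^2+3b^2/4\ge 0$ this is immediate. Taking square roots yields $\Lip(F)\le 1+\Lip(t)$.

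An even cleaner alternative, which I would probably use since $t$ is assumed differentiable, is the Jacobian characterization $\Lip(F)=\sup_x\|J_F(x)\|_2$ (equation \ref{eq:jacobianLipschitz} in the paper). Reorganizing coordinates so that $I_1$ comes first,
\begin{equation*}
J_F(x)=\begin{pmatrix} I_{|I_1|} & 0 \\ J_t(x_{I_1}) & I_{|I_2|}\end{pmatrix} = I + \begin{pmatrix} 0 & 0 \\ J_t(x_{I_1}) & 0 \end{pmatrix},
\end{equation*}
so by subadditivity and unitary-invariance of the spectral norm $\|J_F(x)\|_2 \le 1 + \|J_t(x_{I_1})\|_2 \le 1+\Lip(t)$, where the last step uses the standard identity $\Lip(t)=\sup\|J_t\|_2$.

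For the inverse, I would just observe that $F^{-1}$ acts as $F^{-1}(y)_{I_1}=y_{I_1}$ and $F^{-1}(y)_{I_2}=y_{I_2}-t(y_{I_1})$, which is an additive coupling block with coupling function $-t$. Since $\Lip(-t)=\Lip(t)$, applying the forward bound to $F^{-1}$ gives $\Lip(F^{-1})\le 1+\Lip(t)$. There is no real obstacle here — the only mildly delicate point is the indexing/reshuffling from the partition $(I_1,I_2)$, which is purely bookkeeping and does not affect any norms since reordering coordinates is an isometry on $\mathbb{R}^d$.
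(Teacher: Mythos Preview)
Your proposal is correct. Your second (Jacobian) argument is essentially the paper's route, but streamlined: the paper also writes $J_F$ in lower-block-triangular form and bounds $\sup_{\|x^*\|_2=1}\|J_F(x)x^*\|_2^2$ by splitting into the $I_1$ and $I_2$ components and expanding the square, arriving at $(1+\|J_t(x)\|_2)^2$ after several lines. Your use of $J_F=I+\left(\begin{smallmatrix}0&0\\ J_t&0\end{smallmatrix}\right)$ together with subadditivity of the spectral norm short-circuits that computation to one line. The inverse step is identical in both: $F^{-1}$ is an additive coupling block with $-t$, so the same bound applies.

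Your first argument is a genuinely different route that the paper does not take. It works directly with $\|F(x)-F(y)\|$ and avoids the Jacobian entirely, so it does not need the differentiability hypothesis on $t$ (only Lipschitz continuity). The cost is the small algebraic detour through the scalar inequality $a^2+(b+La)^2\le(1+L)^2(a^2+b^2)$, which you verify correctly. This buys a slightly stronger statement (no differentiability needed) at the price of a short side computation; the paper's Jacobian approach is perhaps more in keeping with the rest of its analysis, where the identity $\Lip(F)=\sup_x\|J_F(x)\|_2$ is used repeatedly.
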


\begin{proof}
To prove the Lipschitz bounds, we use the identity:
 \begin{align*}
     \Lip(F) = \sup_{x \in \mathbb{R}^d} \|J_F(x)\|_2.
 \end{align*}
 Thus, in order to obtain a bound on the Lipschitz constant, we look into the structure of the Jacobian. Without loss of generality we consider $I_1 = \{1, \ldots, m\}$ and $I_2 = \{m+1, \ldots, d\}$, where $1 < m < d$. The general case, where $I_1, I_2$ are arbitrary disjoint and non-empty index sets, can be recovered by a permutation. A permutation is norm-preserving and thus does not influence the bound on the Lipschitz constant.
 
 The Jacobian of $F$ has a lower-block structure with an identity diagonal, i.e.
 \begin{align*}
     J_F(x) = \begin{pmatrix} I_{m \times m}  & 0_{(m) \times (d-m)} \\
     J_t(x) & I_{(d - m) \times (d - m)}\\
     \end{pmatrix},
 \end{align*}
 where $J_t(x) \in \mathbb{R}^{(d-m) \times m}$ denotes the Jacobian of $t$ at $x$. By using this structure, we can derive the following upper bound:
 \begin{align}
 \label{eq:derivationAdditive}
     \Lip(F)^2 &= \sup_{x \in \mathbb{R}^d} \|J_F(x)\|_2^2 \nonumber \\ 
     &= \sup_{x \in \mathbb{R}^d} \sup_{\|x^*\|_2=1} \|J_F(x) x^*\|_2^2 \nonumber\\
     &= \sup_{x \in \mathbb{R}^d} \sup_{\|x^*\|_2=1} \|(J_F(x) x^*)_{I_1}\|_2^2 + \|(J_F(x) x^*)_{I_2}\|_2^2 \nonumber\\
     &= \sup_{x \in \mathbb{R}^d} \sup_{\|x^*\|_2=1} \|x^*_{I_1}\|_2^2 + \|x^*_{I_2} + J_t(x) x^*_{I_1}\|_2^2 \nonumber\\
     &\leq  \sup_{x \in \mathbb{R}^d} \sup_{\|x^*\|_2=1} \|x^*_{I_1}\|_2^2 + (\|x^*_{I_2}\|_2 + \|J_t(x) x^*_{I_1}\|_2)^2 \\
     & = \sup_{x \in \mathbb{R}^d} \sup_{\|x^*\|_2=1} \|x^*_{I_1}\|_2^2 + \|x^*_{I_2}\|_2^2 + 2 \|x^*_{I_2}\|_2 \|J_t(x) x^*_{I_1}\|_2 + \|J_t(x) x^*_{I_1}\|_2^2 \nonumber\\
     &= \sup_{x \in \mathbb{R}^d} \sup_{\|x^*\|_2=1} \|x^*\|_2^2 + 2 \|x^*_{I_2}\|_2 \|J_t(x) x^*_{I_1}\|_2 + \|J_t(x) x^*_{I_1}\|_2^2 \nonumber\\
     &= \sup_{x \in \mathbb{R}^d} \sup_{\|x^*\|_2=1} 1 + 2 \|x^*_{I_2}\|_2 \|J_t(x) x^*_{I_1}\|_2 + \|J_t(x) x^*_{I_1}\|_2^2 \nonumber\\
     &= \sup_{x \in \mathbb{R}^d} \sup_{\|x^*\|_2=1} 1 + 2 \|J_t(x) x^*_{I_1}\|_2 + \|J_t(x) x^*_{I_1}\|_2^2 \nonumber\\
     &= \sup_{x \in \mathbb{R}^d} \sup_{\|x^*\|_2=1} \left(1 + \|J_t(x) x^*_{I_1}\|_2\right)^2 \nonumber\\
     &= \sup_{x \in \mathbb{R}^d} \left(1 + \|J_t(x)\|_2\right)^2 \nonumber\\
     \Rightarrow \Lip(F) &\leq 1 + \Lip(t).\nonumber
 \end{align}
 Furthermore, the inverse of $F$ can be obtained via the simple algebraic transformation ($y:=F(x)$)
 \begin{align*}
     F^{-1}(y)_{I_1} &= y_{I_1} \\
     F^{-1}(y)_{I_2} &= y_{I_2} - t(y_{I_1}). \\
 \end{align*}
 Since the only difference to the forward mapping is the minus sign, the Lipschitz bound for the inverse is the same as for the forward mapping. 
\end{proof}

\begin{lemma}[Lipschitz bounds for affine coupling block]
\label{lem:lipBoundAffine}
Let $I_1, I_2$ be a disjoint partition of indices $\{1, ..., d\}$ and let $I_1, I_2$ be non-empty. Consider an affine coupling block \citep{dinh2016density} as:
\begin{align*}
    F(x)_{I_1} &= x_{I_1} \\
    F(x)_{I_2} &= x_{I_2} \odot g(s(x_{I_1})) + t(x_{I_1}),
\end{align*}
where $g,s,t:\mathbb{R}^{|I_1|} \rightarrow \mathbb{R}^{|I_2|}$ are Lipschitz continuous, continuously differentiable  and non-constant functions. 
Then, the Lipschitz constant of the forward $F$ can be locally bounded for $x \in [a,b]^d$ as:
\begin{align*}
    \Lip(F) \leq \max (1, c_g) + M,
\end{align*}
where $M=\max(|a|, |b|) \cdot c_{g'} \cdot \Lip(s) + \Lip(t)$. The Lipschitz constant of the inverse $F^{-1}$ can be locally bounded for $y \in [a^*,b^*]^d$ as:
\begin{align*}
    \Lip(F^{-1}) \leq \max (1, c_{\frac{1}{g}}) + M^*,
\end{align*}
where $M^* = \max(|a^*|, |b^*|) \cdot c_{\left(\frac{1}{g}\right)'} \cdot \Lip(s) + c_{\left(\frac{1}{g}\right)'} \cdot \Lip(s) \cdot c_t + c_{\frac{1}{g}} \cdot \Lip(t)$.
\end{lemma}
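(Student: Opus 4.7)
My plan is to follow the template of Lemma~\ref{lem:lipBoundAdditive}: use the identity $\Lip(F) = \sup_x \|J_F(x)\|_2$, write the Jacobian of a single block, and bound its spectral norm. The new twist compared with the additive case is that the multiplicative scaling by $g(s(\cdot))$ produces two complications: a non-identity diagonal scaling block inside $J_F$, and an extra product-rule term in the off-diagonal block that involves $x_{I_2}$ itself. This second effect is precisely why the bound is only local on $[a,b]^d$ and cannot be promoted to a global one.

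First I would, as in Lemma~\ref{lem:lipBoundAdditive}, reduce by a permutation to the case $I_1 = \{1,\ldots,m\}$, $I_2 = \{m+1,\ldots,d\}$, so that the Jacobian is block lower-triangular:
\begin{align*}
J_F(x) = \begin{pmatrix} I_{m \times m} & 0 \\ J_{21}(x) & \diag\bigl(g(s(x_{I_1}))\bigr) \end{pmatrix},
\qquad
J_{21}(x) = \diag(x_{I_2})\,\diag\bigl(g'(s(x_{I_1}))\bigr)\,J_s(x_{I_1}) + J_t(x_{I_1}),
\end{align*}
where the formula for $J_{21}$ comes from applying the product and chain rules. The clean move is to split $J_F(x) = D(x) + N(x)$ into its block-diagonal part $D(x)$ and the single off-block-diagonal piece $N(x)$ containing $J_{21}(x)$, and then use $\|J_F(x)\|_2 \le \|D(x)\|_2 + \|N(x)\|_2$. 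Since $D(x)$ is diagonal with entries in $\{1\} \cup \{g(s(x_{I_1}))_i\}$, one gets $\|D(x)\|_2 \le \max(1, c_g)$. For $\|N(x)\|_2 = \|J_{21}(x)\|_2$, submultiplicativity combined with $\|\diag(x_{I_2})\|_2 = \|x_{I_2}\|_\infty \le \max(|a|,|b|)$ on the local cube, $\|\diag(g'(s(x_{I_1})))\|_2 \le c_{g'}$, and $\|J_s(x_{I_1})\|_2 \le \Lip(s)$ yields $\|J_{21}(x)\|_2 \le M$. Taking the supremum over $x \in [a,b]^d$ gives the stated forward bound.

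For the inverse, I would first derive $F^{-1}$ algebraically as
\begin{align*}
F^{-1}(y)_{I_1} = y_{I_1}, \qquad F^{-1}(y)_{I_2} = \bigl(y_{I_2} - t(y_{I_1})\bigr) \oslash g(s(y_{I_1})),
\end{align*}
and then apply the same $D+N$ decomposition to its Jacobian. The diagonal block is now $\diag(1/g(s(y_{I_1})))$, contributing $\max(1, c_{1/g})$. The off-diagonal block, produced by the product rule on the elementwise quotient, is
\begin{align*}
(J_{F^{-1}})_{21}(y) = -\diag\bigl(1/g(s(y_{I_1}))\bigr)\,J_t(y_{I_1}) \;+\; \diag\bigl(y_{I_2} - t(y_{I_1})\bigr)\,J_{1/(g\circ s)}(y_{I_1}).
\end{align*}
Bounding the first summand by $c_{1/g}\cdot\Lip(t)$ and, in the second, using $\|y_{I_2} - t(y_{I_1})\|_\infty \le \max(|a^*|,|b^*|) + c_t$ together with $\|J_{1/(g\circ s)}\|_2 \le c_{(1/g)'} \cdot \Lip(s)$ (chain rule), reproduces exactly the three-term $M^*$.

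The main obstacle I anticipate is the product-rule bookkeeping for $F^{-1}$: the three terms of $M^*$ all come from it and must be tracked separately to match the stated form, with the $c_t$ contribution arising specifically from estimating $\|y_{I_2} - t(y_{I_1})\|_\infty$ via the triangle inequality. In the forward direction the analysis is essentially mechanical once $J_F = D + N$ is in place; the only subtle point there is that $M$ is finite only on a bounded domain because of the $\diag(x_{I_2})$ factor, which is also what forces the affine block to fail globally bi-Lipschitz continuity and thus dovetails with part (ii) of Theorem~\ref{thm:qualDiffCoupling}.
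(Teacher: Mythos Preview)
Your proposal is correct and follows essentially the same strategy as the paper: write the block lower-triangular Jacobian, isolate the off-diagonal block (the paper calls it $M(x)$, resp.\ $M^*(y)$), and bound each piece by the same elementary estimates $\|\diag(x_{I_2})\|_2\le\max(|a|,|b|)$, $\|\diag(g'(s(\cdot)))\|_2\le c_{g'}$, $\|J_s\|_2\le\Lip(s)$, etc. The only presentational differences are that you apply the triangle inequality at the matrix level via $J_F=D+N$, whereas the paper expands $\|J_F(x)x^*\|_2^2$ into its $I_1$/$I_2$ components as in the additive lemma before completing the square; and for the inverse you keep $\diag(y_{I_2}-t(y_{I_1}))$ as one factor and bound it by $\max(|a^*|,|b^*|)+c_t$, while the paper splits it into $\diag(y_{I_2})-\diag(t(y_{I_1}))$ to get three separate summands in $M^*(y)$---both routes land on the identical three-term $M^*$.
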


\begin{proof}
We employ a similar proof strategy as in Lemma \ref{lem:lipBoundAdditive} and consider the Jacobian of the affine coupling layer. As in the proof of Lemma \ref{lem:lipBoundAdditive} we consider $I_1 = \{1, \ldots, m\}$ and $I_2 = \{m+1, \ldots, d\}$, where $1 < m < d$, without loss of generality. Since the structure of the forward and inverse mapping for affine coupling layers has some differences, we split the proof of the Lipschitz bounds into two steps. First, we start with the forward mapping and then reuse several steps for the bounds on the inverse mapping.

\textbf{Derivation for the forward mapping:} \\
The Jacobian of the forward affine block has the structure
\begin{align*}
     J_F(x) = \begin{pmatrix} I_{m \times m} & 0_{(m) \times (d-m)} \\
     D_I(x_{I_2}) D_{g'}(x_{I_1}) J_s(x_{I_1}) + J_t(x_{I_1}) & D_g(s(x_{I_1}))\\
     \end{pmatrix},
\end{align*}
where $D$ are the following diagonal matrices
\begin{align*}
 D_I(x_{I_2}) &= \mathrm{diag}\left((x_{I_2})_1, \ldots, (x_{I_2})_{|I_2|}\right), \\ 
 D_{g'}(x_{I_1}) &= \mathrm{diag} \left(g'(s(x_{I_2})_1, \ldots, g'(s(x_{I_2})_{|I_2|}) \right), \\
 D_{g}(s(x_{I_1})) &= \mathrm{diag} \left(g(s(x_{I_2})_1, \ldots, g(s(x_{I_2})_{|I_2|}) \right),
\end{align*}
where $D_I(x_{I_2}), D_{g'}(x_{I_1}) \in \mathbb{R}^{(d-m) \times m}$ and $D_{g}(s(x_{I_1})) \in \mathbb{R}^{(d - m) \times (d - m)} $. To simplify notation, we introduce
\begin{align*}
    M(x) = D_I(x_{I_2}) D_{g'}(x_{I_1}) J_s(x_{I_1}) + J_t(x_{I_1}).
\end{align*}
By using an analogous derivation as in the proof of Lemma \eqref{lem:lipBoundAffine} (up to the inequality sign), we get:
\begin{align*}
 \Lip(F)^2 &\leq \sup_{x \in [a,b]^d} \sup_{\|x^*\|_2=1} \|x^*_{I_1}\|_2^2 + \left(\|D_g(s(x_{I_1})) x^*_{I_2}\|_2 + \|M(x) x^*_{I_1}\|_2\right)^2 \\
 &= \sup_{x \in [a,b]^d}  \max_{i \in [|I_1|]} (1, D_g(s(x_{I_1})_i))^2 + 2 \max_{i \in [|I_1|]} ( D_g(s(x_{I_1})_i))\|M(x)\|_2 + \|M(x)\|_2^2 \\
 &\leq \sup_{x \in [a,b]^d}  \max_{i \in [|I_1|]} (1, D_g(s(x_{I_1})_i))^2 + 2 \max_{i \in [|I_1|]} (1, D_g(s(x_{I_1})_i))\|M(x)\|_2 + \|M(x)\|_2^2 \\
 &= \sup_{x \in [a,b]^d}  \left( \max_{i \in [|I_1|]} (1, D_g(s(x_{I_1})_i)) + \|M(x)\|_2\right)^2 \\
\iff \Lip(F) &\leq \max_{i \in [|I_1|]} (1, D_g(s(x_{I_1})_i)) + \sup_{x \in [a,b]^d}\|M(x)\|_2.
\end{align*}
Next, we will look into the structure of $M(x)$ to derive a more precise bound. Since inputs $x$ are assumed to be bounded as $x \in [a, b]^d$, it holds: 
\begin{align*}
 \|D_I(x_{I_2})\|_2 \leq \max(|a|, |b|).
\end{align*}
Since we assumed that $g$ is continuously differentiable, both $g$ and $g'$ will be bounded over closed intervals like $[a,b]^d$. We will denote these bounds as $c_g$ and $c_{g'}$. Thus, for $x\in [a,b]^d$ it holds
\begin{align*}
 \|D_{g'}(x_{I_1})\|_2 \leq c_{g'}.
\end{align*}
In a similar manner as in Lemma \ref{lem:lipBoundAdditive}, the spectral norm of the Jacobian of the scale-function $s$ and translation-function $t$ can be bounded by their Lipschitz constant, i.e.
\begin{align*}
 \|J_s(x_{I_1})\|_2 &\leq \Lip(s) \\
 \|J_t(x_{I_1})\|_2 &\leq \Lip(t) .
\end{align*}
By using the above bounds, we obtain
\begin{align*}
 \sup_{x\in [a,b]^d}\|M(x)\|_2^2 \leq \max(|a|, |b|) \cdot c_{g'} \cdot \Lip(s) + \Lip(t),
\end{align*}
which results in the local Lipschitz bounds for $x\in[a,b]^d$
\begin{align*}
 \Lip(F) &\leq \max (1, c_g) + \max(|a|, |b|) \cdot c_{g'} \cdot \Lip(s) + \Lip(t).
\end{align*}

\textbf{Derivation for the inverse mapping:} \\
For the affine coupling block, the inverse is defined as: 
 \begin{align*}
     F^{-1}(y)_{I_1} &= y_{I_1} \\
     F^{-1}(y)_{I_2} &= (y_{I_2} - t(x_{I_1})) \oslash g(s(y_{I_1})), 
 \end{align*}
 where $g(\cdot) \neq 0$ for all $X_{I_2}$, $I_1, I_2$ as before and $\oslash$ denotes elementwise division. The Jacobian for this operation has the structure:
 \begin{align*}
     J_{F^{-1}}(x) = \begin{pmatrix} I & 0 \\
     M^*(y) & D_{\frac{1}{g}}(s(x_{I_1}))\\
     \end{pmatrix},
 \end{align*}
 where $D_{\frac{1}{g}}(s(x_{I_1}))$ denotes a diagonal matrix, as before. Furthermore, $M^*$ is defined as: 
 \begin{align*}
     M^*(y) = D_I(y_{I_2}) D_{\left(\frac{1}{g} \right)'}(s(y_{I_1})) J_s(y_{I_1}) - D_{\left(\frac{1}{g} \right)'}(s(y_{I_1})) J_s(y_{I_1}) D_I(t(y_{I_1})) - D_{\frac{1}{g}}(s(y_{I_1})) J_t (y_{I_1}),
 \end{align*}
 where $D_{\left(\frac{1}{g} \right)'}(s(x_{I_1}))$ also denotes a diagonal matrix. Using analogous arguments as for the forward mapping, we obtain the bound:
 \begin{align*}
     \Lip(F^{-1}) \leq \max_{i \in [|I_1|]} (1, D_{\frac{1}{g}}(s(x_{I_1})_i)) + \sup_{y\in [a^*, b^*]^d}\|M^*(y)\|_2. 
 \end{align*}
 Hence, we need to further bound the spectral norm of $M^*$. Since we assumed that $g$ is continuously differentiable, both $\frac{1}{g}$ and $\left(\frac{1}{g}\right)'$ will be bounded over closed intervals like $[a^*,b^*]^d$. Furthermore, translation $t$ is assumed to be globally continuous and thus also bounded over closed intervals. We will denote these upper bounds by $c_{\frac{1}{g}}$, $c_{\left(\frac{1}{g}\right)'}$ and $c_t$.
 Then we obtain the bound:
 \begin{align*}
     \sup_{y \in [a^*, b^*]^d}\|M^*(y)\|_2^2 \leq \max(|a^*|, |b^*|) \cdot c_{\left(\frac{1}{g}\right)'} \cdot \Lip(s) + c_{\left(\frac{1}{g}\right)'} \cdot \Lip(s) \cdot c_t + c_{\frac{1}{g}} \cdot \Lip(t).
 \end{align*}
 Hence, we can bound the Lipschitz constant  of the inverse of an affine block over the interval $[a^*, b^*]^d$ as:
 \begin{align*}
     \Lip(F^{-1}) \leq  \max_{i \in [|I_1|]} (1, c_{\frac{1}{g}}) +  \max(|a^*|, |b^*|) \cdot c_{\left(\frac{1}{g}\right)'} \cdot \Lip(s) + c_{\left(\frac{1}{g}\right)'} \cdot \Lip(s) \cdot c_t + c_{\frac{1}{g}} \cdot \Lip(t).
 \end{align*}
\end{proof}

\subsection{Proof of Theorem \ref{thm:qualDiffCoupling}}
\label{app:proofQualStatement}
\begin{proof} 
Proof of statement (i): \\
The larger bi-Lipschitz bounds of the affine models compared to the additive models follows directly from Lemmas \ref{lem:lipBoundAdditive} and \ref{lem:lipBoundAffine}, as the affine bounds have only additional non-zero components in their bounds.

Proof of statement (ii): \\
The global Lipschitz bound for additive models is given in Lemma \ref{lem:lipBoundAdditive}. In Lemma \ref{lem:lipBoundAffine} we also provide local bounds for $x \in [a,b]^d$ for affine models. What remains to be show, is that there are no bi-Lipschitz bounds that hold globally for $x \in \mathbb{R}^d$.

For this, consider a simplified affine model as $F(x_1, x_2) = x_1  f(x_2)$ with $\frac{dF}{dx_2} = x_1 \frac{df}{dx_2}$. This derivative is unbounded if $x_1$ is allowed to be arbitrarily large, hence there is no global bound. 

The same argument caries over to the full affine model, since both forward and inverse Jacobian involve the terms $D_I(x_{I_2})$ and $D_I(y_{I_2})$, respectively (see proofs of Lemma \ref{lem:lipBoundAdditive} and \ref{lem:lipBoundAffine}). When $x$ for the forward and $y$ for the inverse are not assumed to be bounded, the Jacobian can have unbounded Frobenius norm. Not that this only holds if $g$ and $s$ are not constant functions, which is why we need to assume this property to hold (otherwise the affine model would collapse to an additive model). The unbounded Jacobian in turn induces a unbounded spectral norm due to the equivalence of norms in finite dimensions and thus no Lipschitz bound can be obtained.  
\end{proof}

\section{DETAILS FOR 2D DENSITY MODELING EXPERIMENTS}
\label{app:2DcheckerboardExpDetails}
Here we provide experimental details for the 2D checkerboard experiments from Section~\ref{sec:oodInstabilityExp}.
The samples are shown in Figure \ref{fig:samplesCheckerboard}, which shows that the data lies within $x_1 \in [-4,4]$ and $x_2 \in [-4, 4]$ and exhibit jumps at the border of the checkerboard.

\begin{figure}[h]
\centering
	\includegraphics[width=0.32\linewidth]{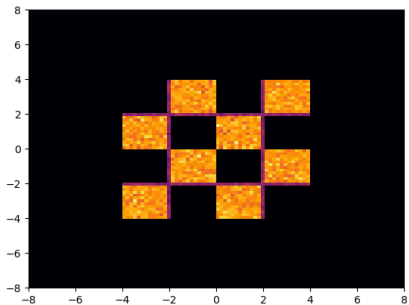}
\caption{\textbf{Samples from 2D checkerboard.}
}
\label{fig:samplesCheckerboard}
\end{figure}

We train the following three large models using the residual flows repository and the corresponding hyperparameter settings\footnote{from \url{https://github.com/rtqichen/residual-flows/blob/master/train_toy.py}}. For completeness, we provide the hyperparameters in Table \ref{table:train-checkerboard} below.

 \begin{table}[h]
 \centering
 \begin{tabular}{cc}
 \hline
  \textbf{Hyperparameter}       &  \textbf{Value}\\
  \hline
 Batch Size      & 500   \\
 Learning Rate   & 1e-3  \\
 Weight Decay    & 1e-5     \\
 Optimizer       & Adam \\
Hidden Dim       & 128-128-128-128 \\
 Num Blocks      & 100 \\
 Activation      & swish \\
 ActNorm         & False \\
 \hline
 \end{tabular}
 \caption{\textbf{Hyperparameters for training 2D models on checkerboard data.}}
 \label{table:train-checkerboard}
 \end{table}
To consider the effect of different architecture settings on stability we train three INN variants:
\begin{enumerate}
    \item Affine coupling model with standard sigmoid scaling for the elementwise function $g$ from \eqref{eq:affineCoupling}, which results in a scaling in $(0,1)$.
    \item Modified affine coupling model with a scaling in $(0.5, 1)$ by a squashed sigmoid.
    \item Residual flow \citep{chen2019residual} with a coefficient of $0.8$ for spectral normalization to satisfy the contraction requirement from i-ResNets \citep{behrmann2019}.
\end{enumerate}
In addition to the reconstruction error in Figure \ref{fig:checkerboardReconError} (main body of the paper), we visualize the learned density function for the models above in Figure \ref{fig:2dtoyDensities}. Most importantly, the instability a the affine model is clearly visible in the NaN density values outside the data domain. The more stable affine model with the modified scaling does not exhibit this failure, but still appears to learn a density with large slopes. The residual flow on the other hand learns a more stable distribution. Lastly, we note that the trained models on this 2D data were deliberately large to emphasize failures even in a low dimensional setting.

\begin{figure}
\centering
\begin{subfigure}[t]{0.32\linewidth}
	\centering
	\includegraphics[width=\linewidth]{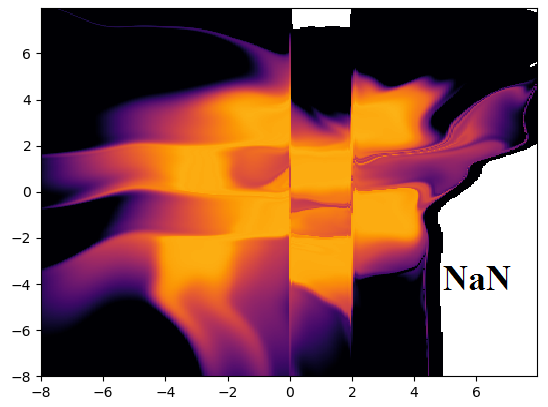}
\end{subfigure}
\begin{subfigure}[t]{0.32\linewidth}
	\centering
	\includegraphics[width=\linewidth]{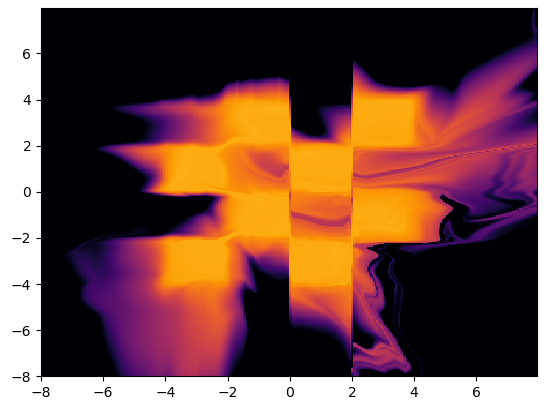}
\end{subfigure}
\begin{subfigure}[t]{0.32\linewidth}
	\centering
	\includegraphics[width=\linewidth]{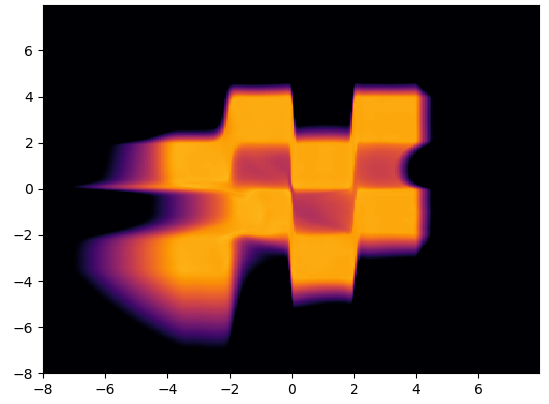}
\end{subfigure}
\caption{\textbf{Learned density on 2D checkerboard data:} \textbf{Left:} standard affine model with sigmoid scaling; \textbf{Middle}: more stable affine model with scaling in [0.5, 1]; \textbf{Right}: Residual Flow \citep{chen2019residual}. 
}
\label{fig:2dtoyDensities}
\end{figure}

\section{DETAILS FOR OOD AND SAMPLE EVALUATION EXPERIMENTS}
\label{app:OODsampleExp}

The examples from each OOD dataset were normalized such that pixel values fell into the same range as each model was originally trained on: $[-0.5, 0.5]$ for Glow, and $[0, 1]$ for Residual Flows.
The OOD datasets were adopted from previous studies including those by \cite{hendrycks2016baseline,liang2017enhancing}. Extended results are shown in Table \ref{table:ood-reconstructionsExtended}.
The pre-trained models we used for OOD evaluation were from the official Github repositories corresponding to the respective papers.
The pre-trained Glow model used $1 \times 1$ convolutions.

\begin{table}[H]
\setlength{\tabcolsep}{3pt}
\centering
\begin{tabular}{ccccc}
\toprule
\textbf{CIFAR-10} & \textbf{SVHN} & \textbf{Uniform} & \textbf{Rademacher} & \textbf{Places}  \\
\includegraphics[width=0.18\textwidth]{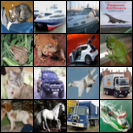} &
\includegraphics[width=0.18\textwidth]{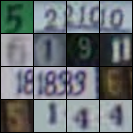} &
\includegraphics[width=0.18\textwidth]{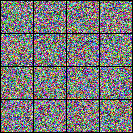} & 
\includegraphics[width=0.18\textwidth]{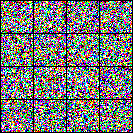} &
\includegraphics[width=0.18\textwidth]{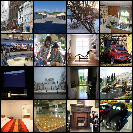} \\
\midrule
\includegraphics[width=0.18\textwidth]{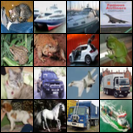} &
\includegraphics[width=0.18\textwidth]{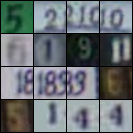} &
\includegraphics[width=0.18\textwidth]{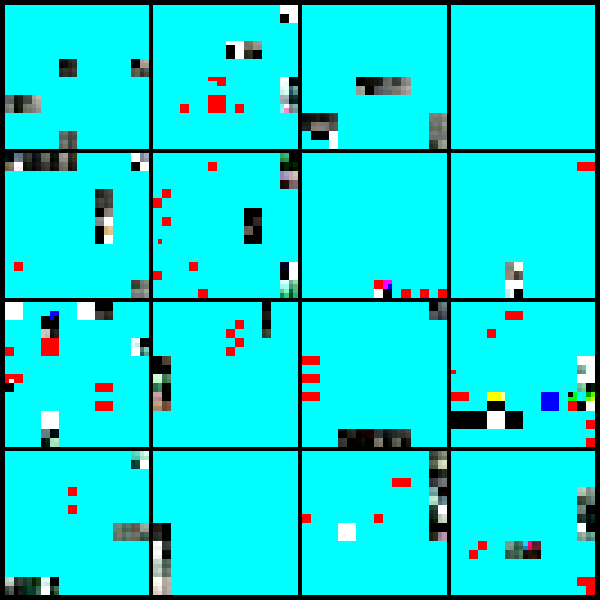} &
\includegraphics[width=0.18\textwidth]{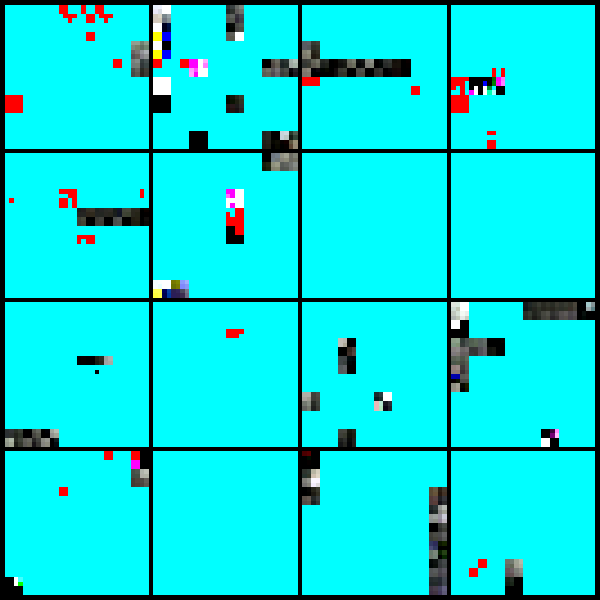} &
\includegraphics[width=0.18\textwidth]{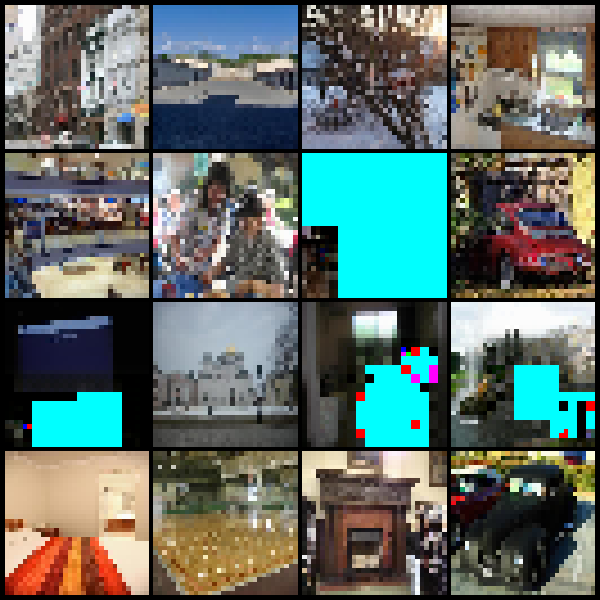} \\
\bottomrule
\end{tabular}
\caption{\textbf{Reconstructions of OOD data, using a CIFAR-10 pre-trained Glow model.} Broken reconstructions that contain \texttt{inf} values are {\color{cyan} highlighted in cyan}.}
\label{table:ood-reconstructionsExtended}
\end{table}

\paragraph{Details for Training Glow}
This section details the Glow models trained with the Flow/MLE objective reported in Section~\ref{sec:flowInDistr}.
These models have slightly different hyperparameters than the pre-trained Glow models.
The controlled hyperparameters are listed in Table~\ref{table:train-glow}.

 \begin{table}[h]
 \centering
 \begin{tabular}{cc}
 \hline
  \textbf{Hyperparameter}       &  \textbf{Value}\\
  \hline
 Batch Size      & 64   \\
 Learning Rate   & 5e-4  \\
 Weight Decay    & 5e-5     \\
 Optimizer       & Adamax \\
 Flow Permutation & Reverse \\
 \# of Layers & 3 \\
 \# of Blocks per Layer & 32 \\
 \# of Conv Layers per Block & 3 \\
 \# of Channels per Conv & 512 \\
 
 \hline
 \end{tabular}
 \caption{\textbf{Hyperparameters for trained Glow.}}
 \label{table:train-glow}
 \end{table}
The LR is warmed up linearly for 5 epochs. 
The training data is augmented with 10\% translation, and random horizontal flips. 
The additive and affine models used the coupling blocks described in Eq.~\ref{eq:additiveCoupling} and Eq.~\ref{eq:affineCoupling}.

\paragraph{Details for Evaluating Glow}
Stability statistics reported included reconstruction errors and condition numbers (Figure~\ref{fig:flow-stats}).  Reconstruction error reported is the pixel-wise $\ell_2^2$ distance measured in the  $[-0.5, 0.5]$ range. Input and reconstruction pairs are visualized in Figure~\ref{fig:flow-recons}. Condition numbers are computed numerically as follows: 1. gradient w.r.t. each dimension of the network output is computed sequentially to form the Jacobian, 2. SVD of the Jacobian is computed using ``numpy.linalg.svd'', 3. the reported condition number is the ratio of the largest to smallest singular value.

\begin{figure}[t]
\centering
\includegraphics[width=\linewidth]{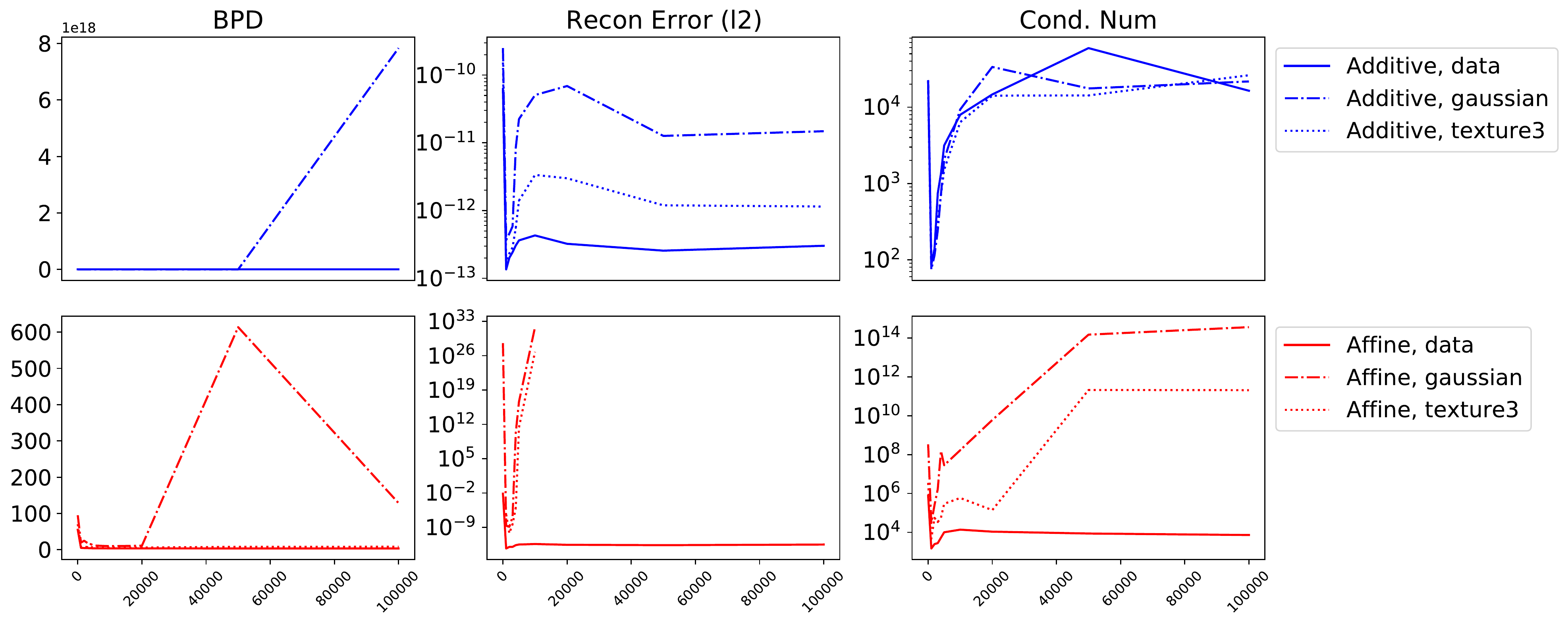}
\caption{\textbf{Stability statistics of Flow model} over gradient steps on the x-axis.  
}
\label{fig:flow-stats}
\end{figure}

\begin{figure}[t]
\centering
\includegraphics[width=\linewidth]{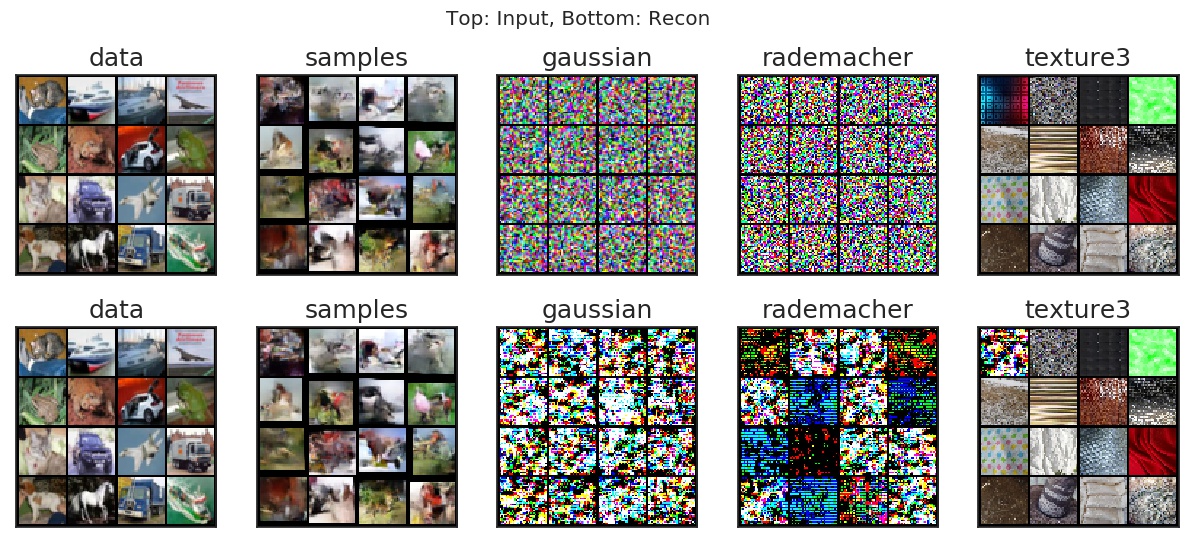}
\caption{\textbf{Reconstructions using Affine model.} Top row is the input, and bottom row is the reconstruction. In the case of perfect reconstruction, the two rows should look identical.  Notice that the model is not only non-invertible for the white noise, the top-left image of ``texture 3'' also fails to be reconstructed. Some of the images in ``samples'' also look slightly different. 
}
\label{fig:flow-recons}
\end{figure}

\section{Non-Invertibility in Data Distribution}
\label{app:invertibilityAttack}

In this section we expose non-invertibility in the data distribution by optimizing within the dequantization distribution of a datapoint to find regions that are poorly reconstructed by the model.
Note that the inputs found this way are valid samples from the training distribution. We performed this analysis with three NFs trained on CelebA64 with commonly-used 5-bit uniform dequantization: 1) affine Glow with standard sigmoid scaling; 2) affine Glow model with modified scaling in $(0.5, 1)$; and 3) a Residual Flow.
Starting from an initial training datapoint $x$, we optimized in input-space to find a perturbed example $x'$ that induces large reconstruction error using Projected Gradient Descent \citep{madry2017towards}:
\begin{equation} \label{eq:attack}
    \argmax_{||x' - x||_\infty < \epsilon} ||x' - F^{-1}(F(x')) ||_2,
\end{equation}
where $\epsilon$ is determined by the amount of uniform dequantization, see Appendix~\ref{app:invertibilityAttackdetails} for details.
As shown in Figure~\ref{fig:attackAffineModel}, this attack reveals the instability of affine models and underlines the stability of Residual Flows \citep{chen2019residual}.
To conclude, we recommend Residual Flows for a principled application of NFs.

\begin{figure}[t]
\centering
\begin{subfigure}[c]{0.4\linewidth}
	\centering
	\begin{tikzpicture}
      \node (img1)  {\includegraphics[trim={0cm 0.4cm 1.6cm 1.45cm},clip,width=0.85\linewidth]{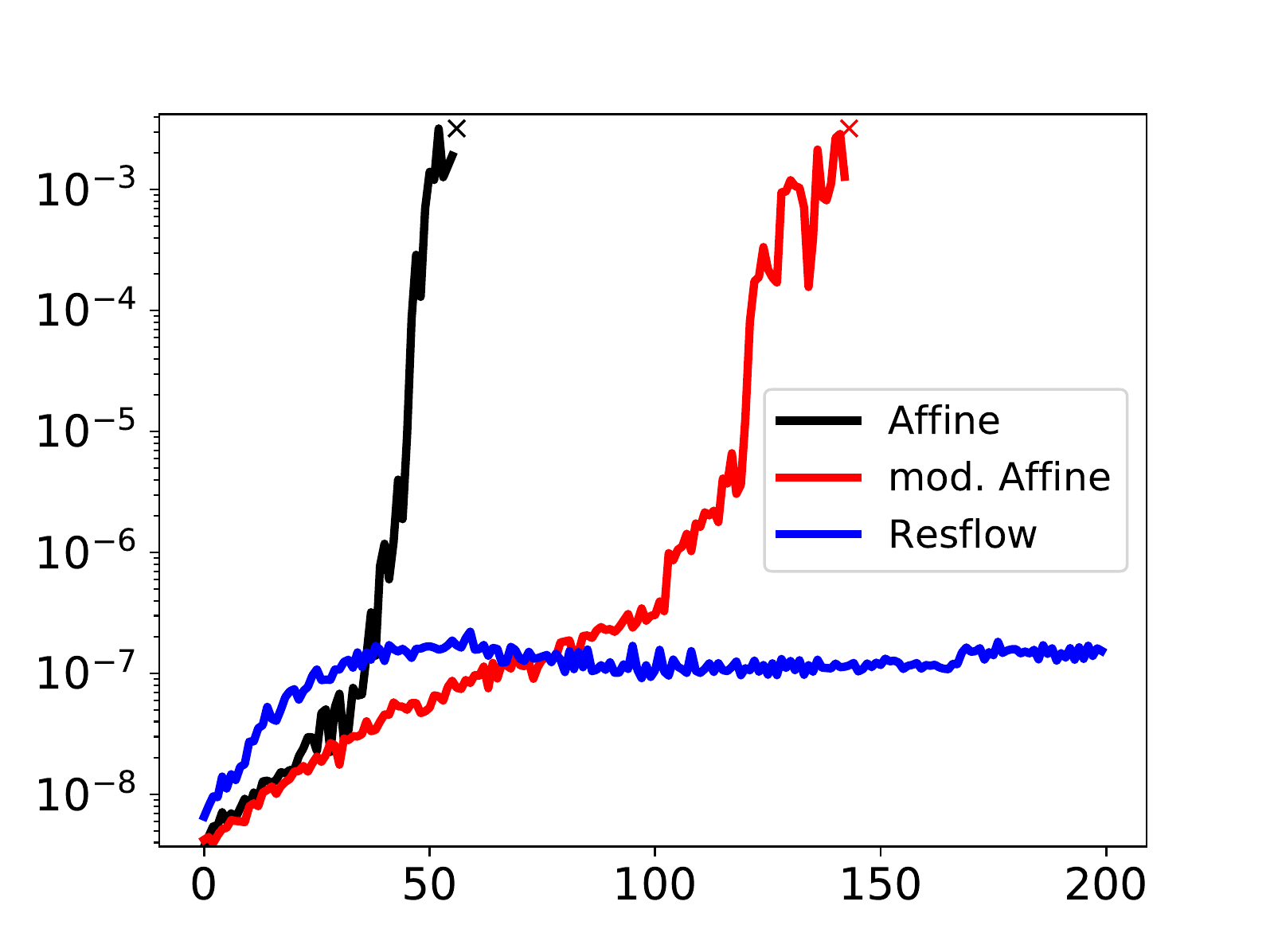}};
      \node[below=of img1, node distance=0cm,xshift=0.3cm, yshift=1.2cm,font={\footnotesize \color{black}}] {Attack Iteration};
      \node[left=of img1, node distance=0cm, rotate=90, anchor=center,xshift=0.1cm,yshift=-1cm,font={\footnotesize \color{black}}] {$\ell_2$ Recons. Error};
    \end{tikzpicture}
\end{subfigure}
\begin{subfigure}[c]{0.25\linewidth}
	\centering
	\includegraphics[width=\linewidth]{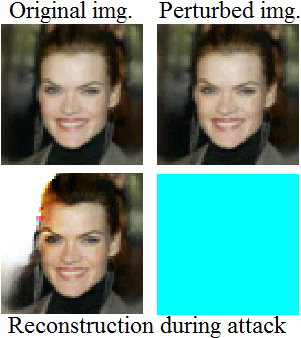}
\end{subfigure}
    \centering
    \caption{\textbf{Non-invertibility within the data-distribution of CelebA64 revealed by an invertibility attack.} \textbf{Left:} $\ell_2$-reconstruction error obtained by the attack. The reconstructions of both affine models explode to \texttt{NaN} (indicated by $\times$), while the Residual Flow remains stable. \textbf{Right:} Despite no visual differences between the original and perturbed image, reconstruction fails for the affine model.}
    \label{fig:attackAffineModel}
\end{figure}

\begin{figure}
    \centering
    \includegraphics[width=0.6\textwidth]{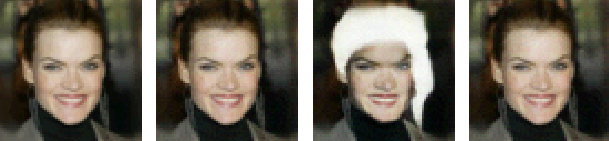}
    \caption{\textbf{Checking invertibility of Residual Flow on CelebA64.} From left to right, the images show: original image, perturbed image within dequantization range, reconstruction with 5 fixed point iterations, reconstruction with converged fixed point iterations. While the attack is able to find regions in the data distribution, which exhibit reconstruction error, the error is only due to a limited number of inversion steps.}
    \label{fig:attacksResflow}
\end{figure}

\subsection{Experimental Details of Invertibility Attack}
\label{app:invertibilityAttackdetails}
In order to probe the invertibility of trained flow models over the entire data distribution, we trained three INN models on CelebA64 (with 5-bit dequantization) using the residual flows repository and the corresponding hyperparameter settings\footnote{from \url{https://github.com/rtqichen/residual-flows/blob/master/train_img.py}}. For completeness, we provide the hyperparameters in the Table \ref{table:train-celebA}.
 \begin{table}[h]
 \centering
 \begin{tabular}{cc}
 \hline
  \textbf{Hyperparameter}       &  \textbf{Value}\\
  \hline
 Batch Size      & 64   \\
 Learning Rate   & 1e-3  \\
 Weight Decay    & 0    \\
 Optimizer       & Adam (ResFlow), Adamax (Affine)\\
 Warmup iter     & 1000 \\
Inner Dim       & 512 \\
 Num Blocks      & 16-16-16-16 \\
 ActNorm         & True \\
 Activation      & Swish (ResFlow), ELU (Affine) \\
 Squeeze First   & True \\
 Factor Out      & False \\
 FC end          & True \\
 Num Exact Terms & 8 \\
 Num Trace Samples & 1 \\
 Padding Distribution & uniform \\
 \hline
 \end{tabular}
 \caption{\textbf{Hyperparameters for training affine and ResFlow models on CelebA64.}}
 \label{table:train-celebA}
 \end{table}

For evaluation we used the checkpoint obtained via tracking the lowest test bits-per-dimension. As a summary, we obtained the following three models:
\begin{enumerate}
    \item An affine coupling model with standard sigmoid scaling for the elementwise function $g$ from Eq.~\ref{eq:affineCoupling}, which results in a scaling in $(0,1)$.
    \item A modified affine coupling model with a scaling in $(0.5, 1)$ by a squashed sigmoid.
    \item A Residual Flow \citep{chen2019residual} with a coefficient of $0.98$ for spectral normalization to satisfy the contraction requirement from i-ResNets \citep{behrmann2019}.
\end{enumerate}

To explore the data distribution, we first need to consider the pre-processing step that is used to turn the quantized digital images into a continuous probability density: dequantization, see e.g. \citep{ho2019flowpp} for a recent discussion on dequantization in normalizing flows. Here we used the common uniform dequantization with 5-bit CelebA64 data, which creates noisy samples:
\begin{align*}
    \hat{x} = \frac{x + \delta}{2^5}, \text{ where} \quad x \in \{0, \ldots, 2^5\}^d \quad \text{and} \quad  \delta \sim \text{uniform}\{0,1\},
\end{align*}
where $d=64\cdot 64 \cdot 3$. Thus, around each training sample $x$ there is a uniform distribution which has equal likelihood under the data model. If we explore this distribution with the invertibility attack below, we can guarantee by design that we stay within the data distribution. This is fundamentally different to the classical setting of adversarial attacks \citep{szegedy2013intriguing}, where it is unknown if the crafted inputs stay within the data distribution.

These models were then probed for invertibility with the data distribution using the following attack objective: 
\begin{equation} 
    \argmax_{||x' - \hat{x}||_\infty < \epsilon} L(x') := ||x' - F^{-1}(F(x')) ||_2,
\end{equation}
where $\epsilon =\frac{0.5}{2^5}$, $\hat{x} = \frac{x + 0.5}{2^5}$ and $x$ was selected from the training set of CelebA64. By using this $\ell_\infty$ constraint, we thus make sure that we explore only within the data distribution. The optimization was performed using projections to fulfill the constraint and by signed gradient updates as:
\begin{align*}
    x'_{k+1} = x'_k + \alpha \; \text{sign}(\nabla_{x'} L(x'_{k})),
\end{align*}
with $\alpha=5e^{-4}$ and $k=\{1, \ldots, 200\}$. 

In addition to the visual results in Figure \ref{fig:attackAffineModel}, we present visual results from the Residual Flow model in Figure \ref{fig:attacksResflow}. For this model, we have to remark one important aspect: due to memory constraint on the used hardware (NVIDIA GeForce GTX 1080, 12 GB memory), we could only use 5 fixed-point iterations for the inverse of the i-ResNet \citep{behrmann2019}, since computing backprop through this iteration is memory intensive. This is why, we observe visible reconstruction errors in Figure \ref{fig:attacksResflow}, which are due to a small number of iterations as the reconstruction on the right with more iterations shows. However, we note that the comparison is not entirely fair since the gradient-based attack has only access to the 5-iteration inverse, while for affine models it has access to the full analytical inverse. Thus, future work should address this issue e.g. by considering non-gradient based attacks. 

Furthermore, we note that the bi-Lipschitz bounds for i-ResNets from \citep{behrmann2019} (see also overview Table \ref{tab:overviewTable}) guarantee stability, which is why using more sophisticated attacks for the Residual Flow model should not result in invertibility failures as in affine models.

\section{EXPERIMENTAL DETAILS FOR 2D TOY REGRESSION}
\label{app:2d-toy-regression-details}

In this section we provide experimental details for the 2D toy regression experiment in Section~\ref{sec:classification}.

\paragraph{Data.}
To generate the toy data, we sampled $10,000$ input-target pairs $((x_1, y_1), (x_2, y_2))$ distributed according to the following multivariate normal distributions:
\begin{align}
(x_1, y_1) &\sim \mathcal{N}\left( \mathbf{0}, \begin{bmatrix} 1 & 0 \\ 0 & \epsilon
\end{bmatrix} \right) \\
(x_2, y_2) &\sim \mathcal{N}\left( \mathbf{0}, \begin{bmatrix} 1 & 1 \\ 1 & 1 \end{bmatrix} \right).
\end{align}
We used $\epsilon=$1e-24 so that we are essentially mapping $\mathbf{x}$ onto a 1D subspace.

\paragraph{Model.}
We used an affine Glow model with reverse permutations and ActNorm \citep{kingma2018glow}, where each block was a multi-layered perceptron (MLP) with two hidden layers of 128 units each and ReLU activations.
We trained the model in full-batch mode using Adam with fixed learning rate 1e-4 for 40,000 iterations.

\paragraph{Additional Results.}
Figure~\ref{fig:toy-training-plots} compares the mean squared error (MSE) loss, numerical reconstruction error, and condition number of the unregularized and regularized Glow models trained on this toy task.
The regularized model adds the normalizing flow objective with a small coefficient of 1e-8.
Here we see that the regularized model still achieves low MSE, while remaining stable with reconstruction error more than four orders of magnitude smaller than the unregularized model, and a condition number six orders of magnitude smaller.

\begin{figure*}[htbp]
\centering
\begin{subfigure}[t]{0.31\linewidth}
	\centering
	\includegraphics[width=\linewidth]{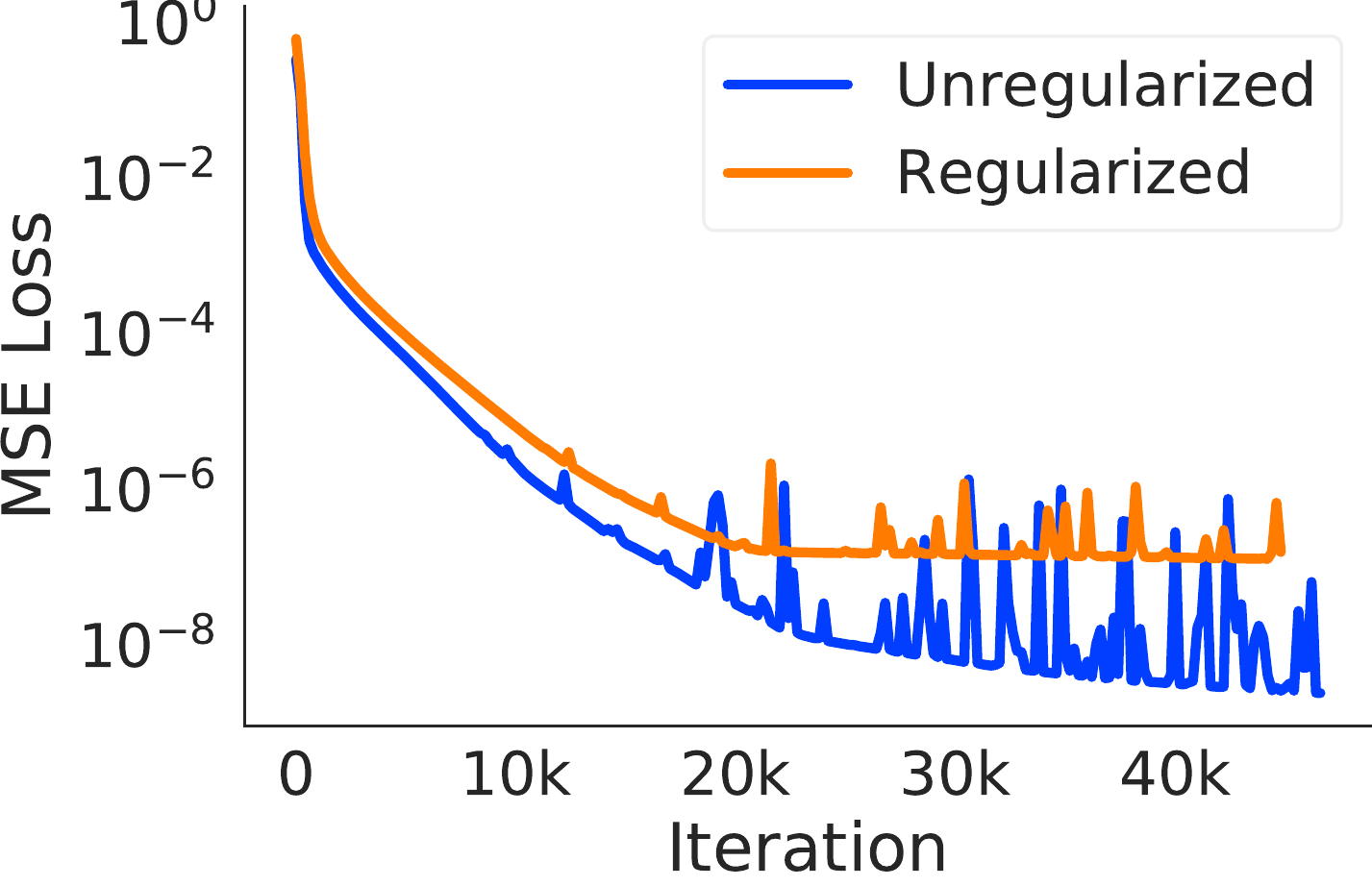}
\end{subfigure}
\hfill
\begin{subfigure}[t]{0.31\linewidth}
	\centering
	\includegraphics[width=\linewidth]{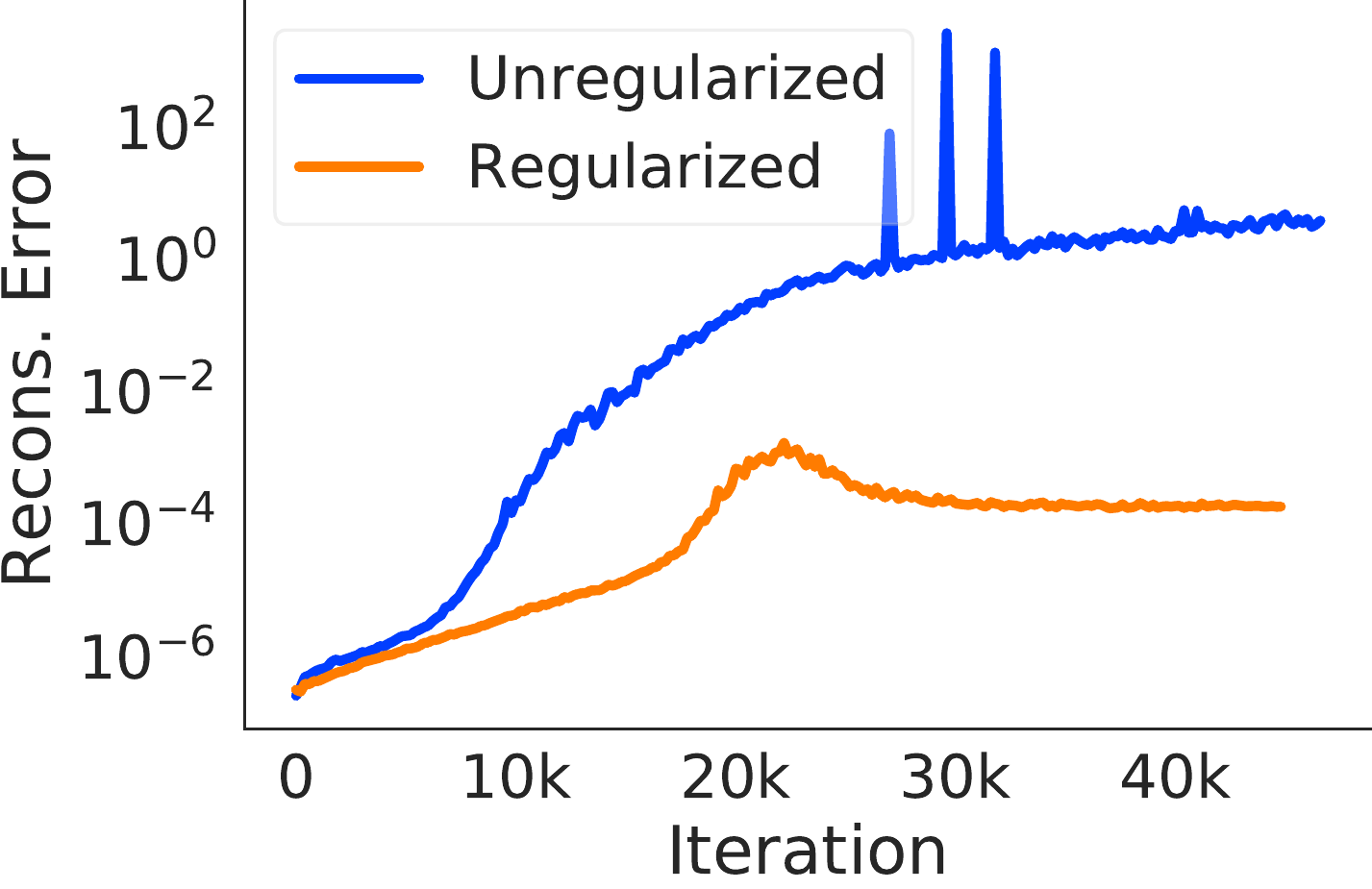}
\end{subfigure}
\hfill
\begin{subfigure}[t]{0.31\linewidth}
	\centering
	\includegraphics[width=\linewidth]{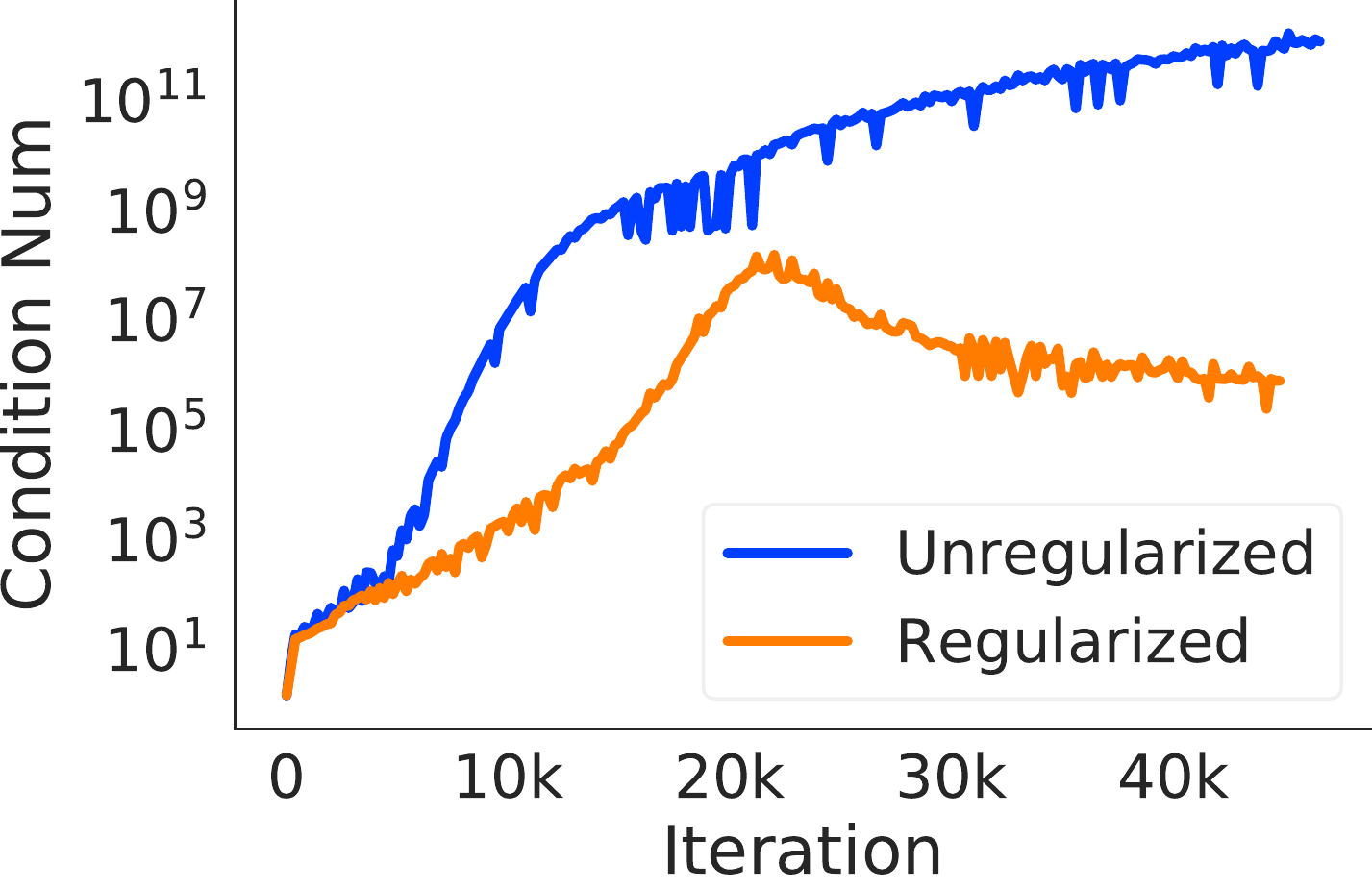}
\end{subfigure}
\caption{\textbf{A comparison of the mean-squared error (MSE) loss, reconstruction error, and condition number of an unregularized and a regularized Glow model trained on the toy 2D regression task.} The regularized model uses the normalizing flow objective with coefficient 1e-8.}
\label{fig:toy-training-plots}
\end{figure*}

\section{EXPERIMENTAL DETAILS FOR CLASSIFICATION EXPERIMENTS}
\label{app:classificationExpDetails}

In this section we provide details on the experimental setup for the classification experiments in Section~\ref{sec:classification}, as well as additional results.
We used the PyTorch framework~\citep{paszke2019pytorch} for our INN implementations.
All experiments were run on NVIDIA Titan Xp GPUs.

\paragraph{Experimental Setup.}

All the additive and affine coupling-based models we used have the same architecture, that consists of 3 levels, 16 blocks per level, and 128 hidden channels.
Each level consists of a sequence of residual blocks that operate on the same dimensionality.
Between levels, the input is spatially downsampled by $2\times$ in both width and height, while the number of channels is increased by $4\times$.
Each block consists of a chain of $3 \times 3 \to \text{ReLU} \to 1 \times 1 \to \text{ReLU} \to 3 \times 3$ convolutions.
Because the dimension of the output of an INN is equal to that of the input (e.g., we have a 3072-dimensional feature space for $3\times 32 \times 32$ CIFAR-10 images), we use a projection layer ($\text{1D BatchNorm} \to \text{ReLU} \to \text{Linear}$) to map the feature representation to 10 dimensions representing class logits.
We trained the models on CIFAR-10 for 200 epochs, using Adam with initial learning rate 1e-4, decayed by a factor of 10 at epochs 60, 120, and 160 (following~\citep{zhang2018three}).
We used standard data normalization (transforming the data to have zero mean and unit variance), and data augmentation (random cropping and horizontal flipping).
The hyperparameters we used are summarized in Table~\ref{tab:hyperp-classifier}.

 \begin{table}[h]
 \centering
 \begin{tabular}{cc}
 \hline
  \textbf{Hyperparameter}       &  \textbf{Value}\\
  \hline
 Batch Size      & 128   \\
 Learning Rate   & 1e-4 (decayed by 10x at epochs $ \{60, 120, 160 \}$) \\
 Weight Decay    & 0     \\
 Optimizer       & Adam($\beta_1=0.9$, $\beta_2=0.999$) \\
 \# of Layers & 3 \\
 \# of Blocks per Layer & 16 \\
 \# of Conv Layers per Block & 3 \\
 \# of Channels per Conv & 128 \\
 
 \hline
 \end{tabular}
 \caption{\textbf{Hyperparameters for INN classifiers.}}
 \label{tab:hyperp-classifier}
 \end{table}

\begin{table*}[h]
\centering
\begin{tabular}{cc|cccccc}
\toprule
  \textbf{Method} & \textbf{Coeff.} & \textbf{Inv?} & \textbf{Test Acc} & \textbf{Recons. Err.} & \textbf{Cond. Num.} & \textbf{Min SV} & \textbf{Max SV}  \\
  \hline
   Add. C, None  &  0   & \cmark & 89.73 & 4.3e-2 & 7.2e+4 & 6.1e-2 & 4.4e+3 \\
  \hline
   Add. C, FD    & 1e-3 & \cmark & 88.34 & 5.9e-4 & 4.3e+1 & 2.1e-1 & 9.1e+0 \\
   Add. C, FD    & 1e-4 & \cmark & 89.10 & 9.3e-4 & 2.0e+2 & 9.8e-2 & 2.0e+1 \\
   Add. C, FD    & 5e-5 & \cmark & 89.71 & 1.1e-3 & 3.0e+2 & 8.7e-2 & 2.6e+1 \\
  \hline
   Add. C, NF    & 1e-3 & \cmark & 84.11 & 5.5e-4 & 6.5e+3 & 3.2e-2 & 2.1e+2  \\
   Add. C, NF    & 1e-4 & \cmark & 88.65 & 6.5e-4 & 4.7e+3 & 3.4e-2 & 1.6e+2 \\
   Add. C, NF    & 1e-5 & \cmark & 89.52 & 9.9e-4 & 1.7e+3 & 3.9e-2 & 6.6e+1 \\
   
   \bottomrule
   Aff. S, None  &   0  & \xmark & 87.64 & \texttt{NaN} & 9.9e+13 & 1.7e-12 & 1.6e+2       \\
  \hline
   Aff. S, FD    & 1e-3 & \cmark & 86.85 & 2.0e-5 & 4.0e+1  & 1.8e-1 & 7.1e+0 \\
   Aff. S, FD    & 1e-4 & \cmark & 88.14 & 3.1e-5 & 1.2e+2  & 1.1e-1 & 1.3e+1 \\
   Aff. S, FD    & 5e-5 & \cmark & 88.13 & 3.9e-5 & 1.8e+2  & 9.6e-2 & 1.7e+1 \\
  \hline
   Aff. S, NF    & 1e-3 & \cmark & 80.75 & 3.1e-5 & 2.2e+4  & 2.0e-2 & 4.4e+2 \\
   Aff. S, NF    & 1e-4 & \cmark & 87.87 & 2.3e-5 & 5.8e+3  & 3.1e-2 & 1.8e+2 \\
   Aff. S, NF    & 1e-5 & \cmark & 88.31 & 2.8e-5 & 8.5e+2  & 3.7e-2 & 3.2e+1 \\
   
   \bottomrule
   Aff. C, None  & 0    & \xmark & 89.07 & \texttt{Inf}    & 8.6e+14 & 1.9e-12 & 1.7e+3 \\
  \hline
   Aff. C, FD    & 1e-3 & \cmark & 88.24 & 6.0e-4 & 4.2e+1  & 2.0e-1  & 8.4e+0 \\
   Aff. C, FD    & 1e-4 & \cmark & 89.47 & 9.6e-4 & 1.6e+2  & 9.6e-2  & 1.5e+1 \\
  \hline
   Aff. C, NF    & 1e-3 & \cmark & 83.28 & 7.0e-4 & 1.0e+4  & 3.6e-2  & 3.8e+2 \\
   Aff. C, NF    & 1e-4 & \cmark & 88.64 & 8.6e-4 & 9.6e+3  & 2.7e-2  & 2.6e+2 \\
   Aff. C, NF    & 1e-5 & \cmark & 89.71 & 1.3e-3 & 2.2e+3  & 3.5e-2  & 7.7e+1 \\
\bottomrule
\end{tabular}
\caption{\textbf{Effect of the regularization coefficients for both finite differences regularization (denoted FD) and the normalizing flow regularizer (NF), when training several INN classifiers on CIFAR-10.}
All experiments in this table used Glow-like architectures with either additive or affine coupling, and either shuffle permutations or 1$\times$1 convolutions.
In the Method column, ``Add.'' and ``Aff.'' denote additive and affine coupling, respectively; ``C.'' and ``S.'' denote $1 \times 1$ convolutions and shuffle permutations, respectively.
Note that for the affine model with $1\times 1$ convolutions, FD coefficient 5e-5 was too small to ensure stabilization, so it is not included above.
}
\vspace{-0.5cm}
\label{table:regularization-comparison}
\end{table*}

\begin{figure*}[htbp]
\centering
\begin{subfigure}[t]{0.33\linewidth}
	\centering
    \includegraphics[width=\linewidth]{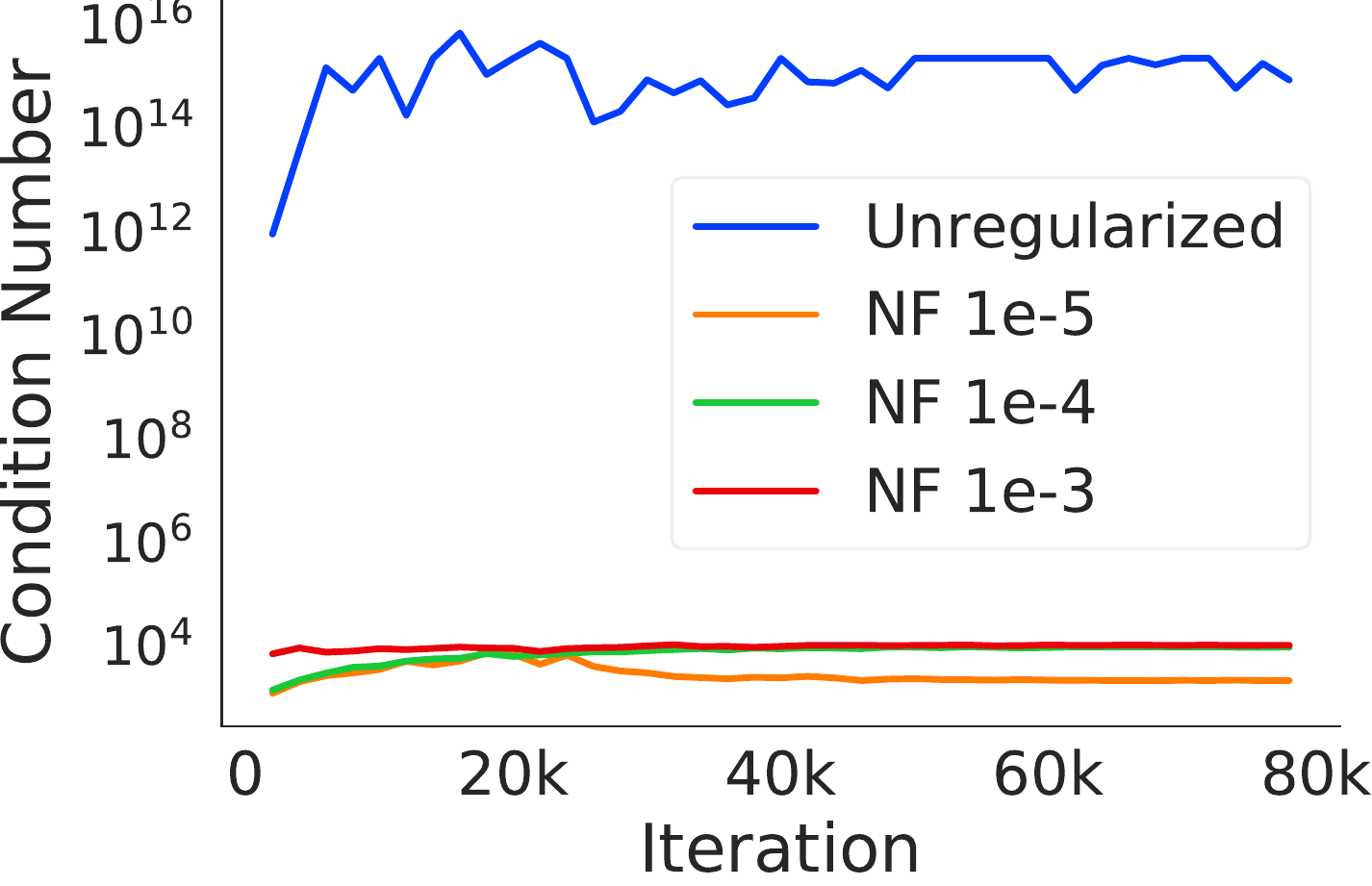}
\end{subfigure}
\quad \quad
\begin{subfigure}[t]{0.33\linewidth}
	\centering
    \includegraphics[width=\linewidth]{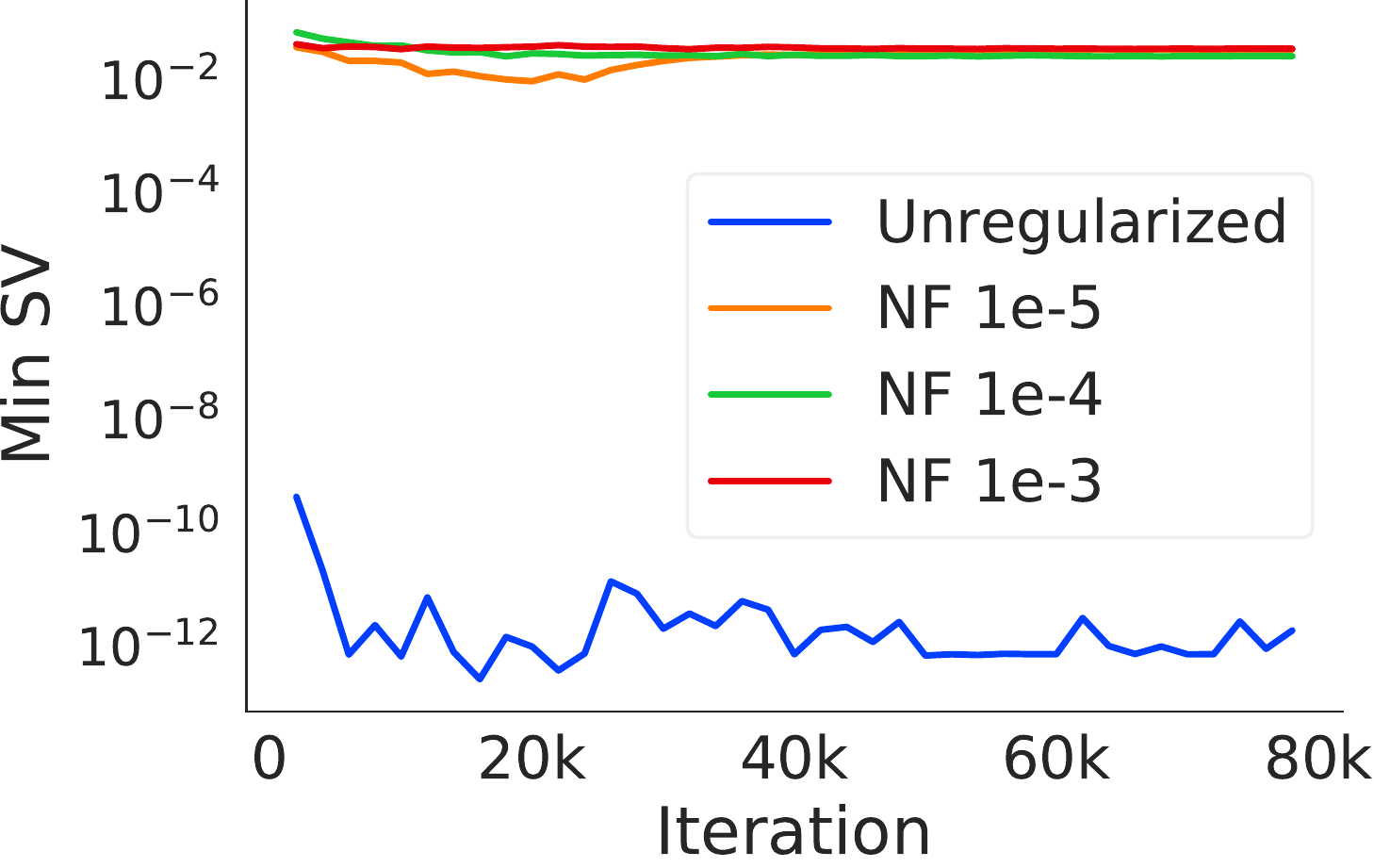}
\end{subfigure}
\\
\begin{subfigure}[t]{0.33\linewidth}
	\centering
    \includegraphics[width=\linewidth]{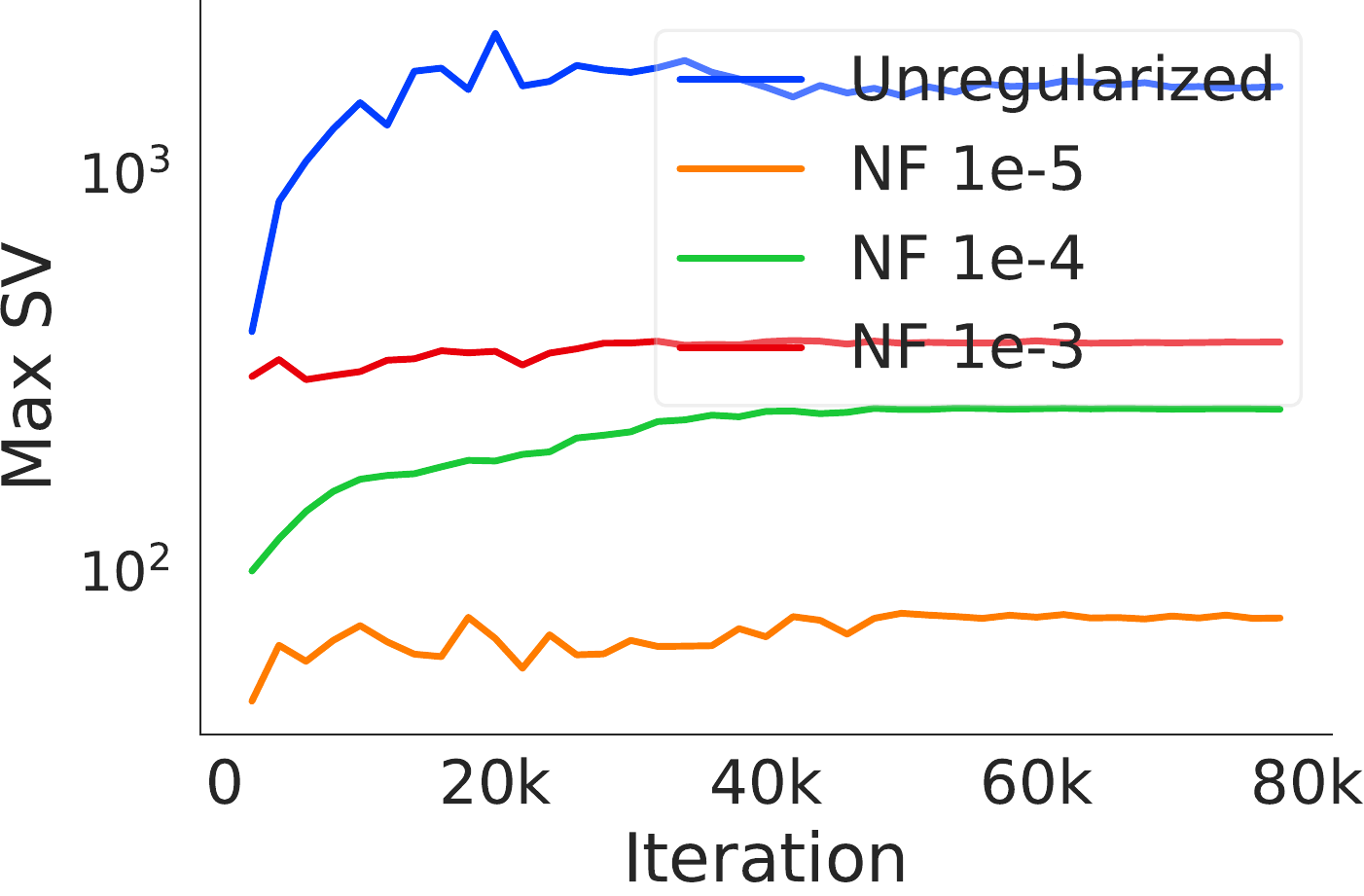}
\end{subfigure}
\quad \quad
\begin{subfigure}[t]{0.33\linewidth}
	\centering
    \includegraphics[width=\linewidth]{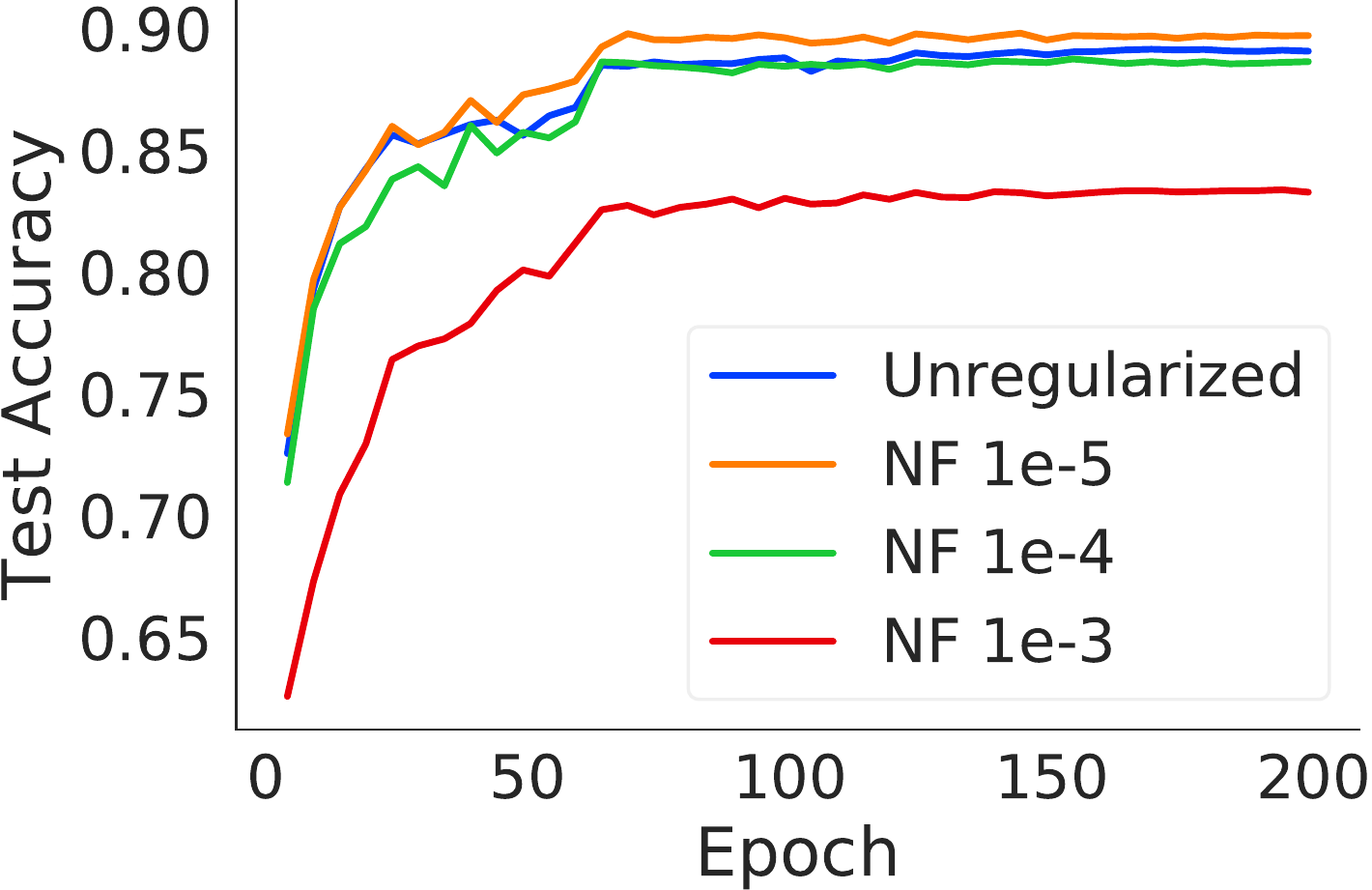}
\end{subfigure}
\caption{Comparison of regularization strengths for the \textbf{normalizing flow objective} during training of a Glow-like architecture (affine coupling, with $1 \times 1$ convolutions and ActNorm) on CIFAR-10. \textbf{Top-Left:} condition numbers for the un-regularized and regularized models; \textbf{Top-Right/Bottom-Left:} min/max singular values of the Jacobian; \textbf{Bottom-Right:} test accuracies.
}
\label{fig:cls-nf-comparison}
\vspace{-0.5cm}
\end{figure*}

\begin{figure*}[htbp]
\centering
\begin{subfigure}[t]{0.33\linewidth}
	\centering
    \includegraphics[width=\linewidth]{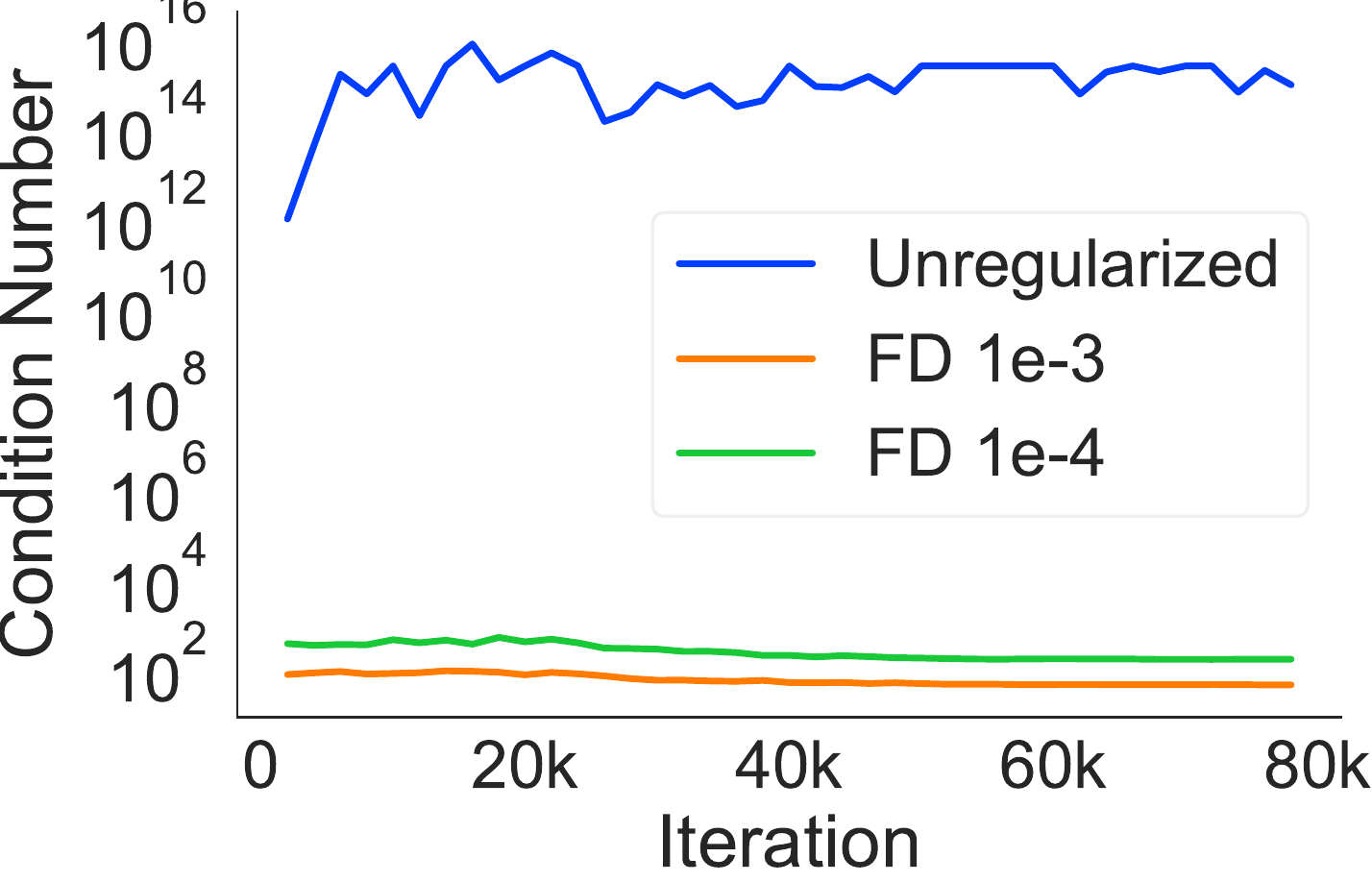}
\end{subfigure}
\quad \quad
\begin{subfigure}[t]{0.33\linewidth}
	\centering
    \includegraphics[width=\linewidth]{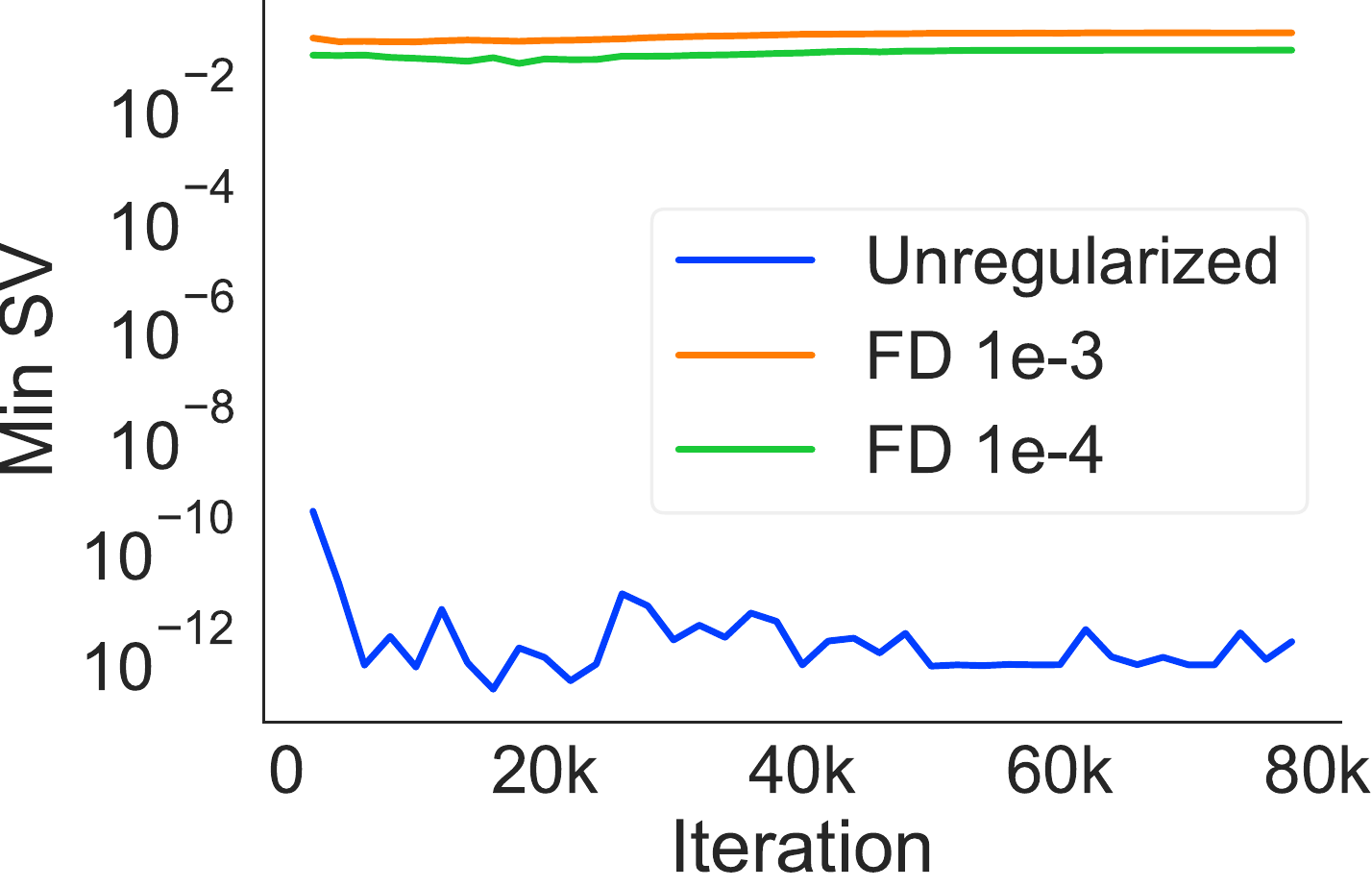}
\end{subfigure}
 \\
\begin{subfigure}[t]{0.33\linewidth}
	\centering
    \includegraphics[width=\linewidth]{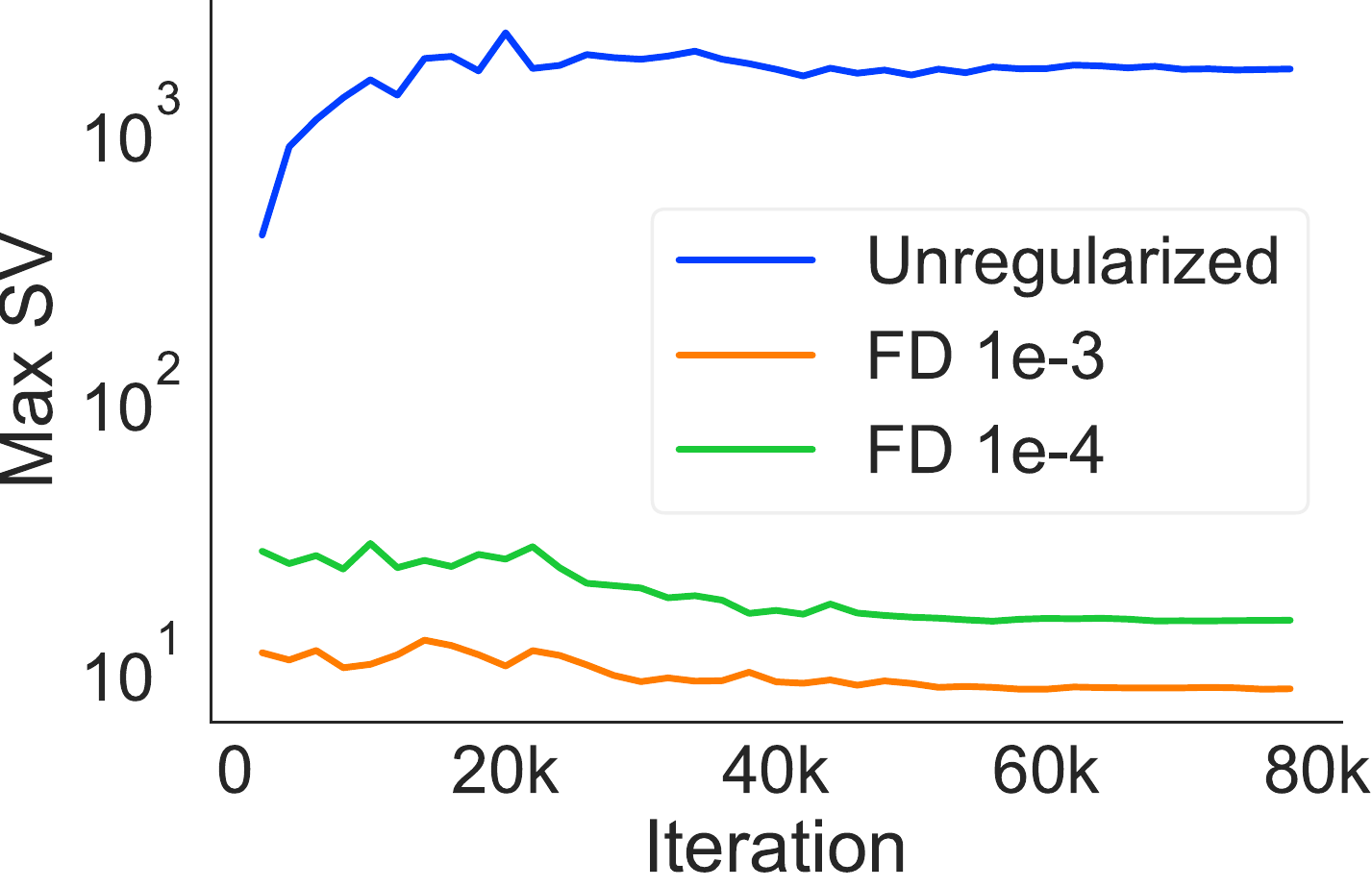}
\end{subfigure}
\quad \quad
\begin{subfigure}[t]{0.33\linewidth}
	\centering
    \includegraphics[width=\linewidth]{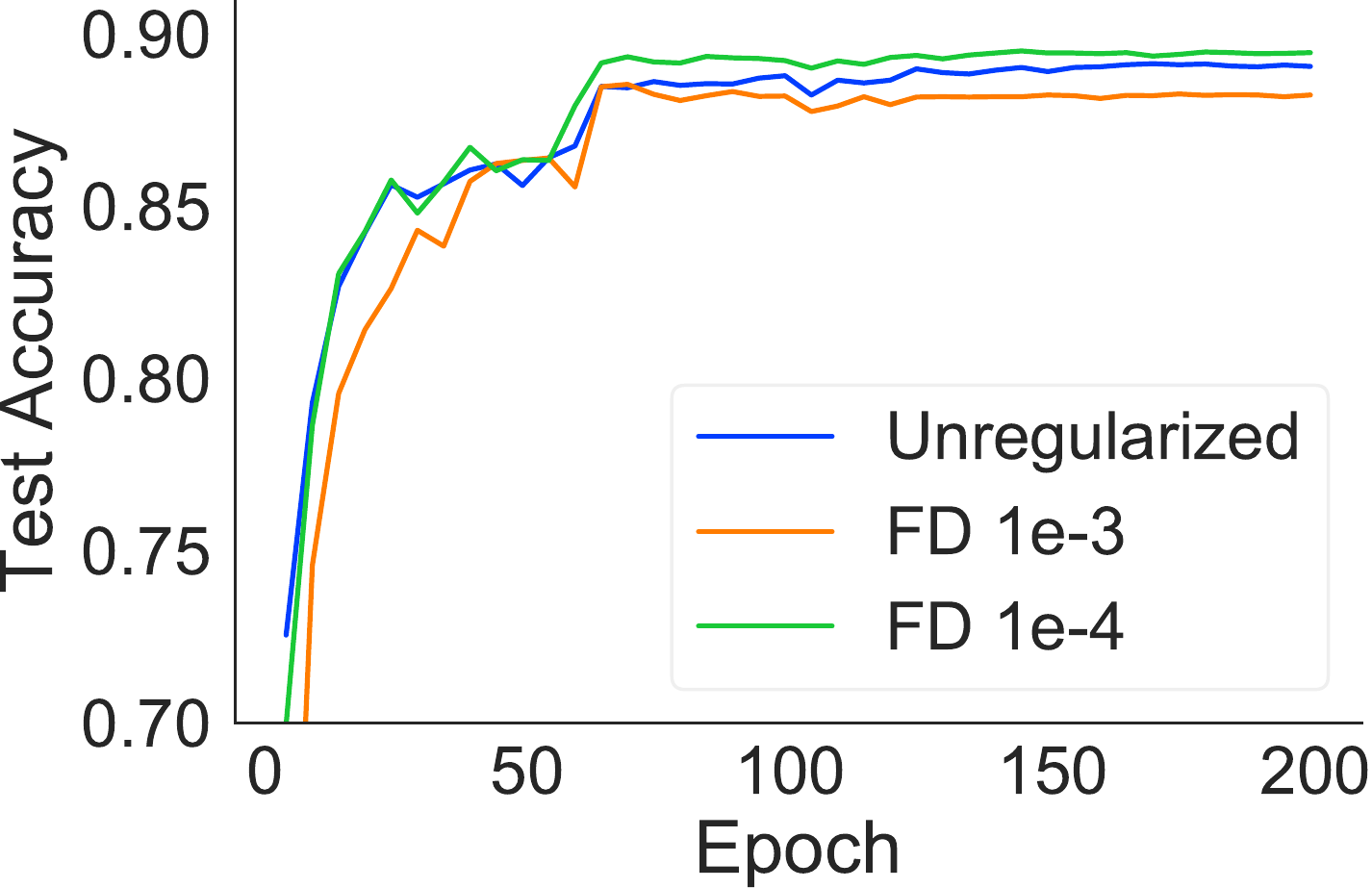}
\end{subfigure}
\caption{Comparison of regularization strengths for the \textbf{bi-directional finite differences (FD) regularizer} during training of a Glow-like architecture with affine coupling and $1 \times 1$ convolutions on CIFAR-10. \textbf{Top-Left:} condition numbers for the un-regularized and regularized models; \textbf{Top-Right/Bottom-Left:} min/max singular values of the Jacobian; \textbf{Bottom-Right:} test accuracies.
}
\label{fig:cls-jr-comparison}
\end{figure*}

\paragraph{Regularization Hyperparameters.}
We performed grid searches to find suitable coefficients for the regularization schemes we propose.
In particular, we searched over coefficients $\{$ 1e-3, 1e-4, 1e-5 $\}$ for the normalizing flow regularizer, and $\{$ 1e-3, 1e-4, 5e-5 $\}$ for bi-directional finite-differences (FD) regularization.
The effects of different regularization strengths are summarized in Table~\ref{table:regularization-comparison}.
We also plot the condition numbers, maximum/minimum singular values, and test accuracies while training with different coefficients for the normalizing flow objective (Figure~\ref{fig:cls-nf-comparison}) and bi-directional FD (Figure~\ref{fig:cls-jr-comparison}).
For both regularization methods, larger coefficients (e.g., stronger regularization) yield more stable models with smaller condition numbers, but too large a coefficient can harm performance (i.e., by hindering model flexibility).
For example, using a large coefficient (1e-3) for the normalizing flow objective substantially degrades test accuracy to 83.28\%, compared to 89.07\% for the un-regularized model.

\pagebreak

\paragraph{Extended Results.} We provide an extended version of Table~\ref{table:classification} (from the main paper) in Table \ref{table:classificationExtended}, where we also include additive and affine models with shuffle permutations.
Figures~\ref{fig:cls-additive-conv} and \ref{fig:cls-c-an1} visualize the stability during training using each regularizer, for additive and affine models, respectively.

\begin{table*}[htb]
\centering
\small
\begin{tabular}{cc|cccccc}
\toprule
  \textbf{Model} & \textbf{Regularizer} & \textbf{Inv?} & \textbf{Test Acc} & \textbf{Recons. Err.} & \textbf{Cond. Num.} & \textbf{Min SV} & \textbf{Max SV}  \\
  \midrule
    \multirow{3}{*}{Additive Shuffle}
     & None  & \cmark & 88.35 & 1.1e-4 & 2.1e+3 & 2.9e-2 & 6.0e+1 \\
     & FD    & \cmark & 88.49 & 5.4e-5 & 7.0e+2 & 5.3e-2 & 3.7e+1 \\
     & NF    & \cmark & 88.49 & 2.2e-5 & 1.1e+3 & 3.0e-2 & 3.3e+1  \\
  \hline
\multirow{3}{*}{Additive Conv}
     & None  & \cmark & 89.73 & 4.3e-2 & 7.2e+4 & 6.1e-2 & 4.4e+3 \\
     & FD    & \cmark & 89.71 & 1.1e-3 & 3.0e+2 & 8.7e-2 & 2.6e+1 \\
     & NF    & \cmark & 89.52 & 9.9e-4 & 1.7e+3 & 3.9e-2 & 6.6e+1 \\
 \hline
 \multirow{3}{*}{Affine Shuffle}
    & None   & \xmark & 87.64 & \texttt{NaN}    & 9.9e+13 & 1.7e-12 & 1.6e+2 \\
    & FD     & \cmark & 88.14 & 3.1e-5 & 1.2e+2  & 1.1e-1  & 1.3e+1  \\
    & NF     & \cmark & 88.31 & 2.8e-5 & 8.5e+2  & 3.7e-2  & 3.2e+1 \\
 \hline
 \multirow{3}{*}{Affine Conv}
     & None & \xmark & 89.07 & \texttt{Inf}    & 8.6e14 & 1.9e-12 & 1.7e+3 \\
     & FD & \cmark & 89.47 & 9.6e-4 & 1.6e+2 & 9.6e-2 & 1.5e+1  \\
     & NF & \cmark & 89.71 & 1.3e-3 & 2.2e+3 & 3.5e-2 & 7.7e+1 \\
\bottomrule
\end{tabular}

\caption{\textbf{Extended results: Effect of regularization when training several additive and affine INN architectures for CIFAR-10 classification.}
}
\label{table:classificationExtended}
\end{table*}

\begin{figure*}[htbp]
\centering
\begin{subfigure}[t]{0.29\linewidth}
	\centering
	\includegraphics[width=\linewidth]{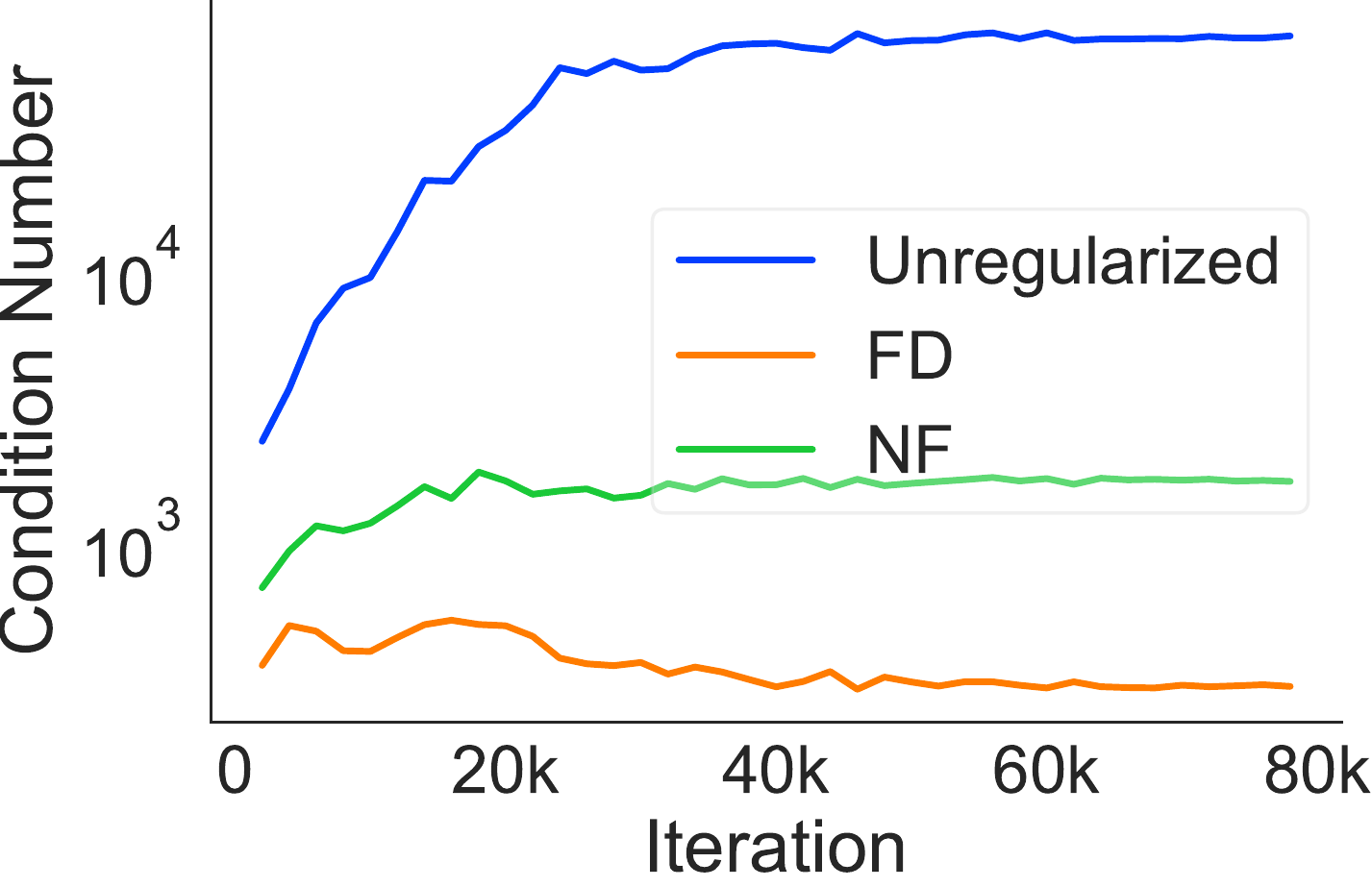}
\end{subfigure}
\hfill
\begin{subfigure}[t]{0.29\linewidth}
	\centering
	\includegraphics[width=\linewidth]{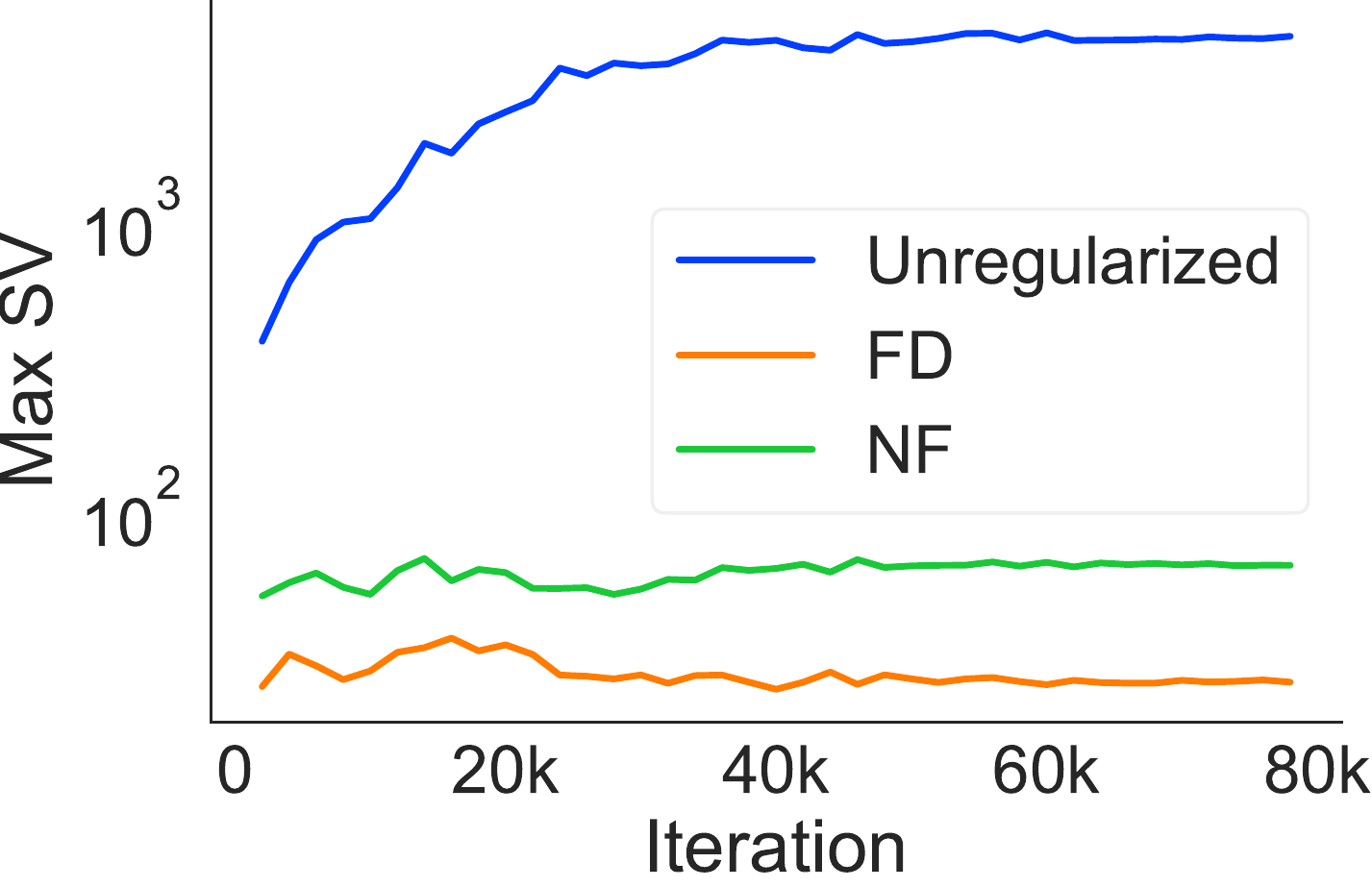}
\end{subfigure}
\hfill
\begin{subfigure}[t]{0.29\linewidth}
	\centering
	\includegraphics[width=\linewidth]{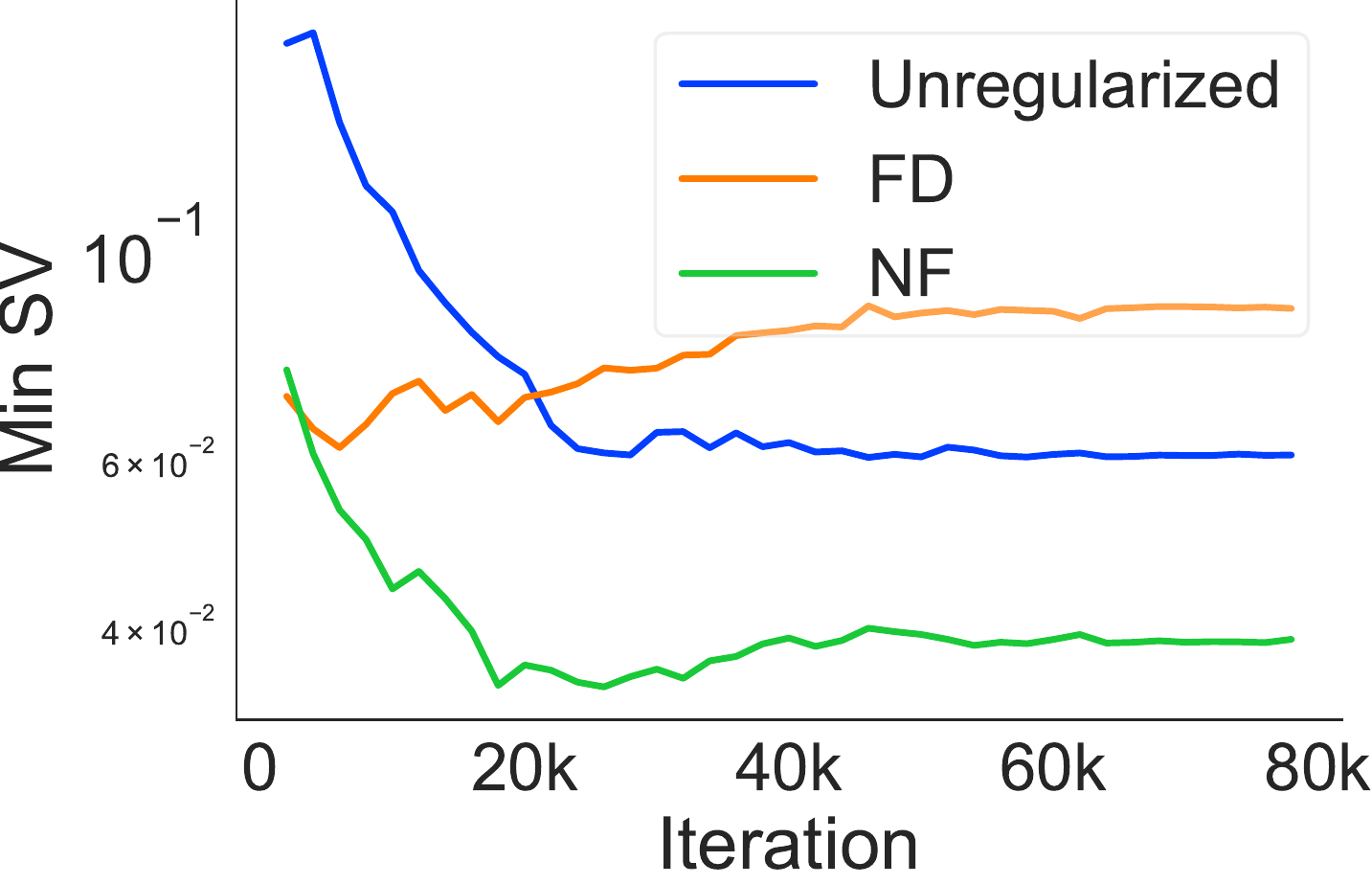}
\end{subfigure}
\caption{Stability during training of a Glow-like architecture with \textbf{additive coupling} and \textbf{$1 \times 1$ convolutions} on CIFAR-10. \textbf{Left:} condition numbers for the un-regularized and regularized models. \textbf{Middle/Right:} min/max singular values of the Jacobian. Note the large effect of both regularizes on the min singular value, indicating a more stable inverse mapping.
}
\label{fig:cls-additive-conv}
\end{figure*}

\begin{figure*}[htbp]
\centering
\begin{subfigure}[t]{0.29\linewidth}
	\centering
    \includegraphics[width=\linewidth]{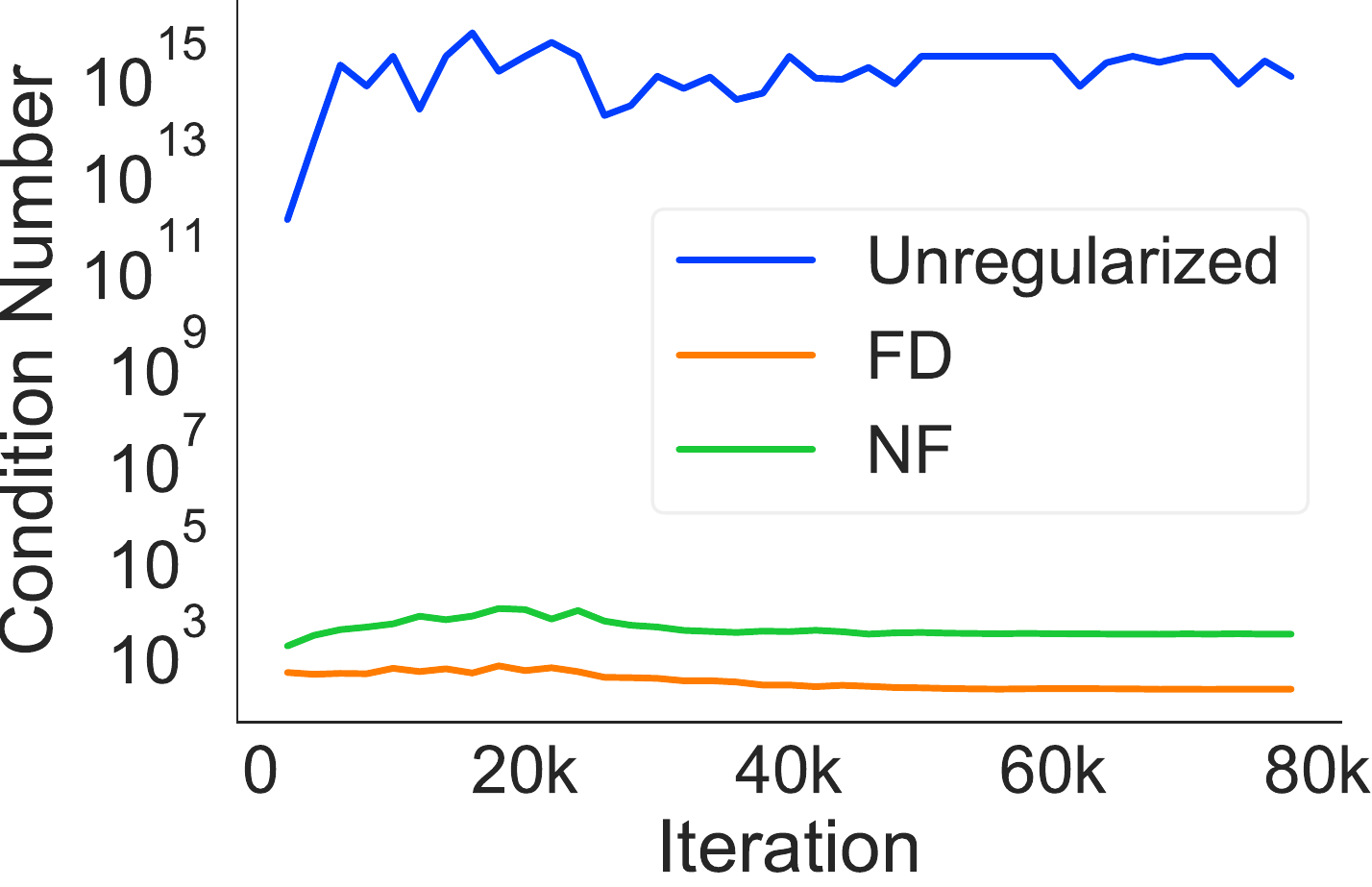}
\end{subfigure}
\hfill
\begin{subfigure}[t]{0.29\linewidth}
	\centering
	\includegraphics[width=\linewidth]{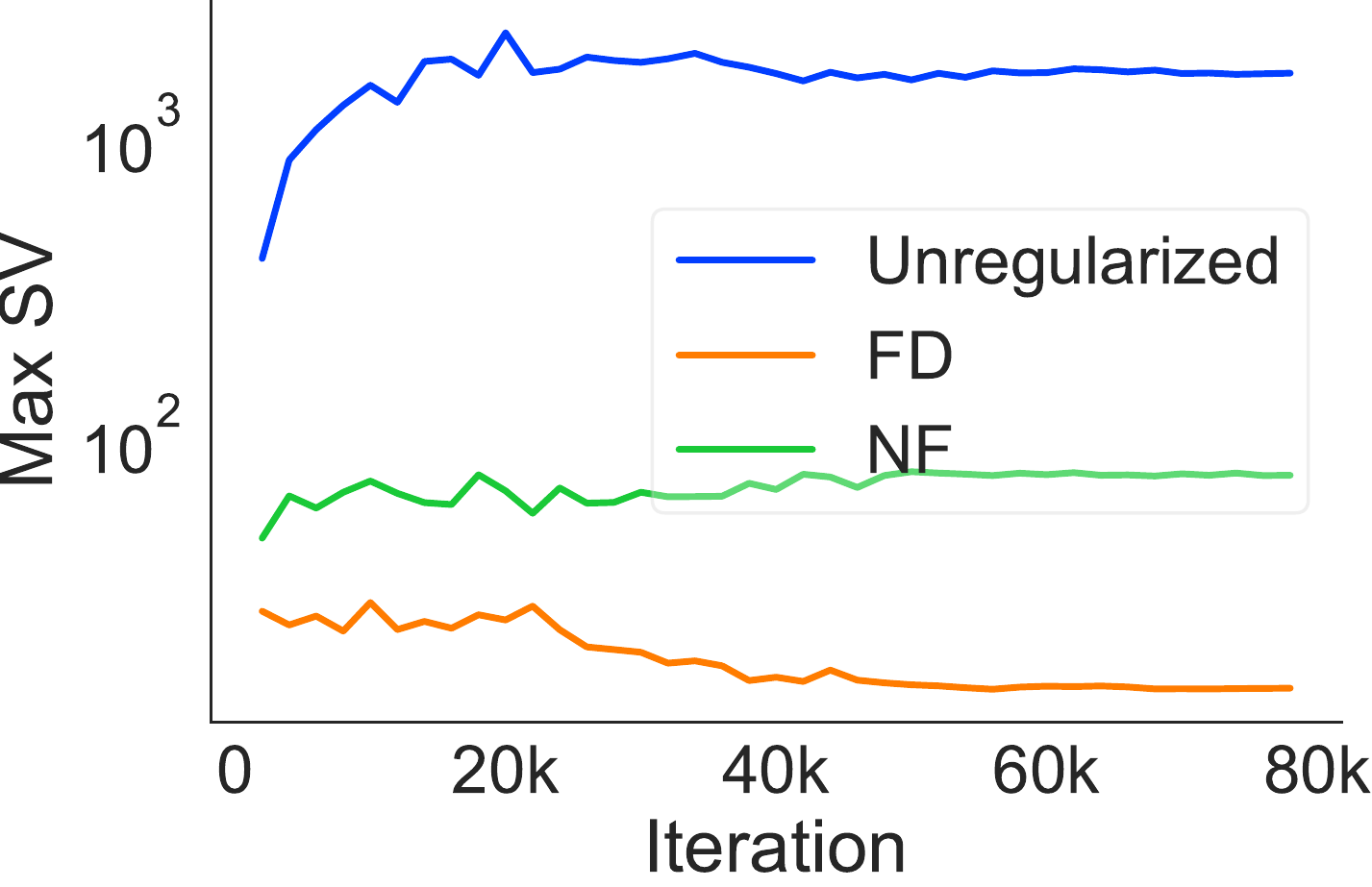}
\end{subfigure}
\hfill
\begin{subfigure}[t]{0.29\linewidth}
	\centering
	\includegraphics[width=\linewidth]{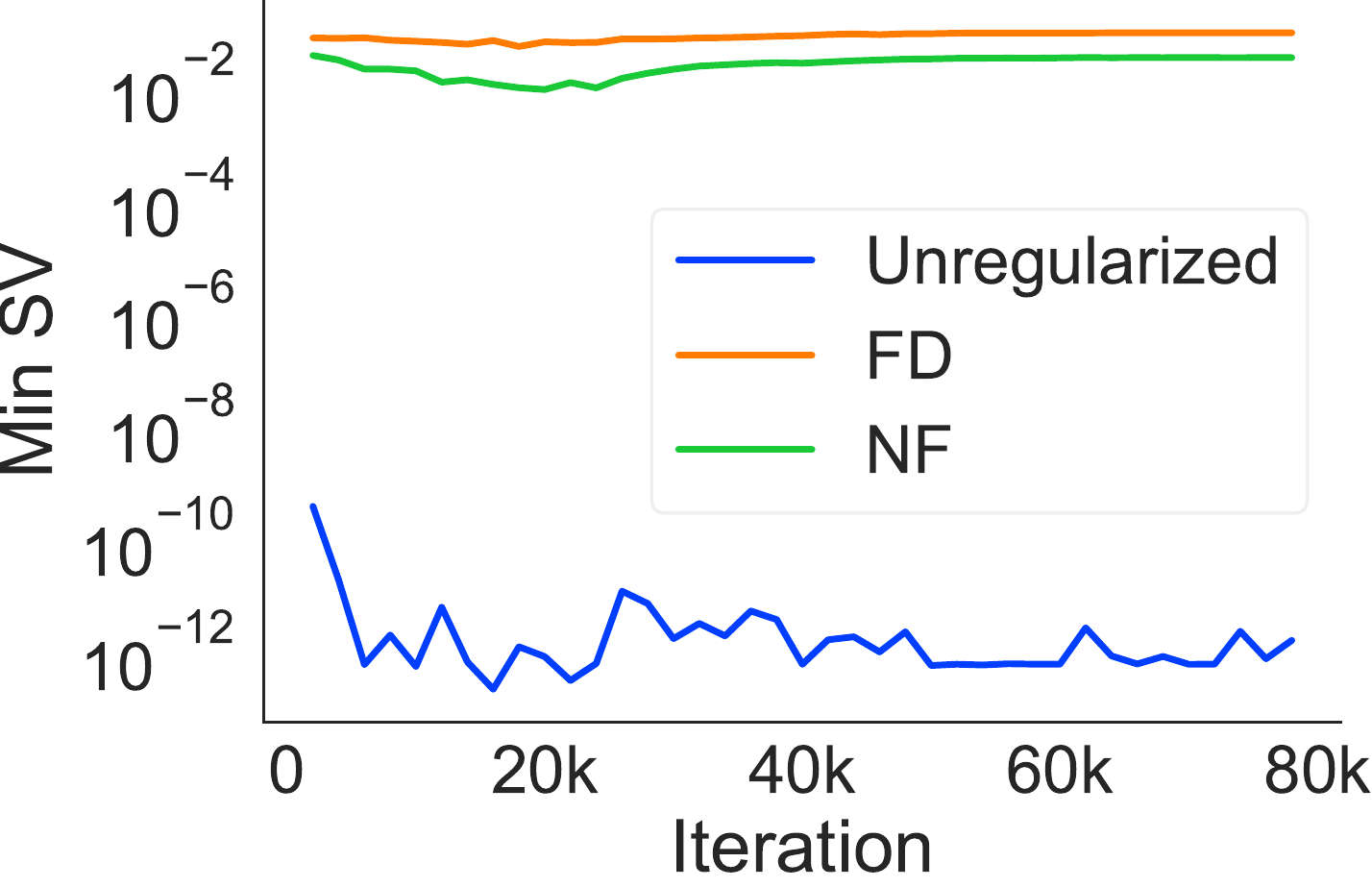}
\end{subfigure}
\caption{Stability during training of a Glow-like architecture with \textbf{affine coupling} and \textbf{$1 \times 1$ convolutions} on CIFAR-10. \textbf{Left:} condition numbers for the un-regularized and regularized models. \textbf{Middle/Right:} min/max singular values of the Jacobian.
}
\label{fig:cls-c-an1}
\end{figure*} 

\pagebreak

\paragraph{Finite Differences Regularization.}
As mentioned in Section~\ref{sec:controlStability}, for the bi-directional finite differences regularizer we used samples $v \sim \mathcal{N}(0,I)$. For the step size in Eq.~\ref{eq:finiteDiff}, we used $\epsilon = 0.1$ to avoid numerical errors due to \textit{catastrophic cancellation}. 

\paragraph{Computational Efficiency of Regularizers.}
Most invertible neural networks are designed to make it easy to compute the log determinant of the Jacobian, needed for the change-of-variables formula used to train flow-based generative models.
For coupling-based INNs, the log determinant is particularly cheap to compute, and can be done in the same forward pass used to map $x \mapsto z$.
Thus, adding a regularizer consisting of the weighted normalizing flow loss does not incur any computational overhead compared to standard training.

Bi-directional finite differences regularization is more expensive: memory-efficient gradient computation involves a forward pass, an inverse pass, and a backward pass.
The forward regularizer adds an additional overhead to these computations because it requires the previous computations not only for clean $x$, but also for noisy $x+\epsilon v$.
The inverse regularizer also passes two variables $z=F(x)$ and $\hat{z} = F(x) + \epsilon v^*$ through its computations, which are: an inverse pass to compute the reconstruction, a forward pass to re-compute activations of the inverse mapping and the backward pass through the inverse and forward.
However, in practice the cost can be substantially reduced by applying regularization only once per every $K$ iterations of training; we found that $K=5$ and $K=10$ performed similarly to $K=1$ in terms of their stabilizing effect, while requiring only a fraction more computation (e.g., 1.26$\times$ the computation when $K=10$) than standard training.
See the wall-clock time comparison in Table~\ref{tab:timeFD}.

We also experimented with applying the FD regularizer only to the inverse mapping, and while this can also stabilize the inverse and achieve similar accuracies, we found bi-directional FD to be more reliable across architectures.
Applying regularization once every 10 iterations, bi-FD is only 1.06$\times$ slower than inverse-FD.
As the FD regularizer is architecture agnostic and conceptually simple, we envision a wide-spread use for memory-efficient training.

\begin{table}
 \centering
 \begin{tabular}{ccc}
 \toprule
 \textbf{Method} & \textbf{Time per Epoch (s)} & \textbf{Overhead} \\ \midrule
 Unregularized   & 281.0 & 1$\times$        \\
 Reg. Every 1    & 854.1 & 3.04$\times$     \\
 Reg. Every 5    & 406.6 & 1.45$\times$     \\
 Reg. Every 10   & 354.6 & 1.26$\times$     \\
\bottomrule
\end{tabular}
\caption{\textbf{Timing comparison for applying bi-directional finite differences regularization at different frequencies during training (every 1, 5, or 10 iterations).}
Time is measured in seconds and corresponds to a single training epoch.
All rows use memory-saving gradient computation; we used additive models with $1 \times 1$ convolutions, as these are also trainable without regularization, allowing us to time unregularized memory-saving gradients.
}
 \label{tab:timeFD}
\end{table}

\paragraph{Comparing True and Memory-Saving Gradients.}

\begin{figure}[H]
	\centering
	\includegraphics[width=0.6\linewidth]{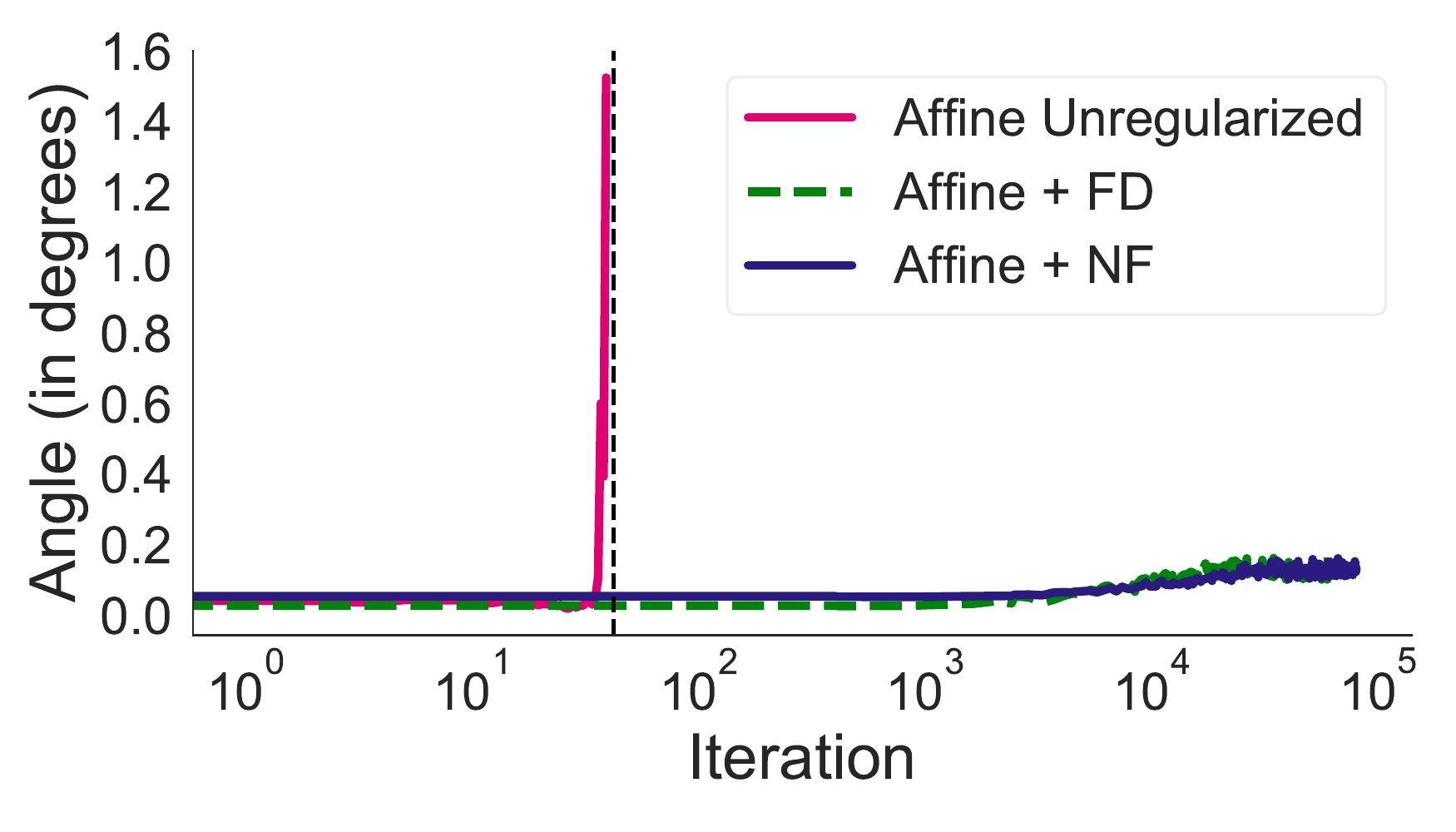}
\caption{\textbf{Comparison of the angle between memory-saving and true gradients during training of a Glow model with affine coupling and $1 \times 1$ convolutions on CIFAR-10.} While the unregularized affine model suffers from an exploding inverse (the angle rapidly increases at the start of training and becomes \texttt{NaN} after the dashed vertical line), adding either finite differences (FD) or normalizing flow (NF) regularization stabilizes the model and keeps the angles small throughout training.}
\label{fig:affine-angles-regularization}
\end{figure} 

For the gradient angle figure in the introduction (Section~\ref{sec:intro}), we used additive and affine Glow models with $1 \times 1$ convolutions.
To create that figure, we first trained both models with exact (non-memory-saving) gradients, and saved model checkpoints during training; then, we loaded each checkpoint and computed the angle between the true and memory-saving gradients for a fixed set of 40 training mini-batches and plotted the mean angle.
We did not train those models with memory-saving gradients, because this would have made it impossible to train the unstable affine model (due to the inaccurate gradient).

Figure~\ref{fig:affine-angles-regularization} shows a similar analysis, comparing the angles between true and memory-saving gradients for an affine model trained with and without regularization.
The regularized models were trained with memory-saving gradients, while the unregularized model was trained with standard backprop.
Similarly to the introduction figure, we saved model checkpoints throughout training, and subsequently loaded the checkpoints to measure the angle between the angles for the 40 training mini-batches, plotting the mean angle.
Applying either finite differences or normalizing flow regularization when training the affine model keeps the angles small, meaning that the memory-saving gradient is accurate, and allowing us to train with this gradient.

\section{OUTLOOK ON BI-DIRECTIONAL TRAINING WITH FLOW-GAN}
\label{app:flowgan}

\paragraph{Experimental Details.}
Table~\ref{tab:hyperp-flow-gan} summarizes the hyperparameters for all models trained in our Flow-GAN experiments.
The INN architecture in this experiment is similar to those described in Appendix~\ref{app:OODsampleExp}.  Table~\ref{tab:hyperp-flow-gan} lists hyperparameters changed for this experiment.

 \begin{table}[h]
 \centering
 \begin{tabular}{cc}
 \hline
  \textbf{Hyperparameter}       &  \textbf{Value}\\
  \hline
 Batch Size                    & 32   \\
 Learning Rate (Generator)     & 5e-5 \\
 Learning Rate (Discriminator) & 5e-5 \\
 Weight Decay (Both)           & 0    \\
 Optimizer                     & Adam($\beta_1=0.5$, $\beta_2=0.99$) \\
 \hline 
 \# of Layers & 3 \\
 \# of Blocks per Layer & 8 \\
 \# of Conv Layers per Block & 3 \\
 \# of Channels per Conv & 128 \\
 
 \hline
 \end{tabular}
 \caption{\textbf{Hyperparameters for ADV models}}
 \label{tab:hyperp-flow-gan}
 \end{table}

The prior is a standard normal.
One extra hyperparameter here is the weight of the MLE loss (which is considered as a regularizer here).  The discriminator had 7 convolution layers, each is regularized by Spectral normalization~\citep{miyato2018spectral} and using the LeakyReLU activation.  
The depth of the convolution layers are (64, 64, 128, 128, 256, 256, 512).  The discriminator is further regularized with gradient penalty~\citep{gulrajani2017improved} and optimized using the binary cross entropy GAN loss. 

\paragraph{Stability at Data.}
Similarly to the results in the classification Section~\ref{sec:classification}, when a flow is trained as a GAN, it suffers from non-invertibility even at training points.
This is easily explained as the INN generator is never explicitly trained at the data points, but only receives gradients from the discriminator.  

Using the MLE objective on the generator alleviates this issue.
It increases the stability at the data-points, and makes training more stable in general, as discussed in section \ref{sec:CoVstability}.
This is exactly the FlowGAN formulation proposed in~\citep{grover2018flow}.
See Table~\ref{table:flowgan-numbers} for a comparison of test-set likelihoods and sample quality.
In contrast to~\citep{grover2018flow}, we find additive Glow models trained with the GAN objective alone to be very unstable.
We used a weighing of 1e-3 for the MLE loss for the reported results here. 
One suspicion is that the introduced ActNorm in the Glow architecture is causing extra instability. 

Originally, one hypothesis was that the astronomically large BPD when trained as a GAN was caused by instability. 
This is likely incorrect.
Even for stable runs (with a smaller, more stable architecture) that had no reconstruction errors and small condition numbers, the model still assigns incredibly large BPDs.

Our FlowGAN models strike a good balance between good test-set likelihood and improving sample quality.
See Figure~\ref{fig:flowgan-cifar-samples} for samples.
The FlowGAN samples show better ``objectness'' and do not have the high-frequency artifacts shown in the MLE samples. 

Unfortunately, given the fact that the GAN-only objective assigns unexplained BPDs, it's still unclear whether interpreting the BPDs for the FlowGAN in the same way as MLE models is recommended.  

\begin{table}[H]
\setlength{\tabcolsep}{3pt}
\centering
    \begin{tabular}{l|c|c}
    \textbf{Objective} 
    & \textbf{Inception Score}
    & \textbf{BPD} \\
    \hline
    MLE & 2.92 & 3.54 \\
    GAN & 5.76 & 8.53 \\
    FlowGAN & 3.90 & 4.21 \\
    \hline
    \textbf{-} 
    & \textbf{FID}
    & \textbf{BPD} \\
    \hline
    MLE & 790 & 3.77 \\
    GAN* & 850 & {\color{red} \texttt{Inf}} \\
    FlowGAN & 420 & 4 \\
    \bottomrule
    
    \end{tabular}
\caption{\textbf{Comparing test-set likelihood and sample quality.} Top half of table copied from \citep{grover2018flow}.  *We find using only the GAN objective to be very unstable. Training can diverge after recording reasonable FID.  One definitive difference between our results and \citep{grover2018flow} is BPD using only the GAN objective. Our runs often get infinite BPDs in a few hundred iterations, whereas \citep{grover2018flow} reports a BPD of 8.53.  After inspecting their Table 2 together with Figure 2, the BPD and Inception Score seem to be reported for different times during training. 
}

\label{table:flowgan-numbers}
\end{table}

\begin{figure}[t]
\setlength{\tabcolsep}{3pt}
\centering
    \begin{tabular}{ccc}
    \textbf{MLE} 
    & \textbf{GAN}
    & \textbf{FlowGAN} \\
    \hline
    \includegraphics[trim={3cm 4cm 6cm 4.5cm},clip,width=0.25\textwidth]{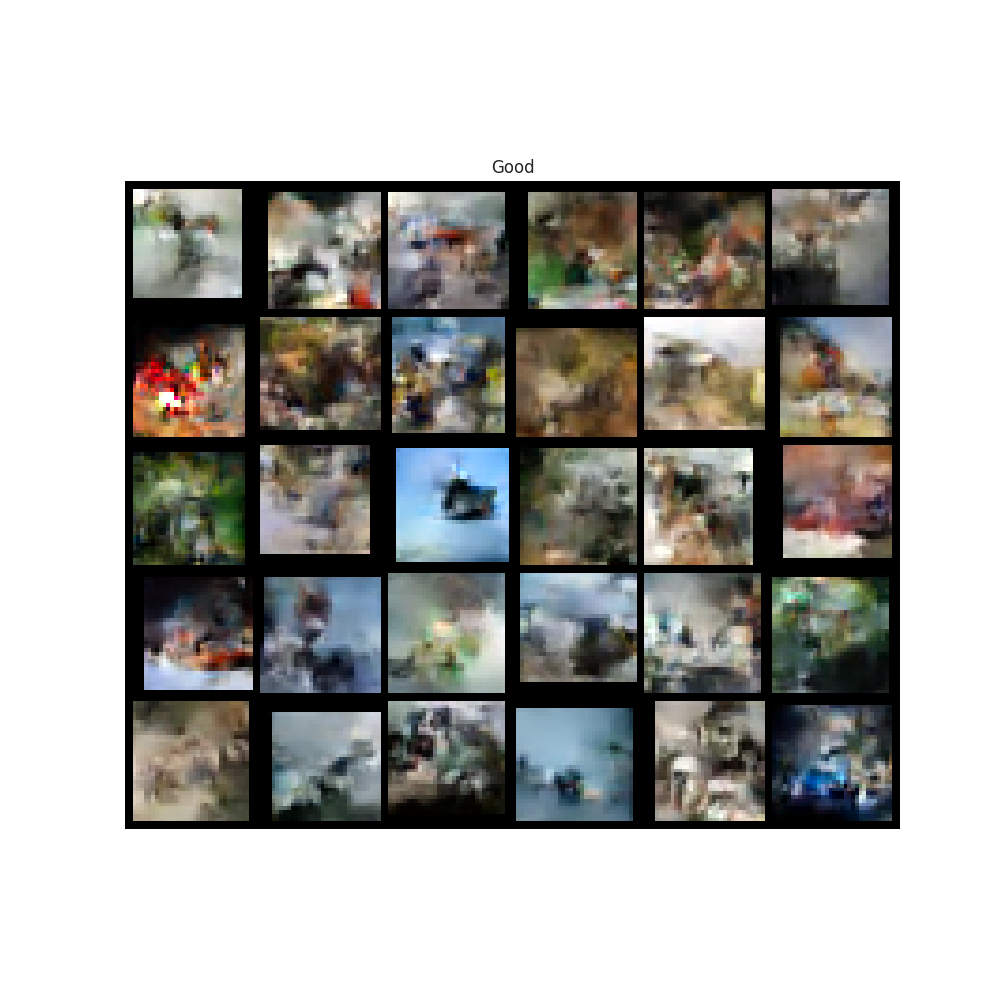} &
    \includegraphics[trim={3cm 4cm 6cm 4.5cm},clip,width=0.25\textwidth]{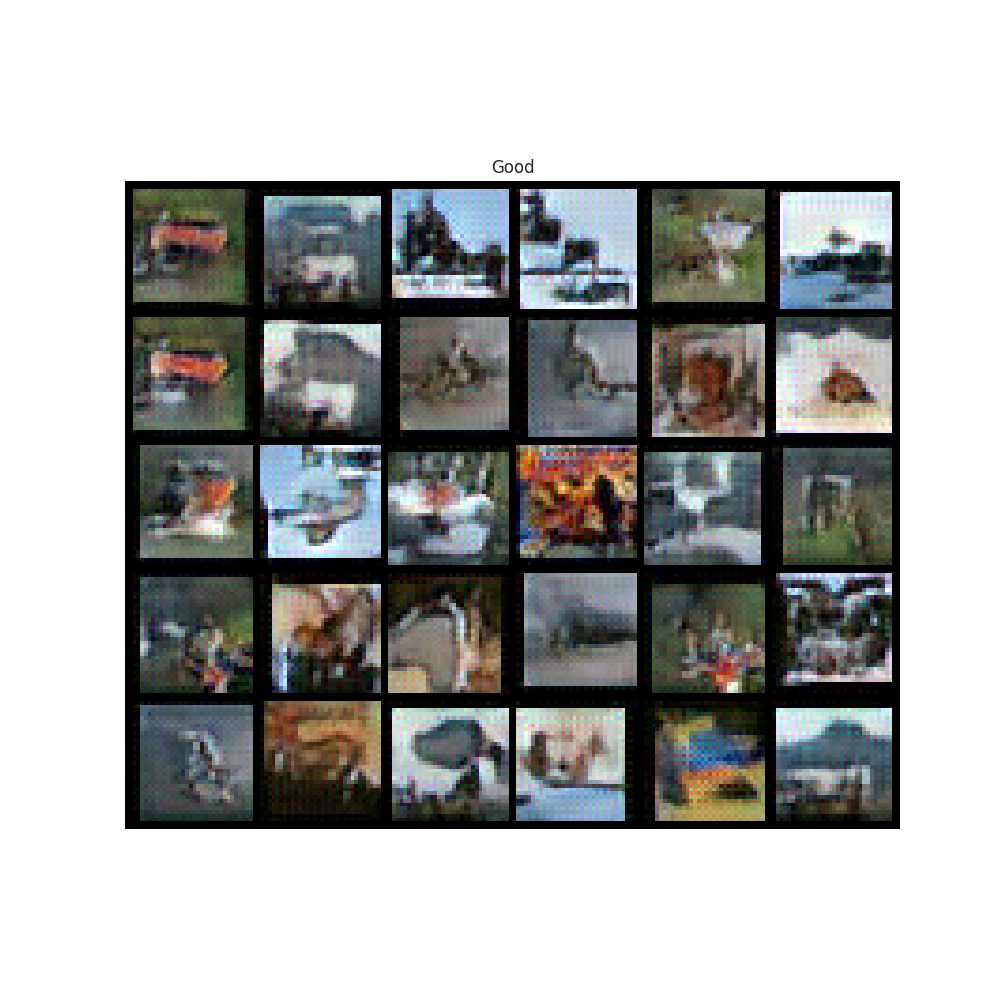} &    \includegraphics[trim={3cm 4cm 6cm 4.5cm},clip,width=0.25\textwidth]{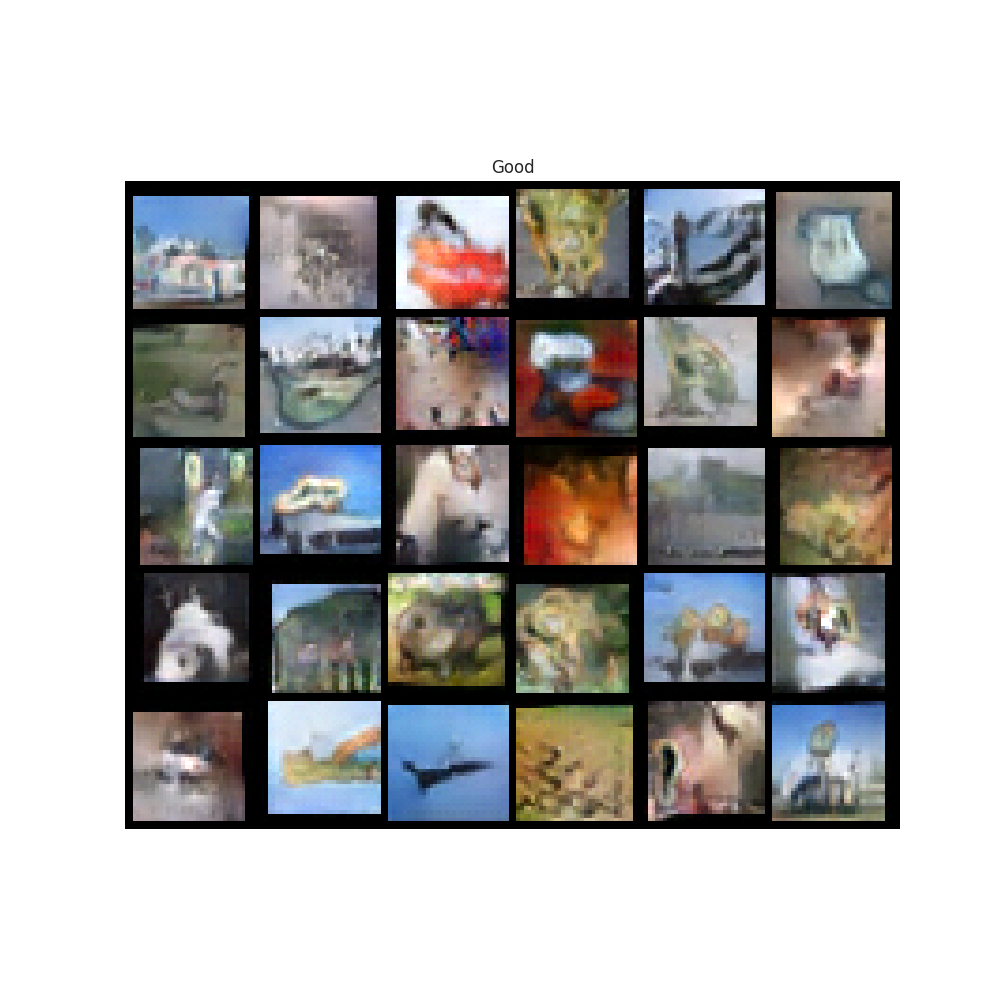}  \\
    \includegraphics[trim={.3cm 1.3cm .5cm .3cm},clip,width=0.25\textwidth]{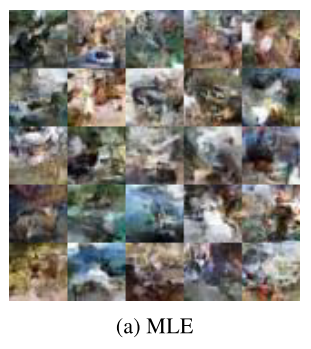} &
    \includegraphics[trim={.3cm 1.3cm .5cm .3cm},clip,width=0.25\textwidth]{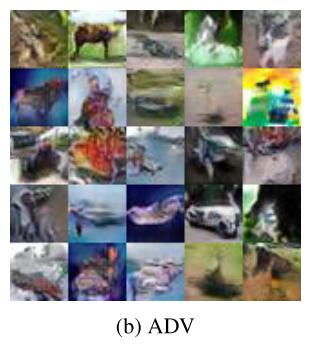} &    \includegraphics[trim={.3cm 1.3cm .5cm .3cm},clip,width=0.25\textwidth]{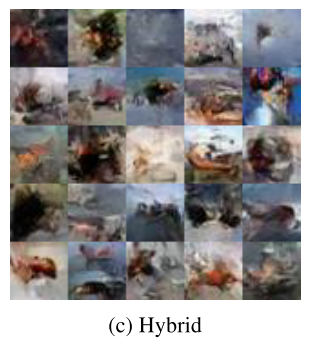}  \\
    \bottomrule
    \end{tabular}

\caption{\textbf{Comparing samples from our FlowGANs to figures copied from \citep{grover2018flow} (bottom row).} 
}
\label{fig:flowgan-cifar-samples}
\end{figure}

\end{document}